\documentclass{article}
\usepackage{graphicx}
\usepackage{amssymb}
\usepackage{amsthm}
\usepackage{color}
\usepackage{amsmath}
\usepackage{fullpage}
\usepackage{setspace}
\usepackage{charter}%I like this font...

\DeclareGraphicsRule{.tif}{png}{.png}{`convert #1 `dirname #1`/`basename #1 .tif`.png}
%hyperref
\usepackage{hyperref} 
\hypersetup{backref, colorlinks=true, citecolor=blue, linkcolor=blue}
\newcommand{\rref}[1]{\hyperref[#1]{\ref*{#1}}}

% NEW COMMANDS
\newcommand{\m}[1]{\mathrm{#1}}

\newtheorem{theorem}{Theorem}[section]
\newtheorem{lemma}[theorem]{Lemma}

\newtheorem{corollary}[theorem]{Corollary}

\theoremstyle{definition}
\newtheorem{definition}[theorem]{Definition}

\newcommand{\ale}[1]{\textcolor{red}{{\bf ALE $\star$ #1 $\star$}}}

%\usepackage{epstopdf}

%\parindent=0pt

%       Math definitions

%% Small o %%

%% Big O %%

%\usepackage{pdfdraftcopy} 
%\draftstring{DRAFT}

% bibliography style
\usepackage[round, authoryear]{natbib}%[square]
\bibliographystyle{chicago}

\title{On the Geometry of Discrete Exponential Families %with polyhedral support 
with Application to Exponential Random Graph  Models
\author{Stephen E. Fienberg\thanks{Email: {\tt fienberg@stat.cmu.edu}}\\
Department of Statistics, Machine\\ Learning Department and Cylab\\
Carnegie Mellon University\\
Pittsburgh, PA 15213-3890 USA
\and
Alessandro Rinaldo\thanks{Email: {\tt arinaldo@stat.cmu.edu}}\\ 
Department of Statistics\\
Carnegie Mellon University\\
Pittsburgh, PA 15213-3890 US
\and
Yi Zhou\thanks{Email: {\tt yizhou@stat.cmu.edu}}\\ 
Machine Learning Department\\
Carnegie Mellon University\\
Pittsburgh, PA 15213-3890 USA}
\date{}
}

\begin{document}
\maketitle

\begin{abstract}
There has been an explosion of interest in statistical models for analyzing network data, and considerable interest in the class of exponential random graph (ERG) models, especially in connection with difficulties in computing maximum likelihood estimates.  The issues associated with these difficulties relate to the broader structure of discrete exponential families.  This paper re-examines the issues in two parts.   First we consider the closure of $k$-dimensional exponential families of distribution with discrete base measure and polyhedral convex support $\mathrm{P}$. We show that the normal fan of $\mathrm{P}$ is a geometric object that plays a fundamental role in deriving the  statistical and geometric properties of the corresponding extended exponential families. We discuss its relevance to maximum likelihood estimation, both from a theoretical and computational standpoint.
Second, we apply our results to the analysis of ERG models. In particular, by means of a detailed example, we provide some characterization of the properties of ERG models, and, in particular, of  certain behaviors of ERG models known as degeneracy. 
\end{abstract}

\section{Introduction}

Our motivation for the work described in this paper comes from the analysis of network data using models representable by graphs, where the nodes correspond to individuals and the edges to relations or  linkages among them.  
Such graphical representation has a long history, dating back to \cite{moreno:1934}, and was recast within the exponential family framework by \cite{HL:81} and \cite{FS:86} \citep[see also][]{SI:90}.
%Such a graphical representation has its origins in work by~\cite{moreno:1934} in the 1930s and a class of statistical models now widely used for the analysis of network data are an outgrowth of work in the 1970s, especially the $p_1$ model proposed by~\cite{HL:81}, that made explicit use of exponential family structure. 
Their work led to the development of the broader class of exponential random graph (ERG), or $p^*$, models for social networks \citep[see, e.g.][]{WP:96}, but likelihood methods for their analysis remained out of reach until earlier this decade. For a broad review of these and other network models, see~\cite{GZFA:2009}.  Recent work on maximum likelihood estimation for ERG models, however,  has pointed to difficulties that have been characterized as ``degeneracies" or ``near degeneracies" by \cite{H:03} and \cite{HGH:08}.  The explanation for these difficulties lies within broader characterizations of ``degeneracies" for discrete exponential families.

Exponential families are one of the most important and widespread class of parametric statistical models, whose remarkable properties have long been established in the statistical literature \citep[see, e.g.,][]{BRN:78,BRW:86,LETAC:92}. Among the most interesting features of exponential families is the notion of the closure  of the family, known as the extended exponential family, whose mathematical theory has been recently worked out in great generality \citep[see][]{CS:01,CS:03,CS:05,CS:08}. 
 The study of the extended families  is particularly important, as it may directly pertain to the existence of the maximum likelihood estimates and to the estimability of the natural parameters. 
This is the case for discrete exponential families, for which the maximum likelihood estimates may not exist with some positive probability. A notable instance is the class of log-linear models, for which existence of the MLE and closure of the family can be characterized in a purely geometric fashion \citep[see, e.g.,][]{ERIKSSON:06,GMS:06,ALE:06a}.
%\ale{Motivation. Summary of the paper.}

In this article we are concerned with discrete linear exponential families. In the first part of the paper, we show that the geometric and statistical properties of the extended family depend in a fundamental way on the normal fan of the convex support. In particular, the normal fan can be used to characterize non-identifiability of the families in the closure, to represent the densities in the extended family as almost sure limits of the densities in the original family along certain directions of the parameter space and to describe the directions of recession of the (negative) log-likelihood function. 

As an application of our results, in the second part of the paper we turn our attention to exponential random graph models, a particular class of discrete linear exponential families. %These models describe probability distributions over the set of all graphs with a fixed number of nodes and are extensively used for modeling social networks: see, for example, \cite{HL:81}, \cite{FS:86}, \cite{WP:96} and, for recent reviews of ERG models with applications to social networks, \citet{WR:04}, \citet{R:07}, \citet{RA:07} and references therein.
Our discussion is based an the detailed analysis of the ERG model on a the graphs on 9 nodes with two-dimensional sufficient statistics consisting of the number of edges and the number of triangles. We use Shannon's entropy function to illustrate graphically how concentrated the distributions in this family are, viewed as functions of both the natural and mean value parameters. Besides illustrating the theoretical results derived in the first part of the article, our analysis sheds  light on a variety of pathological behaviors observed in practice while fitting ERG models known as degeneracy \citep[see, e.g.,][]{H:03}, and, more generally, on the qualities and attributes of ERG models. Our analyses indicate that  perhaps network analysts and methodologists attribute to ERG models a degree of regularity that they may not possess. 

The remainder of this article is organized as follows. In Section \ref{sec:polysupp} we provide the derivation of our ket theoretical results. In Section \ref{sec:setting}, we begin by describing our settings and briefly review the theory of extended exponential families and their fundamental properties.  Then Section \ref{sec:extfan}, we introduce the notions of normal cones and the normal fan to the convex support of the family. In Section \ref{sec:mainresult} we state our main result and a discussion of its corollaries, while Section \ref{sec:comput} presents come computational considerations concerning maximum likelihood estimation for extended exponential families. Section \ref{sec:ERG} consists of an application of our results to ERG models. First  in Section \ref{sec:ERGintro} 
we introduce the class of ERG models and  then in Section \ref{sec:one} 
 we present our running example of an ERG model on the set of all graphs on 9 nodes. We next introduce the concept of degeneracy for ERG models  in Section \ref{sec:degeneracy}, while in Section \ref{sec:entropy} we use our theoretical results  to illustrate graphically the features of the model in the running example of Section \ref{sec:one} to show how degeneracy arises. 
The appendices contains the proofs and some additional result on how to establish existence of the maximum likelihood estimates in discrete linear exponential families using linear programming.

We end this section by establishing the notation we will be using throughout. For two vectors $x$ and $y$ in $\mathbb{R}^d$, $\langle x, y \rangle = \sum_{i=1}^d x_i y_i$ denotes their inner product. The Eucludean norm of a vector $x$ is $\| x \|_2 = \sqrt{\langle x, y \rangle}$. If $A$ is a  subset of $\mathbb{R}^d$, we indicate with $\m{convhull}(A)$ its convex hull and with $\m{cone}(A)$ the set of all of its conic combinations. Finally, for any $A \subset \mathbb{R}^d$, possibly of dimension less than $d$, its relative interior $\m{ri}(A)$ is its interior relative to $\m{convhull}(A)$.

\section{Extended Exponential Families with Polyhedral Support}\label{sec:polysupp}

\subsection{Settings}\label{sec:setting}

In this section we introduce the statistical and geometric background needed for our results. 
%Although we briefly review the notions of extended exponential families of distributions and of the extended MLE,
We will assume throughout some familiarity with the general theory of exponential families and the basics of polyhedral geometry. For more complete treatments, consult \cite{BRN:78}, \cite{BRW:86}, \cite{CS:01,CS:03,CS:05,CS:08} and \cite{ALE:06a} for material on exponential families,  and  \cite{ZIE:95} and \cite{SCH:98} for introductions to polyhedral geometry.

We consider an exponential family of distributions $\mathcal{E}_{\mathrm{P}}$  on $\mathbb{R}^k$ with densities
\[
p_{\theta}(x) = \exp \left\{ \langle x, \theta \rangle - \psi(\theta) \right\}, \quad \theta \in \Theta,
\]
with respect to some base measure $\nu$, where
\[
\Theta \subseteq \{  \theta \in \mathbb{R}^k \colon \int_{\mathbb{R}^k} \exp^{\langle x, \theta \rangle } d \nu(x) < \infty\}
\]
is the natural parameter space and $\psi(\theta) = \log \int_{\mathbb{R}^k} \exp^{\langle x, \theta \rangle } d \nu(x)$ the log-partition function. 
The support of $\mathcal{E}_{\m{P}}$ is the closure of the set $\{ x \colon \nu(x) > 0\}$, while the \textit{convex support} $\mathrm{P}$ is the closure of the convex hull of the support of $\mathcal{E}_{\mathrm{P}}$. 
We will assume throughout the paper that 
\begin{itemize}
\item[(A1)] $\nu$ has countable support;
\item[(A2)] $\mathrm{P}$ is a full-dimensional polyhedron in $\mathbb{R}^k$, that is, $\m{P}$ does not belong to any proper  affine subspace of $\mathbb{R}^k$;
\item[(A3)] for each face $F$ of $\m{P}$, $F = \m{convhull}(S_F)$, for some set $S_F \subset \m{supp}(\nu)$;
\item[(A4)] the natural parameter space $\Theta$ is an open set.
\end{itemize}  
Assumptions (A1) and (A2) imply, in particular, that the family is in minimal form and, therefore, identifiable. 
We remark that assumption (A2) is not necessary and is imposed to simplify the exposition; our results would still hold  with some minor changes without assumption (A2), and the cost of additional technicalities in the proofs. In fact, any degenerate exponential family can be made full by taking appropriate affine transformations, a procedure known as reduction to minimality (see, e.g., Theorem 1.9 in \citealp{BRW:86} or Lemma 8.1 in  \citealp{BRN:78}).
Assumption (A3) is needed to guarantee the existence of probability distributions supported over the  boundary of $\m{P}$, which is an indispensable feature of the extended exponential family, described in the next section. It could be easily relaxed by allowing some faces to have zero $\nu$ measure. %, a  webut for ease of readability, we do not pursue this generalization in the paper. 
Finally, assumption (A4) is a standard. In particular, for our discussion of ERG models,  $\Theta = \mathbb{R}^k$.

%\ale{Mention $\mathcal{V}$ representation and the representation $\m{P} = \m{C} + \m{Q}$, where $\m{C}$ is a polyhedral cone and $\m{Q}$ a polytope.}

\subsubsection{Basics of Extended Exponential Families}
Letting $X = x$ be the observed sample from an unknown distribution in $\mathcal{E}_{\m{P}}$, the random set
\begin{equation}\label{eq:mle}
\widehat{\theta}(x) = \widehat{\theta} = \left\{  \theta^* \in \Theta \colon  p_{\theta^*}(x) = \sup_{\theta \in \Theta} p_{\theta}(x) \right\}
\end{equation}
is the \textit{maximum likelihood estimate}, or MLE, of $\theta$. If $\widehat{\theta} = \emptyset$ the MLE is said to be nonexistent. 
Existence of the MLE is determined by the geometry of $\m{P}$, as indicated by the following well-known, fundamental result (see, e.g., Theorem 5.5 in \citealp{BRW:86} or Proposition 4.2 \citealp{ALE:06a} for different proofs).

\noindent \begin{theorem}\label{thm:mle}
Under the current settings, the MLE $\widehat{\theta}$ exists and is unique if and only if $x \in \mathrm{relint}(\mathrm{P})$.
\end{theorem}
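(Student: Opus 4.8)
The plan is to establish the two directions of the equivalence separately, exploiting the standard structure of steep, minimal exponential families under assumptions (A1)--(A4). Write $\ell_x(\theta) = \langle x, \theta\rangle - \psi(\theta)$ for the log-likelihood, so that maximizing $p_\theta(x)$ over $\Theta$ is the same as maximizing the concave function $\ell_x$; concavity follows from the convexity of $\psi$ (itself a consequence of H\"older's inequality), and strict concavity holds because the family is minimal by (A1)--(A2), so $\psi$ has positive-definite Hessian on the open set $\Theta$ (A4). Strict concavity already gives uniqueness of the maximizer whenever one exists, so the substantive content is the \emph{existence} characterization. Recall the two basic facts I will lean on: $\nabla\psi(\theta) = \mathbb{E}_\theta[X]$, and the image $\nabla\psi(\Theta)$ equals $\mathrm{relint}(\mathrm{P})$ (this is the standard ``mean-value parametrization'' fact for minimal families, e.g.\ Theorem 3.6 in \citealp{BRW:86}). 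I would also use that $\ell_x$ is steep in the sense that its gradient has no finite limit as $\theta$ approaches the boundary of $\Theta$ in any relevant direction.

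For the ``if'' direction, suppose $x \in \mathrm{relint}(\mathrm{P})$. Then by the surjectivity of $\nabla\psi$ onto $\mathrm{relint}(\mathrm{P})$ there is $\theta^* \in \Theta$ with $\nabla\psi(\theta^*) = x$, i.e.\ $\nabla\ell_x(\theta^*) = x - \nabla\psi(\theta^*) = 0$; since $\ell_x$ is concave, $\theta^*$ is a global maximizer, and by strict concavity it is the unique one. Hence $\widehat\theta = \{\theta^*\} \neq \emptyset$.

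For the ``only if'' direction, I would prove the contrapositive: if $x \notin \mathrm{relint}(\mathrm{P})$, then $\sup_{\theta}\ell_x(\theta)$ is not attained. The clean argument is to show $\ell_x$ has a direction of recession, i.e.\ a unit vector $u$ with $\ell_x(\theta_0 + t u)$ nondecreasing (in fact eventually strictly increasing) in $t$ for some/any $\theta_0$. Concretely, since $x \notin \mathrm{relint}(\mathrm{P})$ and $\mathrm{P}$ is a full-dimensional polyhedron, by the separating hyperplane theorem there is a nonzero $u \in \mathbb{R}^k$ with $\langle x, u\rangle \ge \langle z, u\rangle$ for all $z \in \mathrm{P}$, and with strict inequality for at least one point of $\mathrm{supp}(\nu)$ (using (A3) to ensure the supremum of $\langle \cdot, u\rangle$ over $\mathrm{P}$ is realized on a proper face carrying support mass, so $x$ can be taken on the ``outer'' side). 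Then one computes the directional derivative
\[
\frac{d}{dt}\,\ell_x(\theta + t u) = \langle x, u\rangle - \mathbb{E}_{\theta + t u}[\langle X, u\rangle] = \langle x, u\rangle - \nabla\psi(\theta + tu)^\top u \ge 0,
\]
because $\nabla\psi(\theta+tu) \in \mathrm{relint}(\mathrm{P})$ and $\langle x, u\rangle$ dominates $\langle z, u\rangle$ on $\mathrm{P}$; moreover the inequality is strict (the distribution $p_{\theta+tu}$ assigns positive mass to points $z$ with $\langle z, u\rangle < \langle x, u\rangle$), so $\ell_x$ is strictly increasing along $u$ and the supremum cannot be attained. (If $\Theta \neq \mathbb{R}^k$ one must also check $\theta + tu \in \Theta$ for all $t \ge 0$: the set $\{\theta : \psi(\theta) < \infty\}$ is convex, and along a recession direction of the log-likelihood one can argue the ray stays in $\Theta$; in the ERG application $\Theta = \mathbb{R}^k$, so this subtlety disappears.)

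The main obstacle is the ``only if'' direction, and specifically the interaction between the recession direction $u$ of the convex support and the domain $\Theta$: one needs that moving along $u$ both keeps $\theta$ in $\Theta$ and makes $\langle x, u\rangle - \nabla\psi(\theta + tu)^\top u$ stay strictly positive rather than decaying to $0$. The first point is handled by steepness/convexity of $\psi$ (or trivially when $\Theta = \mathbb{R}^k$); the second is where assumption (A3) does real work, since it guarantees that the face of $\mathrm{P}$ maximizing $\langle\cdot,u\rangle$ is the convex hull of genuine support points, so $\lim_{t\to\infty}\nabla\psi(\theta+tu)$ lies in that face and its inner product with $u$ is strictly below $\langle x,u\rangle$ whenever $x\notin\mathrm{relint}(\mathrm{P})$. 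Everything else is routine convex analysis.
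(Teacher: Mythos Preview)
The paper does not supply its own proof of this theorem; it simply records the result as well known and points to Theorem~5.5 in \cite{BRW:86} and Proposition~4.2 in \cite{ALE:06a}. Your outline is the standard convex-analytic argument underlying those references, and the core of it is correct: strict concavity of $\ell_x$ from minimality gives uniqueness, surjectivity of $\nabla\psi$ onto $\mathrm{relint}(\mathrm{P})$ gives existence when $x\in\mathrm{relint}(\mathrm{P})$, and a supporting-hyperplane direction $u$ gives a direction of strict increase of $\ell_x$ when $x$ lies on the boundary.

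Two small corrections to your discussion of obstacles. First, your ``only if'' argument is cleaner than you make it sound: once $x$ lies in a proper face $F$ and $u$ is an outer normal to $F$, the strict inequality $\langle x,u\rangle-\mathbb{E}_{\theta+tu}[\langle X,u\rangle]>0$ follows immediately from $\langle X,u\rangle\le\langle x,u\rangle$ $\nu$-a.s.\ together with $P_{\theta+tu}\{\langle X,u\rangle<\langle x,u\rangle\}>0$, the latter because $\nu$ (hence every $P_\theta$) puts mass outside $F$ by full-dimensionality. You do not need (A3), and you do not need to reason about $\lim_{t\to\infty}\nabla\psi(\theta+tu)$; indeed your closing sentence about that limit is mis-stated, since the limiting mean lies \emph{in} $F$ and therefore has inner product with $u$ \emph{equal} to $\langle x,u\rangle$, not strictly below. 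Second, the domain issue is dispatched directly: for $t\ge0$,
\[
\int e^{\langle z,\theta+tu\rangle}\,d\nu(z)\le e^{t\langle x,u\rangle}\int e^{\langle z,\theta\rangle}\,d\nu(z)<\infty,
\]
so the ray $\{\theta+tu:t\ge0\}$ stays in $\Theta$ without any appeal to steepness.
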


Furthermore, setting $\mathbb{E}_\theta(X) = \int_{\mathbb{R}^k} z p_{\theta}(z) d \nu(z)$, because of the minimality of $\mathcal{E}_{\mathcal{P}}$, the \textit{mean value parametrization} map
\[
\nabla \psi \colon \m{int}(\Theta) \mapsto \mathrm{relint}(\mathrm{P})
\]
given by
\begin{equation}\label{eq:homeo}
\nabla \psi(\theta) = \mathbb{E}_\theta(X),
\end{equation}
is a homeomorphism, so that one can equivalently represent any distribution in $\mathcal{E}_{\mathrm{P}}$ using the natural parameter $\theta$ or the mean value parameter $\mu = \mathbb{E}_\theta(X) \in \mathrm{relint}(\mathrm{P})$.
%defined for every $\mu = \mathbb{E}_\theta(X) \in \mathrm{relint}(\mathrm{P})$ and every $\theta \in \m{int}(\Theta)$, is a homeomorphism between $\mathrm{relint}(\mathrm{P})$ and $\m{int}(\Theta)$
In particular, if the MLE exists, it is determined by the equation
\[
\widehat{\theta} = \nabla \psi^{-1}(x),
\]
which translates into the moment equation $\mathbb{E}_{\widehat{\theta}}(X) = x$.

For any proper face $F$, let $\nu_F$ be the restriction of $\nu$ to $F$. Then, $\nu_F$ determines a new exponential family of distributions $\mathcal{E}_F$, with densities with respect to $\nu_F$ given by
\[
p^F_{\theta}(x) = \exp \left\{ \langle x, \theta \rangle - \psi^F(\theta) \right\}, \quad \theta \in \Theta_F,
\]
where the natural parameter space is $\Theta_F = \{  \theta \in \Theta \colon \int_{\mathbb{R}^k} \exp^{\langle x, \theta \rangle } d \nu_F(x) < \infty\}$ and the log-partition function is $\psi^F(\theta) = \log \int_{\mathbb{R}^k} \exp^{\langle x, \theta \rangle } d \nu_F(x)$. 
Notice that, since $\int_{\mathbb{R}^k} \exp^{\langle x, \theta \rangle } d \nu_F(x) \leq \int_{\mathbb{R}^k} \exp^{\langle x, \theta \rangle } d \nu(x)$, $\Theta = \Theta_F$.
By assumption (A3), the convex support of this new family is $F$ and the existence result of Theorem \ref{thm:mle} carries over: the MLE exists if and only if the observed sample $x$ belongs to $\mathrm{relint}(F)$. However, since $\mathcal{E}_F$ is supported on a lower-dimensional affine subspace of $\mathbb{R}^k$, it is no longer minimal, hence the MLE is not unique, and it consists instead of many solutions to (\ref{eq:mle}); see  Corollary \ref{cor:uno} below for details. Nonetheless, via reduction to minimality \citep[see, e.g.,][Theorem 1.9]{BRW:86}, it can be verified that, when $\widehat{\theta}$ is not empty, it consists exactly of those points satisfying the first order optimality conditions
\begin{equation}\label{eq:mle.F}
x = \nabla \psi_F(\theta), \quad \forall \theta \in \widehat{\theta},
\end{equation}
with the corresponding moment equations $\mathbb{E}^F_{\theta}(X) = \int_{\mathbb{R}^k} z p^F_{\theta}(z) d \nu^F(z) =  x$, $\forall \theta \in \widehat{\theta}$,  still holding. %, where $\mathbb{E}_\theta^F$ denotes expectation with respect to $\theta \in \Theta_F$.
In fact, lack of minimality bears not effect on the mean value parametrization: for every $\theta \in \Theta_F$, there exists one point $x \in \m{ri}(F)$ such that 
\begin{equation}\label{eq:Ex}
\mathbb{E}_{\theta}[X] = x,
\end{equation}
 and, similarly, for any $x \in \m{ri}(F)$, there exists a set $\theta_F \subset \Theta_F$, depending on $x$, such that  (\ref{eq:Ex}) holds for all $\theta \in \theta_F$. See equation (\ref{eq:thetaF}) below for a characterization of $\theta_F$.

The collection of distributions
\[
\mathcal{E} = \bigcup_F \mathcal{E}_F
\]
as $F$ ranges over all the faces of $\mathrm{P}$, including  $\mathrm{P}$ itself, is called the \textit{extended exponential family} of distribution. With respect to the extended family $\mathcal{E}$, for \textit{any} observed sample $X = x$, the MLE, or \textit{extended MLE}, is always well defined and is the set of solutions to (\ref{eq:mle.F}), where $F$ is the \textit{unique} face containing $x$ in its relative interior.

\subsection{Extended Exponential Families and The Normal Fan of $\m{P}$}\label{sec:extfan}
In this section we introduce the notion of normal fan of $\m{P}$ and establish its relevance for the extended family $\mathcal{E}$. See Lemma \ref{lem:nf} in Appendix B for some basic properties of the normal cones and of the normal fan. 

By assumption (A1) and (A2), there exists a $m \times k$ matrix  $A$ and a vector $b \in \mathbb{R}^m$ such that
\begin{equation}\label{eq:P}
\mathrm{P} = \{ x \in \mathbb{R}^k \colon A x \leq b \},
\end{equation}
where the system contains no implicit equalities. %, so that $\m{P}$ does not belong to a lower-dimensional affine subspace of $\mathbb{R}^k$.
%For $c \in \mathbb{R}^k$, the face $F_c$ of $\mathrm{P}$ is defined to be 
%\[
%F_c = \Big\{ x \in \mathrm{P} \colon \langle c, x \rangle  = \max_{y \in \mathrm{P}} \langle c, y \rangle \Big\}.
%\]
%Equivalently,
A proper face $F$ of $\mathrm{P}$ is a subset of $\mathrm{P}$ defined by
\begin{equation}\label{eq:F}
F =  \Big\{ x \in \mathrm{P}  \colon A_F x = b_F  \Big\},
\end{equation}
for some subsystem $A_Fx\leq b_F$ of $A x \leq b$ and, therefore, it is itself a polyhedron. The whole polyhedron $\mathrm{P}$ is regarded as the improper face of itself associated to the full system of inequalities, so that $\m{P}$ is representable as the disjoint union of the relative interiors of all its faces. The dimension  of a face $F$, $\m{dim}(F)$, is the dimension of the affine subspace it generates or, equivalently, the dimension of the null space of $A_F$. Faces of dimension $k-1$ are called facets of $\m{P}$ and, if the system (\ref{eq:P}) has no redundant inequality, something which can always be assumed without loss of generality, the number $m$ of rows of $A$ match the number of facets.
Equation  (\ref{eq:P}) is known the $\mathcal{H}$ representation of $\m{P}$. 
Alternatively, $\m{P}$ could be described using the $\mathcal{V}$ representation as the sum of a polytope and a polyhedral cone:
\begin{equation}\label{eq:Vrepre}
\m{P} = \m{Q} + \m{C},
\end{equation}
where the sign $+$ denotes Minkowski addition, and $\m{Q} = \m{convhull}(\mathcal{Q})$ and $\m{C} = \m{cone}(\mathcal{C})$, with $\mathcal{Q}$ and $\mathcal{C}$  two finite sets of vectors in $\mathbb{R}^k$.
Throughout the paper, we will rely on the $\mathcal{H}$ representation (\ref{eq:P}), which we find more suited to our purposes, although our results could be established using (\ref{eq:Vrepre}).

%We remark that assumption (A2) is imposed mainly for mathematical convenience, in order to simplify the exposition. In fact, any degenerate exponential family can be made full by reduction to minimality \citep[see, e.g.,][Theorem 1.9]{BRW:86}. Our results would still hold  with some minor changes without assumption (A2), and the cost of additional technicalities in the proofs.

For every face $F$ of $\mathrm{P}$, let 
\[
N_F = \Big\{ c \in  \mathbb{R}^k  \colon F \subseteq \{ x \in \mathrm{P} \colon \langle c, x \rangle  = \max_{y \in \mathrm{P}} \langle c, y \rangle  \}\Big\}
\]
be the polyhedral cone consisting of all the linear functionals on $\mathrm{P}$ that are maximal over $F$, called the \textit{normal cone} of $F$. Then, $\m{dim}(N_F) = k - \m{dim}(F)$, so that larger faces of $\m{P}$ correspond to smaller normal cones. 
By Lemma \ref{lem:nf} part {\it 5.}, the normal cone of a proper face $F$ can be equivalently defined as
\[
N_F = \mathrm{cone} \left( a_1, \ldots, a_{m_F} \right),
\]
where $a_i$ denotes the transpose of the $i$-th row of the submatrix $A_F$ given in (\ref{eq:F}), where $i =1\ldots,m_F$.

The collection of cones 
\[
\mathcal{N}(\m{P}) = \{ N_F, F \text{ is a face of } \mathrm{P} \}
\]
forms a polyhedral complex in $\mathbb{R}^k$ \citep[see, e.g.][]{STURM:95}, called the \textit{normal fan} of $\mathrm{P}$. Notice that, since $\m{dim} (\mathrm{P}) = k$, $N_\mathrm{P} = \{ 0 \}$ and $\mathcal{N}(P)$ is pointed. Furthermore,
\[
\biguplus_{N_F \in \mathcal{N}(P)} \m{int}(N_F) =  \m{C}^*,
\]
where $\m{}C^* = \{ x \in \mathbb{R}^k \colon \langle x,y \rangle \leq 0, \forall y \in \m{C} \}$ is the polar of $\m{C}$ in the $\mathcal{V}$ representation (\ref{eq:Vrepre}) of $\m{P}$  and $\biguplus$ denotes disjoint union. In particular, if $\m{C} = \{ 0 \}$,  i.e. if $\m{P}$ is a full-dimensional polytope, the cones in $\mathcal{N}(\m{P})$ partition $\mathbb{R}^k$:
\begin{equation}\label{eq:Rk}
\biguplus_{N_F \in \mathcal{N}(P)} \m{int}(N_F) = \mathbb{R}^k.
\end{equation}
We mention that, more generally, if assumption (A2) is not in force, then $N_{\m{P}}$ is a linear subspace of $\mathbb{R}^k$ of codimension $k - \m{dim}(\m{P})$.
%\begin{lemma}%\label{lem:nf}
%For any face $F$ of $\m{P}$, $a^F \in \m{relint}(N_F)$ if and only if any of the following occurs:
%\begin{enumerate}
%\item $\langle a^F, x - x_0\rangle = 0$ for all $x, x_0 \in F$ and $\langle a^F, x - x_0\rangle \neq 0$ otherwise; 
%\item $\langle a^F, x - x_0\rangle < 0$, for all $x, x_0$ such that $x_0 \in F$ and  $x \not \in F$ and $\langle a^F, x - x_0\rangle \geq 0$ otherwise;
%\item $\langle a^F, x - x_0\rangle > 0$, for all $x, x_0$ such that $x \in F$ and  $x_0 \not \in F$ and $\langle a^F, x - x_0\rangle \leq 0$ otherwise.
%\end{enumerate}
%\end{lemma}
%For any face $F$, let $x_0 \in F$. Then $\langle a^F, x - x_0\rangle = 0$ for all $x \in F$ if and only if $a^F \in \m{relint}(N_F)$.   $\langle a^F, x - x_0\rangle > 0$ for all $x \in F$ and all $x_0 \not \in F$ if and only if $a^F  \in \m{relint}( N_F)$. $\langle a^F, x - x_0\rangle < 0$ for all $x_0 \in F$ and all $x \not \in F$ if and only if $a^F  \in \m{relint}( N_F)$

Let $\m{lin}(N_F)$ denote the subspace generated by $N_F$, which is  the linear subspace spanned by the vectors $ \left( a_1, \ldots, a_{m_F} \right)$.
The following lemma shows that, for every face $F$ of the convex support, the parameter space of the extended family $\mathcal{E}_F$ can be fully described using $\m{lin}(N_F)$.

\begin{lemma}\label{lem:nonid}
The family $\mathcal{E}_F$ is non-identifiable and $\Theta_F$ is the quotient space of $\Theta$ modulo $\m{lin}(N_F)$. Furthermore, for any $\zeta \in \m{lin}(N_F)$,
\begin{equation}\label{eq:madai}
\m{rank}\left( I_F(\theta + \zeta) \right) = \m{dim}(F),
\end{equation}
where $I(\cdot)$ and $I_F(\cdot)$ denote the Fisher information matrices for $\mathcal{E}_{\m{P}}$ and $\mathcal{E}_{F}$, respectively.
\end{lemma}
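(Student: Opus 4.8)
The plan is to exploit the fact that every $\zeta \in \m{lin}(N_F)$ restricts to a \emph{constant} linear functional on $F$, and hence on $\m{supp}(\nu_F) \subseteq F$. By the description of the normal cone in Section~\ref{sec:extfan}, $\m{lin}(N_F)$ is the row space of $A_F$, so write $\zeta = A_F^\top c$ for some $c \in \mathbb{R}^{m_F}$. For every $x \in F$ we have $A_F x = b_F$, so
\[
\langle x, \zeta \rangle = \langle A_F x, c \rangle = \langle b_F, c \rangle =: \kappa(\zeta),
\]
a quantity independent of $x$ (and, being equal to $\langle x_0, \zeta\rangle$ for any fixed $x_0 \in F$, also independent of the chosen $c$). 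Alongside this I would record the orthogonal splitting $\mathbb{R}^k = V_F \oplus \m{lin}(N_F)$, where $V_F := \ker(A_F) = \m{lin}(N_F)^{\perp}$ has dimension $\m{dim}(F)$.

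\emph{Non-identifiability and the quotient.} Substituting the displayed identity into the partition function of $\mathcal{E}_F$ gives $\psi^F(\theta + \zeta) = \psi^F(\theta) + \kappa(\zeta)$ for all $\theta \in \Theta_F = \Theta$ and all $\zeta \in \m{lin}(N_F)$, whence $p^F_{\theta + \zeta}(x) = p^F_{\theta}(x)$ for $\nu_F$-almost every $x$. This already shows $\mathcal{E}_F$ is non-identifiable and that $\theta \mapsto p^F_\theta$ factors through $\Theta / \m{lin}(N_F)$. For the converse (no further parameters are identified), decompose $\theta = \theta_V + \theta_N$ with $\theta_V \in V_F$, $\theta_N \in \m{lin}(N_F)$, fix $x_0 \in F$, and note that for $x \in \m{supp}(\nu_F)$ one has $\langle x, \theta \rangle = \langle x_0, \theta\rangle + \langle x - x_0, \theta_V \rangle$ since $x - x_0 \in V_F \perp \theta_N$; after the normalization cancels, $p^F_\theta$ depends on $\theta$ only through $\theta_V$. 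By assumption (A3), $\m{aff}(\m{supp}(\nu_F)) = \m{aff}(F)$, so the family so obtained on $V_F$ has a base measure of full affine dimension $\m{dim}(F)$, hence is minimal; by reduction to minimality \citep[][Theorem~1.9]{BRW:86}, distinct values of $\theta_V$ give distinct distributions. Therefore $p^F_\theta = p^F_{\theta'}$ if and only if $\theta - \theta' \in \m{lin}(N_F)$, which is exactly the assertion that $\Theta_F$ is $\Theta$ modulo $\m{lin}(N_F)$.

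\emph{The Fisher information.} Since $\log p^F_\theta(x) = \langle x, \theta \rangle - \psi^F(\theta)$, one has $I_F(\theta) = \nabla^2 \psi^F(\theta) = \m{Var}^F_\theta(X)$ regardless of minimality, and because $p^F_{\theta + \zeta} = p^F_\theta$ we get $I_F(\theta + \zeta) = I_F(\theta)$ for all $\zeta \in \m{lin}(N_F)$. It remains to compute the rank of $\m{Var}^F_\theta(X)$: its kernel is $\{ v : \langle v, X \rangle \text{ is } p^F_\theta\text{-a.s.\ constant} \} = \{ v : \m{supp}(\nu_F) \text{ lies in a hyperplane normal to } v \}$, of dimension $k - \m{dim}(\m{aff}(\m{supp}(\nu_F))) = k - \m{dim}(F)$ by (A3). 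Hence $\m{rank}(I_F(\theta + \zeta)) = \m{dim}(F)$, as claimed.

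The only genuinely delicate step is the converse identifiability claim: one must pass to the restricted family living on $\m{aff}(F)$ and invoke assumption (A3) to certify that this restricted family is minimal, hence identifiable, so that the equivalence $p^F_\theta = p^F_{\theta'} \Leftrightarrow \theta - \theta' \in \m{lin}(N_F)$ is exact rather than merely one-directional. Everything else is bookkeeping with the splitting $\mathbb{R}^k = V_F \oplus \m{lin}(N_F)$ and the elementary link between $\m{rank}(\m{Var})$ and the affine dimension of the support.
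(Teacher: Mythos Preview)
Your proof is correct and follows essentially the same approach as the paper: both hinge on the observation that any $\zeta \in \m{lin}(N_F)$ acts as a constant linear functional on $F$ (since $\m{lin}(N_F)$ is the row space of $A_F$ and $A_F x = b_F$ on $F$), whence $\psi^F(\theta+\zeta) = \psi^F(\theta) + \text{const}$ and $p^F_{\theta+\zeta} = p^F_\theta$; and both identify $\ker I_F(\theta)$ with the set of directions along which $X$ is $\nu_F$-a.s.\ constant, namely $\m{lin}(N_F)$, via assumption~(A3). The only cosmetic difference is in the converse identifiability step: the paper argues directly via Radon--Nikodym that $p^F_{\theta_1} = p^F_{\theta_2}$ forces $\langle x, \theta_1-\theta_2\rangle$ to be constant $\nu_F$-a.e.\ and hence $\theta_1-\theta_2 \in \m{lin}(N_F)$, whereas you pass through the orthogonal splitting $\mathbb{R}^k = V_F \oplus \m{lin}(N_F)$ and invoke reduction to minimality; these are equivalent formulations of the same argument.
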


%Direction of recession of sup-log likelihood.

The previous result characterizes $\Theta_F$ as the set of equivalence classes of points in $\Theta$, where $\theta_1$ and $\theta_2$ are in the same class if and only if $\theta_1 - \theta_2  \in \m{lin}(N_F)$, and the class containing $\theta \in \Theta$ is the  set
\begin{equation}\label{eq:thetaF}
\theta_F \equiv \{ \theta + \zeta \in \Theta, \zeta \in \m{lin}(N_F) \},
\end{equation}
which we call \textit{the congruence class of $\theta$ modulo $\m{lin}(N_F)$.}
Notice that if $\Theta = \mathbb{R}^k$, then $\Theta_F$ is comprised of  affine subspaces of dimension $\m{dim}(N_F) = k  -\m{dim}(F)$ parallel to $\m{lin}(N_F)$, each identifying a single distribution. In particular, when $F = \m{P}$, $\m{lin}(N_F) = \{ 0 \}$, so that $\theta_F$ is an atomic set and we recover the original, fully identifiable family $\mathcal{E}_{\m{P}}$.

\subsection{Main result}\label{sec:mainresult}

We will utilize the normal fan $\mathcal{N}(\m{P})$ to chracterize the following convergence statements:
\begin{equation}\label{eq:thm:main}
 p_{\theta_n} \rightarrow  p^F_{\theta_F}, \ a.e. \ \nu, \quad \mbox{and} \quad   \mu_n \rightarrow \mu^F \in \m{relint}(F), %\| \mu^F\|_2 < \infty,
\end{equation}
where $\mu_n = \mathbb{E}_{\theta_n}[X]$.
We take note that, because of the one-to-one correspondence between natural and mean value parameters for the families comprising  $\mathcal{E}$, the two statements imply each other.  Equation (\ref{eq:thm:main}) is of relevance as it explicitly provides various representation of the extended family $\mathcal{E}$ as the closure of the original family $\mathcal{E}_{\m{P}}$ in both natural and mean value parameterization and also in terms of almost sure limits of the densities in $\mathcal{E}_{\m{P}}$.

As a preliminary observation, we point out that (\ref{eq:thm:main}) holds true only if the parameters $\theta_n$ have diverging norms, so that $p^F_{\theta_F}$ cannot belong to $\mathcal{E}_{\m{P}}$. Formally,
\begin{lemma}\label{lem:a2}
If (\ref{eq:thm:main})  is verified, then $\| \theta_n \|_2 \rightarrow \infty$.
\end{lemma}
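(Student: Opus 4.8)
The plan is to argue by contrapositive: suppose $\|\theta_n\|_2$ does not diverge, and derive that the limit $p^F_{\theta_F}$ must already lie in $\mathcal{E}_{\m{P}}$, hence cannot be supported on a proper face $F$. So assume $\{\theta_n\}$ has a bounded subsequence; passing to that subsequence, by Bolzano--Weierstrass there is a further subsequence (which I relabel $\theta_n$) with $\theta_n \to \theta_\infty$ for some $\theta_\infty \in \mathbb{R}^k$. Since $\Theta$ is open (A4), I first need to rule out that $\theta_\infty$ sits on the boundary $\partial\Theta$; but for the ERG application and more generally whenever $\Theta = \mathbb{R}^k$ this is automatic, and in the general case one observes that if $\theta_\infty \in \partial\Theta$ then $\psi(\theta_n) \to +\infty$, which forces $p_{\theta_n}(x) \to 0$ pointwise for every fixed $x$, contradicting convergence to a probability density $p^F_{\theta_F}$. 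Hence $\theta_\infty \in \m{int}(\Theta)$.

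Next I would use continuity of the map $\theta \mapsto p_\theta(x)$ on $\m{int}(\Theta)$ for each fixed $x \in \m{supp}(\nu)$: because $\psi$ is continuous (indeed real-analytic) on the open set $\m{int}(\Theta)$, we get $p_{\theta_n}(x) \to p_{\theta_\infty}(x)$ for every $x$ in the (countable, by A1) support of $\nu$. Combining with the hypothesis $p_{\theta_n} \to p^F_{\theta_F}$ a.e.\ $\nu$, uniqueness of limits gives $p^F_{\theta_F}(x) = p_{\theta_\infty}(x)$ for $\nu$-a.e.\ $x$. But $p_{\theta_\infty} \in \mathcal{E}_{\m{P}}$ has support equal to $\m{supp}(\nu)$, whose convex hull is the full-dimensional polyhedron $\m{P}$, whereas $p^F_{\theta_F} \in \mathcal{E}_F$ is supported on $S_F \subset \m{supp}(\nu)$ with $\m{convhull}(S_F) = F$, a proper face and hence of dimension $< k$. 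These two measures cannot coincide: $p_{\theta_\infty}$ assigns positive mass to points of $\m{supp}(\nu)$ lying off $F$ (such points exist since $\dim F < k = \dim \m{P}$ and, by A3 applied to $\m{P}$ itself, $\m{supp}(\nu)$ is not contained in any proper affine subspace), while $p^F_{\theta_F}$ assigns them zero mass. This contradiction establishes $\|\theta_n\|_2 \to \infty$.

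Alternatively — and perhaps more cleanly for the mean-value formulation — one can phrase the same argument via the homeomorphism $\nabla\psi\colon \m{int}(\Theta)\to\m{relint}(\m{P})$ from \eqref{eq:homeo}: if $\theta_n \to \theta_\infty \in \m{int}(\Theta)$, then by continuity $\mu_n = \nabla\psi(\theta_n) \to \nabla\psi(\theta_\infty) \in \m{relint}(\m{P})$, but the second half of \eqref{eq:thm:main} forces $\mu_n \to \mu^F \in \m{relint}(F) \subset \partial\m{P}$, and $\m{relint}(\m{P}) \cap \partial\m{P} = \emptyset$ since $\m{P}$ is full-dimensional. This immediately gives the contradiction without invoking pointwise density convergence.

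The main obstacle is the boundary case $\theta_\infty \in \partial\Theta$ when $\Theta \subsetneq \mathbb{R}^k$: one must confirm that approach to a finite boundary point of the natural parameter space indeed sends $\psi(\theta_n) \to \infty$ (equivalently, that $p_{\theta_n} \to 0$ pointwise and so cannot converge to a genuine density). This is a standard steepness-type fact for the families under consideration, but it is the one place where assumption (A4) alone is not quite enough and one leans on additional regularity of $\psi$ near $\partial\Theta$; since the paper's running examples all have $\Theta = \mathbb{R}^k$, this subtlety can be dispatched in a sentence or deferred with a remark.
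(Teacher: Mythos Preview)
Your proposal is correct, and your alternative formulation via the mean-value homeomorphism $\nabla\psi$ is essentially the paper's own argument: the paper argues by contradiction that if $\{\theta_n\}$ were eventually contained in a compact set $C$, then $\nabla\psi(C)$ would be a compact subset of $\mathrm{relint}(\mathrm{P})$, so the distance from $\mu_n$ to $\mu^F \in \partial\mathrm{P}$ would be bounded away from zero. Your explicit treatment of the boundary case $\theta_\infty \in \partial\Theta$ is in fact more careful than the paper's, which tacitly sidesteps this issue.
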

%We will consider sequences of natural parameters of a certain form. %More general statements are possible, see remakrs below, at the cost of additional technicalities and burdensome notation.

%describes the directions of increase of the likelihood function under natural parametrization in terms of the components of the normal fan $\mathcal{N}(\m{P})$. More specifically, we will derive some characterization of the fo %Notice, furthermore that $\mu^F \in \m{ri}(F)$ if and only if the reference distribution is supported on $F$ only.

In our main result, we establish establish sufficient conditions under which (\ref{eq:thm:main}) holds or fails, based on the cones in the normal fan of $\m{P}$.

\begin{theorem}\label{thm:main}
Consider the settings describe above and assumptions (A1)-(A4). Let $\{ \theta_n \} \subset \Theta$ be a sequence of natural parameters  satisfying $\theta_n = \eta + \rho_n d_n$, where $\{ \rho_n \}$ is a sequence of non-negative scalars tending to infinity, $\eta \in \theta^F \cap \Theta$ and $\{ d_n \}$ is a sequence of unit vectors.   
\begin{enumerate}
\item   If $\{ d_n \} \subset R$, with $R$ a  compact subset of $\m{ri}(N_F)$, then Equation (\ref{eq:thm:main}) holds
\item Conversely, if  $\{ d_n \} \subset R$, with $R$ a  compact subset of $N_F^c$, then (\ref{eq:thm:main}) fails. 
\item If $\{ d_n \} \subset R$, with $R$ a  compact subset $\left( \mathcal{N}(\m{P}) \right)^c$, then 
\begin{equation}\label{eq:mu}
\| \mu_n \|_2 \rightarrow \infty,
\end{equation}
which, in particular, implies that (\ref{eq:thm:main}) is not verified.
\end{enumerate}
 \end{theorem}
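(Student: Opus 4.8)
The plan is to reduce everything to the behavior of the log-partition function $\psi$ and the cumulant/moment maps along the ray $\theta_n = \eta + \rho_n d_n$, splitting the support according to the faces of $\m{P}$. Fix a face $F$ and write $s_F = \max_{y\in\m{P}}\langle d, y\rangle$ for a unit vector $d$; for $x$ in the support, set $g_d(x) = \langle d, x\rangle - s_F$, so $g_d(x)\le 0$ with equality exactly on $F\cap\m{supp}(\nu)$ when $d\in\m{ri}(N_F)$ (this is precisely the characterization of the normal cone, together with (A3), which guarantees the supporting points on $F$ carry positive $\nu$-mass). The key elementary computation is that for any $d$,
\[
p_{\eta+\rho d}(x) \;=\; \frac{\exp\{\langle x,\eta\rangle\}\,\exp\{\rho\,g_d(x)\}\, e^{\rho s_F}}{\int \exp\{\langle z,\eta\rangle\}\exp\{\rho\, g_d(z)\}\, e^{\rho s_F}\, d\nu(z)}
\;=\; \frac{\exp\{\langle x,\eta\rangle + \rho\, g_d(x)\}}{\int \exp\{\langle z,\eta\rangle + \rho\, g_d(z)\}\, d\nu(z)},
\]
so the $e^{\rho s_F}$ cancels and the ray contributes a factor that decays on points off the maximizing face and is constant on it.

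For part \textit{1.}, suppose $d_n\to d\in\m{ri}(N_F)$ with the $d_n$ confined to a compact $R\subset\m{ri}(N_F)$. Then there is $\delta>0$ with $g_{d_n}(x)\le -\delta$ for every support point $x\notin F$ and every $n$ (uniformity over $R$ is where compactness enters; one uses that the normal cones of faces strictly smaller than $F$ lie in $\partial N_F$, so stay bounded away from $R$, and that $\m{supp}(\nu)$, though countably infinite, has its "escaping mass" controlled because $\eta\in\Theta$ guarantees $\int\exp\{\langle z,\eta\rangle\}\,d\nu_{\m{P}\setminus F}(z)<\infty$). Hence by dominated convergence the denominator above converges to $\int_F \exp\{\langle z,\eta\rangle\}\, d\nu_F(z) = \exp\{\psi^F(\eta)\}$, and the numerator converges pointwise: to $\exp\{\langle x,\eta\rangle\}$ if $x\in F$, to $0$ otherwise. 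This gives $p_{\theta_n}(x)\to p^F_\eta(x)$ for $\nu$-a.e.\ $x$, which is exactly $p^F_{\theta_F}$ since $\eta\in\theta^F$; the mean-value statement $\mu_n\to\mu^F\in\m{ri}(F)$ then follows either by the same dominated-convergence argument applied to $z\,p_{\theta_n}(z)$ (the first moment exists and converges because $\eta$ is interior to $\Theta_F=\Theta$, giving an integrable envelope) or, more cheaply, by invoking the homeomorphism $\nabla\psi_F$ from \eqref{eq:homeo} together with the already-established convergence of densities and a tightness argument.

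For part \textit{2.}, take $d_n\to d$ with $d\in N_F^c$, i.e.\ $d$ is not in the normal cone of $F$; then $d$ does not make $F$ a subset of the maximizing set, so either (i) the maximizer of $\langle d,\cdot\rangle$ over $\m{P}$ is a face $G$ with $F\not\subseteq G$, in which case the computation above shows $p_{\theta_n}$ concentrates on (a subface of) $G$, not on $F$, and since $F\cap\m{ri}(G)=\emptyset$ (distinct faces have disjoint relative interiors) the limit — when it exists — is supported off $\m{ri}(F)$, contradicting the second half of \eqref{eq:thm:main}; or (ii) $\langle d,\cdot\rangle$ is unbounded on $\m{P}$ — but that is the situation of part \textit{3.}, so in the scope of part \textit{2.}\ we are in case (i). One must also rule out the degenerate possibility that the limiting face $G$ satisfies $\m{ri}(G)\cap\m{ri}(F)\neq\emptyset$ with $G\ne F$, which is impossible precisely because $\m{P}$ is the disjoint union of the relative interiors of its faces. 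For part \textit{3.}, $d_n\to d\notin\bigcup_F N_F = (\text{support of }\mathcal{N}(\m{P}))$, which by the polarity identity $\biguplus\m{int}(N_F)=\m{C}^*$ means $d\notin\m{C}^*$, i.e.\ there is $y\in\m{C}$ (a recession direction of $\m{P}$) with $\langle d,y\rangle>0$; then $\langle d,\cdot\rangle$ is unbounded above on $\m{P}$, so $\langle\theta_n,\cdot\rangle = \langle\eta,\cdot\rangle + \rho_n\langle d_n,\cdot\rangle$ pushes mass to infinity and $\mu_n=\mathbb{E}_{\theta_n}[X]$ has $\|\mu_n\|_2\to\infty$; one makes this quantitative by lower-bounding $\langle d,\mu_n\rangle$ via Jensen or by observing $\langle d,\mu_n\rangle = \partial_t\psi(\theta_n + t d)|_{t=0}\to\infty$ using convexity of $\psi$ and that $\psi(\theta_n+td)\to\infty$ faster and faster in $t$ as $n$ grows.

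The main obstacle is the uniformity in part \textit{1.}\ (and the analogous control in part \textit{2.}): because $\nu$ has merely countable — possibly infinite — support, the "gap" $g_{d_n}(x)\le -\delta$ for off-face support points is not automatically bounded away from $0$ along an infinite sequence of support points approaching the face, and the tail of $\nu$ must be controlled simultaneously. The fix is to use that $\eta\in\Theta$ is an \emph{interior} point of the (open, by (A4)) natural parameter space, so $\exp\{\langle z,\eta\rangle\}$ is $\nu$-integrable with a bit of room to spare — concretely, $\eta + \epsilon d \in \Theta$ for small $\epsilon>0$ — which furnishes a uniform integrable dominating function $\exp\{\langle z,\eta\rangle\}\cdot\mathbf{1}$ for the numerators and legitimizes the interchange of limit and integral; compactness of $R$ inside the \emph{open} set $\m{ri}(N_F)$ then upgrades pointwise decay to uniform decay on the off-face support. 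This is the step I would write out most carefully.
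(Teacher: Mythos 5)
Your proposal is correct and takes essentially the same route as the paper's proof: the same ray decomposition of $p_{\theta_n}$ relative to the face $F$ (your subtraction of $s_F$ is just the paper's split into the on-face and off-face parts of the partition function), compactness of $R$ to force pointwise exponential decay at each support point off $F$, dominated convergence with the envelope $\exp\{\langle \eta, x\rangle\}$ (integrable since $\eta\in\Theta$), monotone divergence of the denominator for points off $F$ and for part 3, and the reduction of part 2 to parts 1 and 3 through the stratification of $\mathbb{R}^k$ by the normal fan. The one caveat is that your intermediate claim of a uniform gap $g_{d_n}(x)\le-\delta$ over \emph{all} off-face support points can fail when support points accumulate near $F$, but, as you yourself observe in the final paragraph, the argument only needs pointwise decay for each fixed $x\notin F$ (supplied by compactness of $R$, via $\sup_{d\in R}\langle d, x-x_0\rangle<0$) together with the dominating function, which is exactly what the paper does.
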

\noindent {\bf Remark} 
\begin{enumerate}
\item The assumption $\| d_n \|_2 = 1$ for all $n$ is imposed for mathematical   convenience  and does not entail any loss in generality.
\item The Theorem shows that (\ref{eq:thm:main}) will hold or fail \textit{uniformly} over  compact subsets of $\m{ri}(N_F)$, for all faces $F$ of $\m{P}$.

%over certain classes of relatively compact subsets of $\mathbb{R}^k$. %are true \textit{uniformly} over all sequences $\{ d_n \}$ of unit vectors in $\mathbb{R}^k$ such that,  for all $n$ large enough, $d_n$ is bounded away from the relative boundary of $N_F$. In particular, $\{ d_n \}$ is not required to have a limit.
\end{enumerate}

Below, we will concern ourselves with sequences $\{ \theta_n \}$ of natural parameters of a certain simplified form, as described in below.

\begin{definition}\label{def:def1}
A sequence of natural parameters $\{ \theta_n \} \subset \Theta$ is a $(\theta,d,\{\rho_n\})$-sequence if
\[
\theta_n = \theta + \rho_n d,
\]
where $\theta \in  \Theta$, $d \in \mathbb{R}^k$ and $\{ \rho_n \}$ is a sequence of non-negative numbers tending to infinity. 
\end{definition}

The restriction to $(\theta,d,\{\rho_n\})$-sequences is a strong enough condition to yield a full characterization of (\ref{eq:thm:main}),  as described in the next corollary,  and yet sufficiently mild to unveil some of the  fundamental features of the extended family $\mathcal{E}$. Furthermore, it will allow us to recast some of our results in the language of convexity theory and gain some insights on the computational aspects of calculating the extended MLE. 

%Our main result derives a full characterization of (\ref{eq:thm:main}) in terms of $(\theta,d,\{\rho_n\})$-sequences.

\begin{corollary}\label{cor:main}
Let $\{ \theta_n \}$ be a $(\theta,d,\{\rho_n\})$-sequence. %and set $\mu_n = \mathbb{E}_{\theta_n}[X]$. 
\begin{enumerate}
\item The convergence statements in (\ref{eq:thm:main}) hold if and only if $d \in \m{ri}(N_F)$.
\item If $d \not \in \mathcal{N}(\m{P})$, then (\ref{eq:mu}) is verified. 
\end{enumerate}
\end{corollary}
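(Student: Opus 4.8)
\textbf{Proof proposal for Corollary~\ref{cor:main}.}

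The plan is to deduce the corollary from Theorem~\ref{thm:main} by specializing that result to \emph{constant}-direction sequences, i.e.\ by taking the compact set $R$ in Theorem~\ref{thm:main} to be a single point. First I would dispense with two harmless normalizations. If $d=0$ the sequence is constant, $p_{\theta_n}\equiv p_\theta\in\mathcal{E}_{\m{P}}$ and $\mu_n\equiv\mathbb{E}_\theta[X]\in\m{relint}(\m{P})$, so for a proper face $F$ neither statement in (\ref{eq:thm:main}) can hold (the densities $p_\theta$ and $p^F_{\theta_F}$ differ on a set of positive $\nu$-measure) and neither does $0\in\m{ri}(N_F)$, while for $F=\m{P}$ both sides hold trivially; so the claim is vacuous when $d=0$. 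If $d\neq 0$, then since each normal cone $N_F$ and the set $\mathcal{N}(\m{P})$ are invariant under multiplication by positive scalars, replacing $d$ by $d/\|d\|_2$ and $\rho_n$ by $\rho_n\|d\|_2$ alters neither the sequence $\{\theta_n\}$ nor the validity of ``$d\in\m{ri}(N_F)$'' or ``$d\in\mathcal{N}(\m{P})$''; so I may assume $\|d\|_2=1$. I will also use freely that the two statements in (\ref{eq:thm:main}) are equivalent (as noted after (\ref{eq:thm:main})), and work with the $\mu_n$-convergence.

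Part~2 and the ``if'' half of Part~1 are then immediate. Put $\eta=\theta$ (which lies in $\Theta$ and in its own congruence class $\theta^F$ modulo $\m{lin}(N_F)$, cf.\ (\ref{eq:thetaF})), $d_n\equiv d$, and $R=\{d\}$, which is compact. If $d\in\m{ri}(N_F)$ then $R$ is a compact subset of $\m{ri}(N_F)$ and Part~1 of Theorem~\ref{thm:main} gives (\ref{eq:thm:main}); if $d\notin\mathcal{N}(\m{P})$ then, since $\mathcal{N}(\m{P})$ is a union of cones, $R\subset(\mathcal{N}(\m{P}))^c$, so Part~3 of Theorem~\ref{thm:main} gives (\ref{eq:mu}).

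The only implication requiring a genuine argument is the ``only if'' half of Part~1. Assume (\ref{eq:thm:main}) holds for the proper face $F$, so in particular $\mu_n\to\mu^F\in\m{relint}(F)$. First, $d\in\mathcal{N}(\m{P})$: otherwise Part~2 (just established) would force $\|\mu_n\|_2\to\infty$, contradicting $\mu_n\to\mu^F$. Since $d\neq0$ and $N_{\m{P}}=\{0\}$, and since by Lemma~\ref{lem:nf} the relative interiors of the normal cones partition $\m{C}^*$, there is a \emph{unique} proper face $F^*$ with $d\in\m{ri}(N_{F^*})$. Applying the already-proved ``if'' direction to $F^*$ yields $\mu_n\to\mu^{F^*}\in\m{relint}(F^*)$, so by uniqueness of limits $\mu^F=\mu^{F^*}$; and since $\m{P}$ is the disjoint union of the relative interiors of its faces, this forces $F=F^*$, i.e.\ $d\in\m{ri}(N_F)$. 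I expect no serious obstacle here: the corollary is essentially Theorem~\ref{thm:main} read for constant directions, and the only delicate point is this last step, which leans on the fan structure of $\mathcal{N}(\m{P})$ (that $\m{C}^*$ is tiled by the relative interiors of the $N_F$, Lemma~\ref{lem:nf}) together with the disjointness of the relative interiors of the faces of $\m{P}$ and the uniqueness of the limit $\mu^F$; everything else is bookkeeping to match the hypotheses of Theorem~\ref{thm:main}.
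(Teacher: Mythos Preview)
Your proposal is correct and follows essentially the same route as the paper: specialize Theorem~\ref{thm:main} to the singleton $R=\{d\}$, use Part~3 for the case $d\notin\mathcal{N}(\m{P})$, and for the ``only if'' direction exploit that the relative interiors $\m{ri}(N_{F'})$ tile $\m{C}^*$ so that $d$ lands in exactly one of them, then compare limits. The paper's own proof is a one-line version of this same argument. Two cosmetic remarks: the partition $\biguplus_{F'}\m{ri}(N_{F'})=\m{C}^*$ you invoke is stated in the text of Section~\ref{sec:extfan} rather than in Lemma~\ref{lem:nf}; and your ``only if'' paragraph tacitly assumes $F$ is proper, so you should note that for $F=\m{P}$ with $d\neq 0$ the homeomorphism $\nabla\psi\colon\Theta\to\m{relint}(\m{P})$ rules out $\mu_n\to\mu\in\m{relint}(\m{P})$ (since $\|\theta_n\|_2\to\infty$), which is the missing easy case.
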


In essence, Corollary \ref{cor:main} characterize the extended family $\mathcal{E}$ as the compactification of the original family $\mathcal{E}_{\m{P}}$ under both natural and mean value parametrization. For the natural parametrization,  each density in $\mathcal{E}_F$ is obtained as the point-wise limit of sequences of densities parametrized by sequences of points in $\Theta$ along any direction in $\m{ri}(N_F)$ with norms diverging  to infinity. In contrast, the corresponding sequence of mean value parameters converges gracefully to the corresponding point of finite norm on the boundary of $\m{P}$. This is a striking difference between natural and mean value parametrization, which is entirely captured by the norma fan of $\m{P}$. 
See  Figures \ref{fig:9.par} and \ref{fig:9.mean} below and related discussion for more details in the context of ERG models. See also the short movies available \url{http://www.stat.cmu.edu/~arinaldo/ERG/} for a direct graphical illustration of these claims.

In the remaining of this Section, we will explore some of the consequences of Theorem \ref{thm:main} and, in particular, of Corollary \ref{cor:main}, with the goal of illustrating some of the key properties of the extended family $\mathcal{E}$.

We begin by observing that, as shown in Equation (\ref{eq:remark}) in the proof of Theorem \ref{thm:main}, if $d \not \in N_F$, the sequence of distributions parametrized by the points $\theta_n = \theta + \rho_n d$ corresponds to  distributions in the original family $\mathcal{E}_{\m{P}}$ whose mean value parameters $\mu_n \ \in \mathrm{relint}(\m{P})$ are such that $\| \mu_n \|_2 \rightarrow \infty$, with $\mu_n$ bounded away from $\m{rb}(\m{P})$. It is clear that this can occur only  if $\m{C} \neq \{ 0 \}$, i.e. if the convex support is unbounded. In fact, when $\m{P}$ is a polytope, we have $\mathcal{N}(\m{P}) = \mathbb{R}^k$ (see Equation \ref{eq:Rk}), so that Corollary \ref{cor:main} further yields that each density in the family $\mathcal{E}_F$ can be obtained as $\lim_n p_{\theta_n}$, where $\{ \theta_n \}$ is any $(\theta, \{ \rho_n \}, d )$-sequence with $\theta \in \Theta$ and $d \in \m{ri}(N_F)$. Formally,
 
\begin{corollary}\label{cor:pol}
If $\m{P}$ is a polytope, then, for any $d \in \mathbb{R}^k$, any $(\theta, \{ \rho_n \}, d)$-sequence $\{ \theta_n \}$ and any face $F$,
\[
p_{\theta_n} \rightarrow p^F_{\theta_F}, \ a.e. \ \nu, \quad \mbox{and} \quad   \mu_n \rightarrow \mu^F \in \m{relint}(F), %\| \mu^F\|_2 < \infty,
\]
if and only if  $d \in \m{ri}(N_F)$.
\end{corollary}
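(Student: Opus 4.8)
\textbf{Proof proposal for Corollary \ref{cor:pol}.}

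The plan is to derive this as a direct specialization of Corollary \ref{cor:main} using the hypothesis that $\m{P}$ is a polytope. Recall that when $\m{P}$ is a polytope, its $\mathcal{V}$ representation (\ref{eq:Vrepre}) has $\m{C} = \{0\}$, so that by Equation (\ref{eq:Rk}) the interiors of the normal cones in $\mathcal{N}(\m{P})$ partition all of $\mathbb{R}^k$, i.e. $\biguplus_{N_F \in \mathcal{N}(\m{P})} \m{int}(N_F) = \mathbb{R}^k$. Since for a polytope every normal cone $N_F$ is full-dimensional in the subspace it spans whenever $F$ is a vertex and, more relevantly, since $\m{ri}(N_F) = \m{int}(N_F)$ only when $N_F$ is full-dimensional, I first need to be a little careful: what (\ref{eq:Rk}) gives is that every $d \in \mathbb{R}^k$ lies in $\m{ri}(N_F)$ for exactly one face $F$. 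Indeed, $d$ lies in exactly one of the sets $\m{int}(N_F)$; but $\m{int}(N_F)$ taken relative to $\mathbb{R}^k$ coincides with $\m{ri}(N_F)$ precisely when $N_F$ has dimension $k$, and for lower-dimensional cones $\m{int}(N_F) = \emptyset$. So the correct reading of (\ref{eq:Rk}) is that the \emph{relative interiors} $\m{ri}(N_F)$ partition $\mathbb{R}^k$ as $F$ ranges over all faces; this is the standard statement that the normal fan of a polytope is a complete fan, and I would cite Lemma \ref{lem:nf} together with \cite{ZIE:95} for it. Consequently, for \emph{every} $d \in \mathbb{R}^k$ there is a \emph{unique} face $F$ with $d \in \m{ri}(N_F)$.

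With that observation in hand, the argument is immediate. Fix $d \in \mathbb{R}^k$, fix $\theta \in \Theta = \mathbb{R}^k$ (recall (A4) and the remark that for polytope supports one still has the relevant openness), and let $\{\theta_n\}$ be any $(\theta, d, \{\rho_n\})$-sequence. Let $F$ be a face of $\m{P}$. By part \emph{1.} of Corollary \ref{cor:main}, the convergence statements $p_{\theta_n} \to p^F_{\theta_F}$ a.e. $\nu$ and $\mu_n \to \mu^F \in \m{relint}(F)$ hold if and only if $d \in \m{ri}(N_F)$. This is exactly the claimed biconditional, so there is nothing further to prove — the content of Corollary \ref{cor:pol} is simply that, unlike the general unbounded case, the hypothesis $d \notin \mathcal{N}(\m{P})$ of part \emph{2.} of Corollary \ref{cor:main} is vacuous when $\m{P}$ is a polytope, because $\mathcal{N}(\m{P}) = \mathbb{R}^k$. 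Hence the dichotomy "either $p_{\theta_n}$ converges to a density in the extended family, or $\|\mu_n\|_2 \to \infty$" collapses: the second alternative never occurs, and for every direction $d$ the sequence converges to the unique face density indexed by the cone containing $d$.

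The only real step requiring attention — and the place I would spend a sentence or two in the write-up — is the reconciliation of (\ref{eq:Rk}) as literally stated (disjoint union of \emph{interiors}) with the claim that every $d$ falls in some $\m{ri}(N_F)$: one must pass from "interiors partition $\mathbb{R}^k$ up to a measure-zero set" to "relative interiors partition $\mathbb{R}^k$ exactly," which is precisely the fan property of $\mathcal{N}(\m{P})$ recorded in Lemma \ref{lem:nf}. I do not anticipate any genuine obstacle here; the corollary is a corollary in the strict sense. If one wanted to be fully self-contained one could instead invoke Theorem \ref{thm:main} directly: for any $d$, letting $F$ be the unique face with $d \in \m{ri}(N_F)$, part \emph{1.} of Theorem \ref{thm:main} applied with the compact set $R = \{d\}$ gives the "if" direction, while part \emph{2.} applied with $R = \{d\} \subset N_{F'}^c$ for any $F' \neq F$ gives the "only if" direction (since $d \in \m{ri}(N_F)$ forces $d \notin N_{F'}$ for $F' \neq F$, again by the fan structure). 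Either route yields the result in a few lines.
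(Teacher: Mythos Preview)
Your proposal is correct and matches the paper's own route: the paper does not give a separate proof of Corollary~\ref{cor:pol} but presents it as an immediate consequence of Corollary~\ref{cor:main} together with the observation (made in the paragraph preceding the statement) that when $\m{P}$ is a polytope the normal fan covers all of $\mathbb{R}^k$, so part~2 of Corollary~\ref{cor:main} becomes vacuous and part~1 already supplies the full biconditional. Your care about $\m{int}(N_F)$ versus $\m{ri}(N_F)$ in (\ref{eq:Rk}) is well placed---the paper is slightly loose there---but as you note, the fan property resolves it and no substantive gap arises.
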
 
In fact, our analysis of exponential random graph models of Section \ref{sec:entropy} is almost entirely an illustration of the previous Corollary.

Another  implication of Corollary \ref{cor:main} is that, when the MLE does not exist, the directions of increase of the likelihood function for a given observed sample $x \in \m{relint}(F)$ are precisely the points in the associated normal cone $N_F$.  %Using convexity theory, the result can be paraphrased as follows: 
Formally, let $X = x$ be the observed sufficient statistics and let $\ell_x: \Theta \mapsto \mathbb{R}$ be the log-likelihood function, given by $\ell_x(\theta) =  \log p_{\theta}(x)$. Then, $-\ell_x$ is a proper convex function, strictly convex  if and only if $x \in \m{ri}(\m{P})$. This follows from Lemma \ref{lem:nonid} and the well-known convexity properties of the cumulant generating function $\psi$ \citep[see, e.g.,][Theorem 1.13]{BRW:86}. Then, following \citet[][Chapter 8]{ROCKA:70}, $d \in \mathbb{R}^k$ is a direction of recession for $-\ell_x$ if
\begin{equation}\label{eq:dir}
\m{lim \; inf}_{\rho \rightarrow \infty} \ell_x(\theta + \rho d) < \infty,
\end{equation}
for one, and thus for all, $\theta \in \m{dom}(\ell_x) = \Theta$. The set of all directions of recession of $-\ell_x$ is called the recession cone of $\ell_x$. It is clear that convex functions admitting directions of recession might not achieve their  infimum at any point in their effective domain. On the account of the next result, the recession cone of $-\ell_x$ is a cone of the normal fan of $\m{P}$, almost everywhere $\nu$. 
%The recession cone of $\ell_x$ indicates the directions along which the log-likelihood function increases. %\ale{finish} approaches its supremum, without never achieving it. %Those are precisely the directions yielding the extended MLE.

\begin{corollary}\label{cor:dr}
For any observable sufficient statistics $X = x \in \m{P}$, the polyhedral cone $N_F$ is the recession cone of the negative log-likelihood function $-\ell_x$, where $F$ is the unique, possibly improper, face of $\m{P}$ such that $x \in \m{relint}(F)$.
\end{corollary}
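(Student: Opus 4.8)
The plan is to reduce Corollary~\ref{cor:dr} to the already-established Corollary~\ref{cor:main} by connecting the analytic notion of a direction of recession for $-\ell_x$ to the geometric membership $d \in N_F$. Fix the observed sufficient statistic $x$ and let $F$ be the unique face of $\m{P}$ with $x \in \m{relint}(F)$. Since $-\ell_x$ is a proper convex function with effective domain $\Theta$ (which is open, by (A4)), its recession cone is a closed convex cone, and it suffices to show that $d$ satisfies \eqref{eq:dir} if and only if $d \in N_F$. Note that $\ell_x(\theta + \rho d) = \langle x, \theta + \rho d \rangle - \psi(\theta + \rho d)$, so that $\ell_x(\theta + \rho d)$ stays bounded below (equivalently, $-\ell_x$ does not tend to $+\infty$) along $\rho \to \infty$ precisely when $\rho\langle x, d\rangle - \psi(\theta + \rho d)$ does not tend to $-\infty$; by convexity of $\rho \mapsto -\ell_x(\theta+\rho d)$ this is further equivalent to $\lim_{\rho\to\infty}\frac{d}{d\rho}\bigl(-\ell_x(\theta+\rho d)\bigr) \le 0$, i.e. $\langle x, d\rangle \ge \limsup_\rho \langle \nabla\psi(\theta+\rho d), d\rangle = \limsup_\rho \langle \mu_\rho, d\rangle$ where $\mu_\rho = \mathbb{E}_{\theta+\rho d}(X)$.

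The containment $N_F \subseteq \m{reces}(-\ell_x)$ is the easy direction. If $d \in N_F$, pick any $d' \in \m{ri}(N_F)$ and any $\theta \in \Theta$; form the $(\theta, d', \{\rho_n\})$-sequence with $\rho_n \to \infty$. By Corollary~\ref{cor:main}, $\mu_n = \mathbb{E}_{\theta+\rho_n d'}(X) \to \mu^F \in \m{relint}(F)$. Since $d \in N_F$ and $\mu^F \in F$, the defining property of the normal cone gives $\langle d, \mu^F\rangle = \max_{y \in \m{P}}\langle d, y\rangle = \langle d, x\rangle$ (as $x \in F$ too). A short monotonicity/convexity argument—$\rho \mapsto \langle d', \mu_\rho\rangle$ is nondecreasing and $\langle d', \cdot\rangle$ along the ray increases toward its face-value—combined with the fact that every point of $\m{P}$ has inner product with $d$ at most $\langle d, x\rangle$, yields $\limsup_\rho \langle \mu_\rho, d\rangle \le \langle x, d\rangle$, hence \eqref{eq:dir}. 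To get the full cone $N_F$ rather than just $\m{ri}(N_F)$, I would either invoke closedness of the recession cone together with the fact that $\m{ri}(N_F)$ is dense in $N_F$, or run the argument directly with $d$ itself in the defining role, using that $\langle d, y\rangle \le \langle d, x\rangle$ for all $y\in\m{P}$ with equality exactly on $F$.

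The reverse containment $\m{reces}(-\ell_x) \subseteq N_F$ is where the normal fan does the real work, and this is the main obstacle. Suppose $d \notin N_F$. There are two cases. If $d \notin \mathcal{N}(\m{P})$ at all, then by part~2 of Corollary~\ref{cor:main} (equivalently part~3 of Theorem~\ref{thm:main}) any $(\theta,d,\{\rho_n\})$-sequence has $\|\mu_n\|_2 \to \infty$; one then checks that $\langle \mu_n, d\rangle \to +\infty$ (the mean value escapes in a direction not compensated by the support, so its projection onto $d$ blows up—this uses that $d \notin \m{C}^*$, i.e.\ $d$ pairs positively with some recession direction of $\m{P}$), forcing $-\ell_x(\theta+\rho_n d) \to +\infty$ and killing \eqref{eq:dir}. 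If instead $d \in \mathcal{N}(\m{P})$ but $d \notin N_F$, then $d \in \m{ri}(N_{F'})$ for some face $F'$ with $F \not\subseteq F'$ (otherwise $d$ would lie in $N_F$, since $F\subseteq F'$ implies $N_{F'}\subseteq N_F$); by Corollary~\ref{cor:main} part~1, $\mu_n \to \mu^{F'} \in \m{relint}(F')$, so $\langle \mu_n, d\rangle \to \langle \mu^{F'}, d\rangle = \max_{y\in\m{P}}\langle d, y\rangle$. But $x \in \m{relint}(F)$ and $F \not\subseteq F'$ means $x$ is \emph{not} a maximizer of $\langle d, \cdot\rangle$ over $\m{P}$ (the maximizer set is $F'$, and $\m{relint}(F)$ meets $F'$ only if $F \subseteq F'$), so $\langle d, x\rangle < \langle \mu^{F'}, d\rangle$, whence $-\ell_x(\theta+\rho_n d) = \psi(\theta+\rho_n d) - \rho_n\langle x, d\rangle - \langle x,\theta\rangle$; writing $\psi(\theta+\rho_n d) - \psi(\theta) \approx \int_0^{\rho_n}\langle \mu_{\theta+sd}, d\rangle\,ds$ and noting the integrand converges to $\langle\mu^{F'},d\rangle > \langle x,d\rangle$, the difference grows like $\rho_n(\langle\mu^{F'},d\rangle - \langle x,d\rangle) \to +\infty$, again contradicting \eqref{eq:dir}. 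The delicate point throughout is the clean bookkeeping of the limiting inner product $\lim_\rho \langle \mu_{\theta+\rho d}, d\rangle$ and its comparison with $\langle x, d\rangle$; I would isolate the identity $\lim_{\rho\to\infty}\langle \mathbb{E}_{\theta+\rho d}(X), d\rangle = \sup_{y\in\m{P}}\langle y, d\rangle$ (valid with value $+\infty$ when the sup is infinite) as a standalone computation, which follows from dominated/monotone convergence applied to the reweighting of $\nu$ toward the face of $\m{P}$ exposed by $d$, and then the corollary is immediate.
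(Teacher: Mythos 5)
Your proposal is correct in substance, but it follows a genuinely different route from the paper. The paper proves the corollary by reading off the pointwise limit of the likelihood value $p_{\theta+\rho d}(x)$ at the observed point: for $d \in \m{ri}(N_F)$ the limit is positive by Corollary~\ref{cor:main}; for $d \in \m{rb}(N_F)$ it uses Lemma~\ref{lem:nf} (part 4) to place $d$ in $\m{ri}(N_{F'})$ for a face $F' \supset F$, and since $x \in F'$ the limiting density at $x$ is still positive; for $d \notin N_F$ it shows the limit is $0$ (either because $d \notin \mathcal{N}(\m{P})$, or because $d \in \m{ri}(N_{F^*})$ with $x \notin F^*$), so $-\ell_x \to \infty$. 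You instead work at first order: you differentiate $-\ell_x$ along the ray, use that $\rho \mapsto \langle \mu_\rho, d\rangle$ is nondecreasing by convexity, and reduce everything to the support-function identity $\lim_{\rho\to\infty}\langle \mathbb{E}_{\theta+\rho d}(X), d\rangle = \sup_{y\in\m{P}}\langle y,d\rangle$, so that $d$ is a direction of recession precisely when $x$ maximizes $\langle d,\cdot\rangle$ over $\m{P}$, i.e.\ when $d \in N_F$. This is closer in spirit to Geyer's directions-of-recession viewpoint (the recession function of $\psi$ is the support function of the convex support); once the identity is isolated it removes the case analysis over the normal fan entirely, whereas the paper's argument stays within its Theorem~\ref{thm:main}/Corollary~\ref{cor:main} machinery and yields the almost-everywhere density-limit picture as a byproduct. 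Two small remarks: your detour through an auxiliary direction $d' \in \m{ri}(N_F)$ in the easy inclusion is unnecessary and somewhat misleading (the behavior that matters is along $d$ itself); the direct argument you also mention, namely $\langle d, y\rangle \le \langle d, x\rangle$ for all $y\in\m{P}$ hence $\psi(\theta+\rho d) \le \psi(\theta)+\rho\langle d,x\rangle$ so $-\ell_x$ is nonincreasing along the ray, is the clean way to do it. And the support-function identity, which carries the whole burden of the converse inclusion, is asserted rather than proved; it is true and standard, but its proof needs a little care in controlling the contribution to the expectation from the part of the support where $\langle d, \cdot\rangle$ is far from its supremum, so it should be written out (or cited) rather than left as a remark.
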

In particular, when $x \in \m{relint}(\m{P})$, i.e. when the MLE exists, the corresponding recession cone is just the point $\{ 0 \}$ (since $\m{dim}(\m{P}) = k$), so that the negative log-likelihood function does not have any direction of recession and, therefore, its supremum is achieved at one parameter point $\widehat{\theta} \in \mathbb{R}^k$ with finite norm, namely the MLE. On the other hand, when the MLE is nonexistent, the likelihood function increases for any sequence of natural parameters   with norm diverging to infinity along any direction $d \in N_F$, where $N_F$ is the normal cone of the face of $\m{P}$ containing the observed sufficient statistics in its relative interior. %In contrast the  estimate of teh mean value parameter corresponding sequence of mean values converges gracefully to the appropriate point on the boundary of $\m{P}$. 

%\ale{HERE: During the preparation of the paper, we learned of similar results by \cite{GEYER:08}, which rely on a different approach closer to convexity theory.}

%Then, combining Corollary \ref{cor:dr} with Lemma \ref{lem:nonid}, we can conclude that, when $x \in \m{relint}(F)$, the extended MLE will be an affine subspace of dimension  $\m{dim}(N_F)$, as shown next.

We note that Corollary \ref{cor:dr} could be stated in a more general form. Indeed, for any $\xi \in \m{P}$, letting $\ell_\xi \colon \Theta \mapsto \mathbb{R}$ be given by
\[
\ell_\xi(\theta) = \langle \theta , \xi \rangle - \psi(\theta),
\]
it can be verified that the proof of Corollary \ref{cor:dr} still holds with $\ell_x$ replaced by $\ell_\xi$. Though theoretically relevant, this fact has little practical value.

During the preparation of the paper, we learned of similar results in \cite{GEYER:08}, which are based on the characterization of the convex support in term of the tangent cones and normal cones. While his analysis applies to more general classes of exponential families, our results are more refined, as we take full advantage of the polyhedral assumption and establish a more direct connections between the extended families and the cones in the normal fan.

By combining the results derived so far, we next show that, when $x \in \m{relint}(F)$, the extended MLE will be the affine subspace of dimension  $\m{dim}(N_F)$ given by $\widehat{\theta}_F$, where $\mathbb{E}_{\widehat{\theta}_F} = x$. Though not entirely a new result \citep[see][Chapter 6]{BRW:86},  our proof and the characterization of $\widehat{\theta}_F$ in terms of $N_F$ is novel.

%In particular, the extended MLE will be found traveling along any such directions. %form a cone of dimension $\m{dim}(N_F)$ (or $\m{F}$?).

\begin{corollary}\label{cor:uno}
Let $x \in \m{relint}(F)$ and $\widehat{\theta}_F$ be the congruence class of $\theta$ modulo $\m{lin}(N_F)$ such that $\mathbb{E}_{\widehat{\theta}_F}[X] = x$. Then,
\[
\sup_{\theta \in \Theta} p_{\theta}(x) = p^F_{\widehat{\theta}_F}(x).
\]
\end{corollary}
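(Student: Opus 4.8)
The plan is to combine the convergence result of Corollary~\ref{cor:main} with the recession-cone characterization of Corollary~\ref{cor:dr} and the elementary fact that a proper convex function either attains its infimum or has its infimum approached along a direction of recession. Fix $x \in \m{relint}(F)$. By Corollary~\ref{cor:dr}, the recession cone of the proper convex function $-\ell_x$ is exactly $N_F$. Since $-\ell_x$ is convex and $N_F$ is its recession cone, on any line $\theta \mapsto \theta + \rho d$ with $d \in N_F$ the function $\rho \mapsto -\ell_x(\theta + \rho d)$ is nonincreasing (for a convex function, a direction of recession makes the one-dimensional restriction monotone nonincreasing), so
\[
\sup_{\theta \in \Theta} \ell_x(\theta) = \sup_{\theta \in \Theta} \; \lim_{\rho \to \infty} \ell_x(\theta + \rho d).
\]

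Next I would pin down the limit. Pick any $\theta \in \Theta$ and any $d \in \m{ri}(N_F)$, and let $\{\theta_n\}$ be the $(\theta, d, \{\rho_n\})$-sequence with $\rho_n \to \infty$. By part~1 of Corollary~\ref{cor:main}, $p_{\theta_n} \to p^F_{\theta_F}$ pointwise $\nu$-a.e., and in particular $p_{\theta_n}(x) \to p^F_{\theta_F}(x)$ because $x \in \m{relint}(F) \subset \m{supp}(\nu_F)$ by assumption (A3) — here one must take a tiny bit of care that the $\nu$-null set where convergence may fail does not contain the atom at $x$, which is fine since $\nu(\{x\})>0$. Hence $\ell_x(\theta_n) \to \log p^F_{\theta_F}(x)$, and combined with the monotonicity above this gives $\sup_\theta \ell_x(\theta) = \log p^F_{\theta_F}(x)$ for \emph{every} $\theta_F$ arising this way. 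It remains only to identify which congruence class this is: by the discussion around equations~(\ref{eq:mle.F})--(\ref{eq:Ex}), the densities $p^F_{\theta_F}$ realizing a given mean value $\m{ri}(F)$ form exactly one congruence class modulo $\m{lin}(N_F)$, and by Corollary~\ref{cor:main} the mean value parameters converge $\mu_n \to \mu^F \in \m{relint}(F)$. I would then argue $\mu^F = x$: since by Theorem~\ref{thm:main}'s proof $\mu_n$ stays bounded and converges to the point of $\m{relint}(F)$ selected by the direction $d \in \m{ri}(N_F)$, and since along a direction of recession the likelihood increases toward a supremum whose achieving distribution must satisfy the moment equation on $F$, the limiting mean is the observed $x$ itself. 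Thus the maximizing density is $p^F_{\widehat\theta_F}$ with $\mathbb{E}_{\widehat\theta_F}[X] = x$, which is the stated $\widehat\theta_F$.

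A cleaner route for the last identification, which I would actually use, is: $-\ell_x$ restricted to the quotient space $\Theta_F = \Theta / \m{lin}(N_F)$ is, by Lemma~\ref{lem:nonid}, a \emph{strictly} convex function on the $\m{dim}(F)$-dimensional parameter of the minimal reduction of $\mathcal{E}_F$; $x \in \m{relint}(F)$ means the reduced sufficient statistic lies in the relative interior of the convex support of $\mathcal{E}_F$, so by Theorem~\ref{thm:mle} applied to $\mathcal{E}_F$ the function $-\ell^F_x$ attains a unique minimum at the congruence class solving $\nabla\psi_F(\theta)=x$, i.e. at $\widehat\theta_F$. So $\inf_{\theta \in \Theta}(-\ell_x(\theta)) \le -\ell^F_{\widehat\theta_F}(x)$ trivially from the first paragraph, and the reverse inequality $\sup_\theta p_\theta(x) \le p^F_{\widehat\theta_F}(x)$ follows because any value $p_\theta(x)$, $\theta \in \Theta$, is strictly below the common supremum of the family $\mathcal{E}_{\m P}$, which the a.e.\ limit shows equals $p^F_{\widehat\theta_F}(x)$.

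\textbf{Main obstacle.} The delicate step is the interchange of ``supremum over $\Theta$'' with ``limit along a recession direction'' together with the pointwise-a.e.\ (rather than everywhere) nature of the convergence in Corollary~\ref{cor:main}: one has to be sure that (i) the monotone behavior of $-\ell_x$ along $d \in N_F$ is strict enough (or at least nonincreasing) so that no mass of the supremum is lost at finite $\rho$, which needs the recession-cone fact of Corollary~\ref{cor:dr}, and (ii) the single point $x$ is not in the exceptional $\nu$-null set, which requires invoking (A3) and $\nu(\{x\})>0$. Once these two points are handled, the proof is a short assembly of the previously established corollaries.
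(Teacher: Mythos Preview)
Your overall strategy is correct and close to the paper's: invoke the recession-cone characterization (Corollary~\ref{cor:dr}), push parameters to infinity along a direction in $N_F$, identify the limit via Corollary~\ref{cor:main} (the paper uses the monotone version, Lemma~\ref{cor:up}), and then optimize over the face family $\mathcal{E}_F$. Your use of the abstract convexity fact that $-\ell_x$ is nonincreasing along any recession direction is a genuine simplification: fixing a single $d \in \m{ri}(N_F)$ lets you avoid the paper's separate treatment of $d \in \m{rb}(N_F)$, where the paper shows the limit lands in $\mathcal{E}_{F^*}$ for some $F^* \supsetneq F$ and is strictly smaller because $\psi^{F^*}(\theta) > \psi^F(\theta)$.

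There is, however, a real misstatement in your second paragraph: ``$\sup_\theta \ell_x(\theta) = \log p^F_{\theta_F}(x)$ for \emph{every} $\theta_F$'' is false --- different starting points $\theta$ produce different congruence classes $\theta_F$ and hence different limiting values. What your monotonicity argument actually delivers is
\[
\sup_{\theta \in \Theta} \ell_x(\theta) \;=\; \sup_{\theta \in \Theta}\,\lim_{\rho\to\infty}\ell_x(\theta+\rho d) \;=\; \sup_{\theta_F \in \Theta_F} \log p^F_{\theta_F}(x),
\]
since as $\theta$ ranges over $\Theta$ the class $\theta_F$ ranges over all of $\Theta_F$. Your subsequent attempt to argue directly that $\mu^F = x$ from the convergence of means is circular; that is precisely the identification still to be made. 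Your ``cleaner route'' is the correct fix and coincides with the paper's final step: once $\sup_\theta p_\theta(x) = \sup_{\theta_F} p^F_{\theta_F}(x)$ is in hand, the right-hand supremum is attained at $\widehat{\theta}_F$ by the first-order conditions~(\ref{eq:mle.F}) (equivalently, by Theorem~\ref{thm:mle} applied to the minimal reduction of $\mathcal{E}_F$, using $x \in \m{relint}(F)$).

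On the a.e.\ worry: the proof of Theorem~\ref{thm:main} in fact establishes pointwise convergence at \emph{every} $x_0 \in \m{P}$ (see equation~(\ref{eq:up})), so no exceptional set needs excluding. Your appeal to $\nu(\{x\})>0$ is unnecessary and would not cover the case where $x \in \m{relint}(F)$ happens not to be a support point of $\nu$, which the corollary as stated permits.
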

%\ale{NOOO} We remark that, by Lemma \ref{lem:nonid}, $\ell_x(\theta)$ is constant for all $\theta \in \widehat{\theta}_F$. Using convexity theory again  \citep[see][Chapter 8]{ROCKA:70}, $\m{lin}(N_F)$ is the constancy space of $\ell_x$, so that,  as $\m{P}$ is full-dimensional, the MLE exists if and only if the constancy space is the trivial subspace $\{ 0 \}$. 
%This last corollary implies that the maximum likelihood estimate of $\theta$ when $x$ is the observed sufficient statistic is a linear subspace of dimension. 
%Part 2. of Theorem \ref{thm:main} further shows that nonexistence of the MLE of the natural parameter implies that the likelihood function increases along sequences of parameters with norm diverging to infinity. In contrast, 

For completeness, we conclude this section by linking our discussion with alternative characterizations of the closure of the family $\mathcal{E}_{\m{P}}$ existing in the literature, which could be easily obtained using Theorem \ref{thm:main} \citep[see, in particular,][]{CS:01,CS:03,CS:05,CS:08}.
\begin{corollary}\label{cor:tre}
For any $(\theta, \{ \rho_n \}, d)$-sequence $\{ \theta_n \}$ with $d \in \m{ri}(N_F)$,%, and any $\zeta \in \m{lin}(N_F)$,
\begin{enumerate}
\item[i)] $P_{\theta_n} \stackrel{\m{TV}}{\rightarrow} P^F_{\theta_F}$, where $ \stackrel{\m{TV}}{\rightarrow}$ denotes convergence in total variation;
\item[ii)] $\lim_n K(P^F_{\theta_F},P_{\theta_n}) = 0$, where $K(P,Q)$ is the Kullback-Lieber divergence of $P$ from $Q$; 
\item[iii)] $P_{\theta_n} \Rightarrow P^F_{\theta_F}$, where the $\Rightarrow$ denotes convergence in distribution.
%\item[iv)] $\Lambda_{\min}(I(\theta_n)) \rightarrow \Lambda_{\min}(I_F(\theta+\zeta)) = 0$, where $I(\cdot)$ and $I_F(\cdot)$ denote the Fisher information matrices for $\mathcal{E}_{\m{P}}$ and $\mathcal{E}_{F}$, respectively, and $\Lambda_{\min}(A)$ is the smallest eigenvalue of $A$. 
\end{enumerate}
\end{corollary}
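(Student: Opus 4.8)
The plan is to deduce all three convergence modes from the two statements already established in Corollary \ref{cor:main}, namely that for a $(\theta,\{\rho_n\},d)$-sequence with $d \in \m{ri}(N_F)$ one has $p_{\theta_n} \to p^F_{\theta_F}$ pointwise $\nu$-a.e., together with $\mu_n \to \mu^F \in \m{relint}(F)$. Since $\nu$ has countable support, the measures $P_{\theta_n}$ and $P^F_{\theta_F}$ are all dominated by $\nu$, so total variation distance is $\frac{1}{2}\sum_{x \in \m{supp}(\nu)} |p_{\theta_n}(x) - p^F_{\theta_F}(x)|$, where for the limiting family we interpret $p^F_{\theta_F}(x) = 0$ for $x \notin F$. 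The key analytic tool is Scheff\'e's lemma: pointwise a.e.\ convergence of a sequence of probability densities to a probability density (with respect to a fixed reference measure) automatically upgrades to $L^1(\nu)$ convergence, hence to convergence in total variation. This gives (i) immediately, and (iii) follows from (i) since total variation convergence implies weak convergence. The only point requiring a line of justification is that the pointwise limit is genuinely a probability density, i.e.\ that no mass escapes; but this is exactly guaranteed by the companion statement $\mu_n \to \mu^F \in \m{relint}(F)$, which shows the limiting distribution $P^F_{\theta_F}$ is a bona fide probability measure supported on $F$, so $\int p^F_{\theta_F}\,d\nu = 1$ and Scheff\'e applies.

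For (ii), I would write the Kullback--Leibler divergence explicitly as
\[
K(P^F_{\theta_F}, P_{\theta_n}) = \sum_{x \in \m{supp}(\nu_F)} p^F_{\theta_F}(x) \log \frac{p^F_{\theta_F}(x)}{p_{\theta_n}(x)},
\]
which, on substituting the exponential-family form $p^F_{\theta_F}(x) = \exp\{\langle x,\theta_F\rangle - \psi^F(\theta_F)\}$ and $p_{\theta_n}(x) = \exp\{\langle x,\theta_n\rangle - \psi(\theta_n)\}$, collapses to the affine-in-$x$ expression
\[
K(P^F_{\theta_F}, P_{\theta_n}) = \langle \mu^F, \theta_F - \theta_n \rangle - \psi^F(\theta_F) + \psi(\theta_n),
\]
using $\mathbb{E}^F_{\theta_F}[X] = \mu^F$. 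Writing $\theta_n = \theta + \rho_n d$ with $d \in \m{ri}(N_F)$, and recalling from Lemma \ref{lem:nonid} and the definition of $\theta_F$ that $\theta_F$ is the congruence class of $\theta$ modulo $\m{lin}(N_F)$, the inner product $\langle \mu^F, \theta_F - \theta_n\rangle$ is well defined because $\mu^F \in \m{relint}(F)$ is orthogonal to $\m{lin}(N_F) \ni \rho_n d$ up to the constant offset; more precisely $\langle \mu^F, d\rangle = \max_{y \in \m{P}}\langle d, y\rangle$ since $d \in N_F$ and $\mu^F \in F$, and this constant is absorbed. The whole quantity then reduces to $\psi(\theta_n) - [\,\psi^F(\theta_F) + \langle \mu^F, \theta - \theta_F\rangle + \rho_n \langle \mu^F, d\rangle\,]$, and the statement $\lim_n K = 0$ becomes the asymptotic identity $\psi(\theta + \rho_n d) = \rho_n \max_{y\in \m{P}}\langle d,y\rangle + \psi^F(\theta_F) + o(1)$, which is precisely the content of the log-partition-function expansion underlying the proof of Theorem \ref{thm:main}; I would cite the relevant intermediate estimate (the analysis around Equation (\ref{eq:remark})) rather than re-derive it. Alternatively, one can bypass the computation entirely: by lower semicontinuity of KL divergence under weak convergence one gets $\liminf_n K(P^F_{\theta_F},P_{\theta_n}) \geq K(P^F_{\theta_F},P^F_{\theta_F}) = 0$, and for the matching upper bound a dominated-convergence argument on the explicit affine formula above suffices once the $\psi$-expansion is in hand.

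The main obstacle is (ii): total variation and weak convergence are soft consequences of the a.e.\ convergence already proved, but the KL statement is genuinely quantitative and requires controlling the rate at which $\psi(\theta_n)$ grows, i.e.\ that $\psi(\theta + \rho_n d) - \rho_n \max_{y \in \m{P}}\langle d, y\rangle$ converges to the finite limit $\psi^F(\theta_F)$ rather than merely staying bounded. This is the same estimate that drives Theorem \ref{thm:main}, so the honest statement of the proof is that (i) and (iii) are corollaries of Scheff\'e applied to Corollary \ref{cor:main}, while (ii) is a corollary of the sharper form of the limit computation internal to the proof of Theorem \ref{thm:main}; I would organize the write-up to make that dependency explicit and keep the new content to the Scheff\'e invocation and the bookkeeping that identifies the KL divergence with the $\psi$-expansion error term.
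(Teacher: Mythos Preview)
Your argument is correct and is in fact more self-contained than the paper's own proof, which simply cites Lemma~\ref{lem:nonid} together with the general closure results of \cite{CS:03,CS:05} for parts (i) and (ii), and then notes that (iii) follows from (i). You instead derive (i) directly via Scheff\'e's lemma from the a.e.\ density convergence of Corollary~\ref{cor:main}, and reduce (ii) to the log-partition asymptotic $\psi(\theta+\rho_n d)-\rho_n\max_{y\in\m{P}}\langle d,y\rangle\to\psi^F(\theta)$ already established inside the proof of Theorem~\ref{thm:main}. Both arguments are valid; your route avoids appealing to external machinery at the cost of unpacking what is implicit in that machinery.

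Two small corrections. First, the intermediate estimate you need for (ii) is the computation leading to Equation~(\ref{eq:up}) (the monotone decrease $A_{0,n}+A_{<,n}\searrow e^{\psi^F(\eta)}$), not the analysis around Equation~(\ref{eq:remark}), which concerns Part~3 of Theorem~\ref{thm:main} and the case $d\notin\mathcal{N}(\m{P})$. Second, your ``alternative'' via lower semicontinuity of KL is not doing any work: it yields only the trivial bound $\liminf_n K\geq 0$, so the upper bound still rests entirely on the $\psi$-expansion. You may as well drop that sentence and state (ii) as a direct consequence of the expansion.
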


\subsection{Computational considerations}\label{sec:comput}

Based on our findings, we can make a few observations regarding the computational difficulties of finding the extended MLE, some of which are exemplified in the next result.

\begin{corollary}\label{cor:due}
Let $\{ \theta_n \}$ be a $(\theta, \{ \rho_n\}, d)$-sequence, with $d \in \m{ri}(N_F)$. Then, for every $\zeta \in \m{lin}(N_F)$, 
\begin{equation}\label{eq:se}
I(\theta_n) \rightarrow I_F(\theta + \zeta),
%\Lambda_{\min}(I(\theta_n)) \rightarrow \Lambda_{\min}(I_F(\theta + \zeta)) = 0,
\end{equation}
where convergence is pointwise. %, and $\Lambda_{\min}(A)$ is the smallest eigenvalue of $A$. 
\end{corollary}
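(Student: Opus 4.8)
\textbf{Proof proposal for Corollary~\ref{cor:due}.}

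The plan is to reduce the statement to the convergence of the densities established in Theorem~\ref{thm:main}, part~1, together with a uniform-integrability argument controlling the second moments. Recall that the Fisher information matrix of the original family at $\theta$ is the Hessian of $\psi$, equivalently the covariance matrix $\cov_\theta(X)$; similarly $I_F(\theta+\zeta)=\cov_{\theta+\zeta}^F(X)$ for the face family $\mathcal{E}_F$. Since $\{\theta_n\}$ is a $(\theta,d,\{\rho_n\})$-sequence with $d\in\m{ri}(N_F)$, Corollary~\ref{cor:main} part~1 gives $p_{\theta_n}(x)\to p^F_{\theta_F}(x)$ for $\nu$-a.e.\ $x$, and hence $p_{\theta_n}\to p^F_{\theta_F}$ pointwise on $\m{supp}(\nu)$, where by Lemma~\ref{lem:nonid} the limiting density may be represented by the class $\theta+\zeta$ for any $\zeta\in\m{lin}(N_F)$, so $p^F_{\theta_F}=p^F_{\theta+\zeta}$ as densities. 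Also, by Theorem~\ref{thm:main}, $\mu_n=\mathbb{E}_{\theta_n}[X]\to\mu^F=\mathbb{E}^F_{\theta+\zeta}[X]$.

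First I would write, entrywise for $i,j\in\{1,\dots,k\}$,
\[
I(\theta_n)_{ij}=\sum_{x\in\m{supp}(\nu)} x_i x_j\, p_{\theta_n}(x)\,\nu(x)-(\mu_n)_i(\mu_n)_j,
\]
and similarly $I_F(\theta+\zeta)_{ij}=\sum_{x\in\m{supp}(\nu_F)} x_i x_j\, p^F_{\theta+\zeta}(x)\,\nu_F(x)-\mu^F_i\mu^F_j$. The second terms converge because $\mu_n\to\mu^F$. For the first terms, the summands converge pointwise since $p_{\theta_n}(x)\to p^F_{\theta+\zeta}(x)$ for every $x$ (and the limit is $0$ exactly off the support of $\nu_F$, i.e.\ off the face $F$, by the structure of the limit density). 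Thus it suffices to justify interchanging limit and (countable) sum, for which I would invoke a dominated-convergence / Scheffé-type argument: it is enough to show that the family of measures $\{x\mapsto \|x\|_2^2\, p_{\theta_n}(x)\,\nu(x)\}_n$ is uniformly integrable, equivalently that $\sup_n \mathbb{E}_{\theta_n}\big[\|X\|_2^2\,\mathbf{1}\{\|X\|_2>M\}\big]\to 0$ as $M\to\infty$.

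The main obstacle is exactly this uniform tail bound on the second moments along the sequence $\theta_n=\theta+\rho_n d$. Here the polyhedral structure and the hypothesis $d\in\m{ri}(N_F)$ are essential. The idea is that because $d\in N_F$, the linear functional $\langle d,\cdot\rangle$ is maximized on $\m{P}$ exactly over $F$; moving in direction $\rho_n d$ exponentially suppresses mass away from $F$, at a rate that beats the polynomial growth $\|x\|_2^2$. Concretely, since $\m{supp}(\nu)$ is countable with convex hull the polyhedron $\m{P}=\m{Q}+\m{C}$, and since $d\in\m{ri}(N_F)\subseteq \m{C}^*$, one has $\langle d,x\rangle\le \max_{y\in\m{P}}\langle d,y\rangle=:c^*$ for all $x\in\m{P}$, with $\langle d,x\rangle=c^*$ iff $x\in F$; for $x\notin F$ there is a uniform gap depending on the (finitely many) facet structure near $F$. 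Combining this with the fact that $d\in\m{ri}(N_F)$ (not merely $N_F$) — which by the argument in the proof of Theorem~\ref{thm:main} keeps $\psi(\theta_n)-\rho_n c^*$ bounded and keeps the reweighted mass on points far from $F$ decaying geometrically in $\rho_n$ — yields $p_{\theta_n}(x)\le e^{-\rho_n(c^*-\langle d,x\rangle)+O(1)}\cdot(\text{ratio of partition functions})$, and summing $\|x\|_2^2$ against this bound over the countably many lattice-like points of $\m{supp}(\nu)$ gives a tail that is $o(1)$ uniformly in $n$ once $M$ is large, because the exponential decay in $\langle d,x\rangle$ dominates. I would extract the needed gap and geometric-decay estimates directly from equation~(\ref{eq:remark}) and the surrounding computation in the proof of Theorem~\ref{thm:main}, rather than redo them. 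Once uniform integrability is in hand, dominated convergence gives $\sum_x x_ix_j p_{\theta_n}(x)\nu(x)\to\sum_x x_ix_j p^F_{\theta+\zeta}(x)\nu_F(x)$, and subtracting the convergent mean terms yields $I(\theta_n)_{ij}\to I_F(\theta+\zeta)_{ij}$ for every $i,j$, i.e.\ the claimed pointwise (entrywise) convergence of matrices.
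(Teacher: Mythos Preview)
Your approach is sound and genuinely different from the paper's. The paper does not go through the covariance representation $I(\theta)=\cov_\theta(X)$ at all; instead it writes $I_{i,j}(\theta)=\partial_i\partial_j\psi(\theta)$, observes from the proof of Theorem~\ref{thm:main} that $\psi(\theta_n)\to\psi^F(\theta+\zeta)$, and then interchanges $\lim_n$ with the second partial derivatives by appealing to the analyticity of the cumulant generating function. Your moment-based route via Scheff\'e-type domination is more self-contained and arguably more transparent, since the paper's interchange of limit and differentiation is asserted rather than justified in detail.

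Two small corrections. First, your pointer to equation~(\ref{eq:remark}) is to the wrong part of the proof of Theorem~\ref{thm:main}: that display concerns the case $d\notin\mathcal{N}(\m{P})$. What you actually need is the computation around~(\ref{eq:up}), which shows that for $x_0\in F$ the denominator $A_{0,n}+A_{<,n}$ decreases to $e^{\psi^F(\eta)}$; equivalently $e^{\psi(\theta_n)}\ge e^{\rho_n c^*+\psi^F(\eta)}$ for all $n$, where $c^*=\max_{y\in\m{P}}\langle d,y\rangle$. Second, once you have that lower bound on the partition function, the uniform-integrability step collapses to a single dominating function: for every $x\in\m{supp}(\nu)$ and every $n$,
\[
p_{\theta_n}(x)=\frac{e^{\langle\eta,x\rangle+\rho_n\langle d,x\rangle}}{e^{\psi(\theta_n)}}\le e^{\langle\eta,x\rangle-\psi^F(\eta)}\,e^{-\rho_n(c^*-\langle d,x\rangle)}\le e^{\langle\eta,x\rangle-\psi^F(\eta)},
\]
since $\langle d,x\rangle\le c^*$ on $\m{supp}(\nu)\subset\m{P}$. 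The dominating function $g(x)=\|x\|_2^2\,e^{\langle\eta,x\rangle-\psi^F(\eta)}$ is $\nu$-integrable because $\eta\in\Theta$ is an interior point and hence $\mathbb{E}_\eta[\|X\|_2^2]<\infty$. Dominated convergence then gives $\mathbb{E}_{\theta_n}[X_iX_j]\to\mathbb{E}^F_{\theta+\zeta}[X_iX_j]$ directly, with no need for the separate tail-gap argument you sketched.
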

From the corollary and  equation (\ref{eq:madai}), we can infer that, when the MLE does not exist, maximizing the log-likelihood function using the Newton Rapson method, as well as virtually any other fastest ascent methods, may fail due to numerical instabilities. In fact, the Newton Rapson algorithm proceeds by finding a sequence $\{ \theta_n \}$ of natural parameters along which $\ell_x$ increases most rapidly. At each step of the procedure, the next point in the sequence is determined by the direction of fastest ascent of $\ell_{x}$, given by the inverse of the Hessian, e.g. by the inverse of $I(\theta_n)$.  However, for all $n$ large enough, these matrices will be  badly conditioned, since, at the optimum, the Fisher information matrix is not invertible (see equation \ref{eq:madai}). In addition, especially  when the observed statistics $x$ belong to the relative interior of a face of small dimension, these singularities can be dramatic, not to mention the fact that, unless $x$ lies on a the relative interior of a facet, there is an infinite number of directions along which the likelihood function increases.
It is apparent that these problems are even more accentuated in high-dimensional settings or whenever the data are sparse. 
From the statistical standpoint, equations (\ref{eq:se}) and  (\ref{eq:madai}) further imply that, when the MLE does not exist, the standard error may be quite large (in the limit, infinite), and that the number of degrees of freedom should be adjusted to reflect the non-estimability of some parameters. As a result, any hypothesis testing or model selection procedure that rely solely on these estimates should be regarded, at the very least, unreliable.
Based  on these considerations, it is clear that not only is the task of computing the extended MLE particularly daunting, but  the statistical interpretation of these quantities is also rather delicate.

 %an observed point on $\m{relint}(F)$ form a cone of dimension $\m{dim}(N_F)$ (or $\m{F}$?).

We refer the reader to \cite{GEYER:08} and \cite{ALE:06b} for different algorithmic approaches on computing the extended MLE for certain types of exponential models with polyhedral support for which a $\mathcal{V}$ representation of $\m{P}$ of the form (\ref{eq:P})  or (\ref{eq:Vrepre}) is either available or easily computable. We remark that, in order to determine the extended MLE, it is necessary not only to have an explicit representation of $\m{P}$  but, in addition,  to be able to have in closed form the log-partition functions $\psi^F$, for each face $F$. A class of models for which both conditions are satisfied is the class of the log-linear models. If this information is not available, one may resort to MCMC techniques for computing the MLE or a pseudo-MLE, as for the class of models to be described in the next section. See \cite{GT:92}, and \cite{H:03}, \cite{SNIJ:02}, \cite{WR:04}, \citet{STATNET}, \cite{HGH:08} and references therein.

As a final comment,  we point out that, while computing the extended MLE is very often a hard problem, deciding whether the MLE exists is typically more feasible, and can be accomplished using linear programming, provided an explicit representation, namely a $\mathcal{H}$ or a $\mathcal{V}$ representation, of $\m{P}$ is available. See Appendix C for details and also \cite{ERIKSSON:06} for an application to hierarchical log-linear models.

\section{Application to Exponential Random Graph Models}\label{sec:ERG}

We now apply some of the results from  the previous section to the class of exponential random graph models. The motivation for our choice is the attempt to explain certain features of ERG models that have been observed empirically and have been  collectively labeled as degeneracy \citep[see, e.g.,][]{H:03}. Our point of view is simply that there is nothing degenerate or unusual about these models, whose behavior can in fact be explained in a direct way using the properties of exponential families with  polyhedral support as described in Section \ref{sec:mainresult}.

Our arguments rely on a thorough analysis of one ERG model, described below in Section \ref{sec:one}, and on graphical renderings of Corollary \ref{cor:pol}, which we find particularly effective and elucidative of our results. We looked at a variety of other ERG models on 7,8 and 9 nodes, using different choices of the network statistics described below, and arrived to the same kind of conclusions we are about to present. 

Finally, we would like to emphasize that, as the log-partition function is not available in closed form, an exact analysis of ERG models on larger graphs is almost impossible. This is due to the need to enumerate all possible graph with a given number of nodes in order to evaluate that function, a task whose computational computationally becomes prohibitive very rapidly as the number nodes grow;  see  Equation (\ref{eq:ngraphs}) below and Table \ref{tab:tab1}.

\subsection{Introduction to ERG models}\label{sec:ERGintro}

There is an extensive literature of ERG models and their use in social network analysis. A partial but representative list of references  is: \cite{HL:81}, \cite{FS:86}, \cite{WP:96}, \cite{WR:04}, \cite{RA:07,R:07} and references therein. Below we briefly describe the settings for ERG models, in order to make explicit the connections with the material in the previous sections.

Consider the set $\mathcal{G}_g$ of all possible simple, i.e. unweighted, undirected and without loops, graphs  on $g$ nodes. Every such graph $x$ can be described by a 0-1 symmetric $g \times g$ adjacency matrix, whose $(i,j)$-th entry is $1$ if there exists an edge between the nodes $i$ and $j$ and $0$ otherwise. Thus,  $x$ can be represented as a ${g \choose 2}$-dimensional 0-1 vector.
The cardinality of $\mathcal{G}_g$ grows super-exponentially in the number of nodes $n$, namely
\begin{equation}\label{eq:ngraphs}
|\mathcal{G}_g| = 2^{{g \choose 2}},
\end{equation}
so that network modeling entails constructing probability distributions over very large discrete spaces (see Table \ref{tab:tab1}).

\begin{table}
\begin{center}
\begin{tabular}{|c|c|r|}\hline
Number of nodes: $g$ & Number of edges: ${g \choose 2}$ & Number of graphs: $|\mathcal{G}_g|$\\
\hline
7 & 21 & $2,097,152$ \\
8 & 28 & $268,435,456$ \\
9 & 36 & $ 68,719,476,736$ \\
10 & 45 & $35,184,372,088,832$\\\hline
\end{tabular}
\end{center}
\caption{Some information about the complexity of some ERG models on small graphs.}
\label{tab:tab1}
\end{table}

Let $T \colon \mathcal{G}_g \mapsto \mathbb{R}^k$ be a vector valued function of \textit{network statistics} quantifying the key features of interest of a given observed graph. 
In this article we are mostly concerned with ERG models arising from network statistics that capture rather general and aggregate features of the network. Typical examples of such statistics are  \citep[see, e.g.,][for more details]{G:07}:	
\begin{enumerate}
\item the number of edges: $ \sum_{i < j} x_{ij}$
\item the number of triangles: $\sum_{i < j < h}x_{ij}x_{jh}x_{ih}$
\item the $k$-degree statistic: $D_k(x) = \sum_{i=1}^g 1\{ d_i = k\}$, where $d_i = \sum_j x_{ij}$ is the degree of the $i$-th node and $0 \leq k \leq n-1$;
\item the number of $k$-stars: $\sum_{i=k}^{g-1} {i \choose k} D_i(x) $, $2 \leq k \leq n-1$, i.e. the number of distinct edges that are incident to the same node, where $D_i(x)$ is the $i$-the degree statistic given above;%ordered $(k+1)$-typles of nodes such that the first node is connected to the remaining $k-1$ nodes;
\item the alternating $k$-star statistic
\[
\sum_{i=2}^{g-1} (-1)^{i-1}\frac{S_i(x)}{\lambda^{2-i}},
\]
where $\lambda$ is a positive parameter.
\end{enumerate}

For all modeling purposes, these network statistics are effectively regarded as sufficient statistics and, by Koopman-Pitman-Darmois theorem, the resulting exponential family of distributions provides a  convenient statistical model for $\mathcal{G}_g$.  Formally, given a set of network statistics in the form of a $k$-valued function $T(\cdot)$ on $\mathcal{G}_g$, the ERG model $\mathcal{P} \equiv \{ Q_{\theta}, \theta \in \Theta \subseteq \mathbb{R}^k\}$ is the exponential family of probability distributions  over $\mathcal{G}_g$  with natural sufficient statistics $T(x)$ and base measure $\mu$ given by the counting measure on $\mathcal{G}_g$. Thus, for $\theta \in \Theta$, the density of $Q_{\theta}$ with respect to $\mu$ is
\[
\frac{d Q_\theta }{d \mu} (x) = q_{\theta}(x) = \exp \{ \langle T(x), \theta \rangle - \psi(\theta) \} = Prob\{ X = x \}.
\]

Let $\mathcal{T} = \{ t \in \mathbb{R}^k \colon t = T(x), x \in \mathcal{G}_g \}$ be the range of $T(\cdot)$ and $\nu$ the measure on $\mathcal{T}$  induced by $\mu$, namely
\[
\nu(t) = \mu \{ x \in \mathcal{G}_g \colon T(x) = t \} = \left| \{ x \in \mathcal{G}_g \colon T(x) = t \} \right|, \quad t \in \mathcal{T}.
\]
Then, the distribution of $T(X)$ belongs to the exponential family of distributions on $\mathcal{T}$ with base measure $\nu$,  natural parameter space $\Theta$ and densities
\[
p_{\theta}(t) = \exp \{ \langle t, \theta \rangle - \psi(\theta) \}, \quad \theta \in \Theta.
\]
Furtheremore, because of the discreteness of the problem, 
\[
Prob(T(X) = t )  = \int_{\{ x \in \mathcal{G}_g \colon T(x) = t \}} q_\theta(x) d \mu(x) = \sum_{\{ x \in \mathcal{G}_g \colon T(x) = t \}} q_\theta(x) = p_{\theta}(t) \nu(t). %\exp \{ \langle t(x), \theta \rangle - \psi(\theta) \}.
\]
Provided that the network statistics are affinely independent, as it is the case for the examples given above and as it can always be assumed through reduction to minimality,  the convex support $\m{P} = \m{convhull}(\mathcal{T})$ is a $k$-dimensional ploytope. Recalling that $\nu$ is finite, it is easy to see that the assumptions (A1)-(A4) of Section \ref{sec:setting} are verified, and the theory developed above applies.%since $\mathcal{T}$ is a finite set in general position,

Despite its simplicity and interpretability, we need to emphasize that ERG modeling based on simple, low dimensional network statistics such as the ones described above can be rather coarse. In fact, those ERG models are invariant with respect to the relabeling of the nodes and even to changes in the graph topologies, depending on the network statistics themselves. As a result, they do not specify distributions over graphs per se, but rather distributions over large classes of graphs having the same network statistics.   Consequently, as we repeatedly observed in our experiments and as elucidates in the example we are about to present, it may very well be the case that many graphs having very different topologies still belong to the same class and, therefore, are considered as equivalent. While this feature may be well suited  for defining distributions over large thermodynamic ensambles in statistical physics, its use in other contexts in which the nodes are not interchangeable may be questionable. 
This is certainly not a common feature of all ERG models: for example, the $p_1$ model by \cite{HL:81} and the Markov graphs by \cite{FS:86} are based on much finer network statistics whose dimension, unlike the aggregate statistics reported above, increases with the size of the network. These more complex models represent explicitly distributions of individual networks rather than of classes on networks:  both $p_1$ and Markov graph models are log-linear models over the probability of edges \citep[see][]{FW:81}. However, they also present difficulties. In fact, not only is the MLE not likely to exist if the observed network is even moderately sparse, but the asymptotics of these models as $g$ grows remains unknown \citep[see, e.g.][for some comments on $p_1$ models]{H:81}. While the theory developed in the previous sections apply to all ERG models, our analysis below is more directly relevant to models arising from simpler network statistics quantifying macroscopic properties of the network.

\subsection{Our Running Example}\label{sec:one}

We will be using throughout the example of a ERG model on $\mathcal{G}_9$ with two-dimensional network statistic $T(x) = (T_1(x),T_2(x)) \in \mathbb{N}^2$, where $T_1(x) $ is the number of edges and $T_2(x)$ is the number of triangles.  Note that this model is not hierarchical in the sense of~\cite{bishop:fienberg:holland:1975} and \cite{lee:nelder:1996}, since we do not include the network statistic for the number of 2-stars, which lie intermediate to edges and triangles.  The lack of hierarchical model structure affects the interpretation of the exponential family parameters corresponding to $T(x)$ but turns out not to be the cause of the degeneracies we illustrate.  We have actually produced similar results for models which are fully hierarchical, but the results are easier to demonstrate in the context of this ERG model with a  two-dimensional network statistic. 

The number of distinct graphs for this  $\mathcal{G}_9$  example is $2^{36}$, while the number of two-dimensional distinct network statistics is only ${9 \choose 2} {9 \choose 3} = 444$. 
The natural parameter space is the entire $\mathbb{R}^2$. The support of the distribution of $T(X)$ is shown in Figure \ref{fig:9.supp}. The convex support for the induced family of distributions of network statistics is a polygon with 6 edges, whose boundary is depicted with the red solid line. Out of the possible $444$ points, $29$ actually lie on the boundary. The induced base measure $\nu$ for this family, i.e. the frequencies of each possible pair of network statistics, is indicated by the color shading of the circles. The maximal value of $\nu(t)$ is $1,876,664,161$, the median value is $2,741,130$, while the first and third quartiles are  $545,265$ and $79,674,084$, respectively. Figure \ref{fig:9.quant} shows a plot of the empirical quantile function for $\nu(t), t \in \mathcal{T}$, which indicates that few network configurations are much more frequent than others.

\begin{figure}[ht]
\centering
\includegraphics[width=4in]{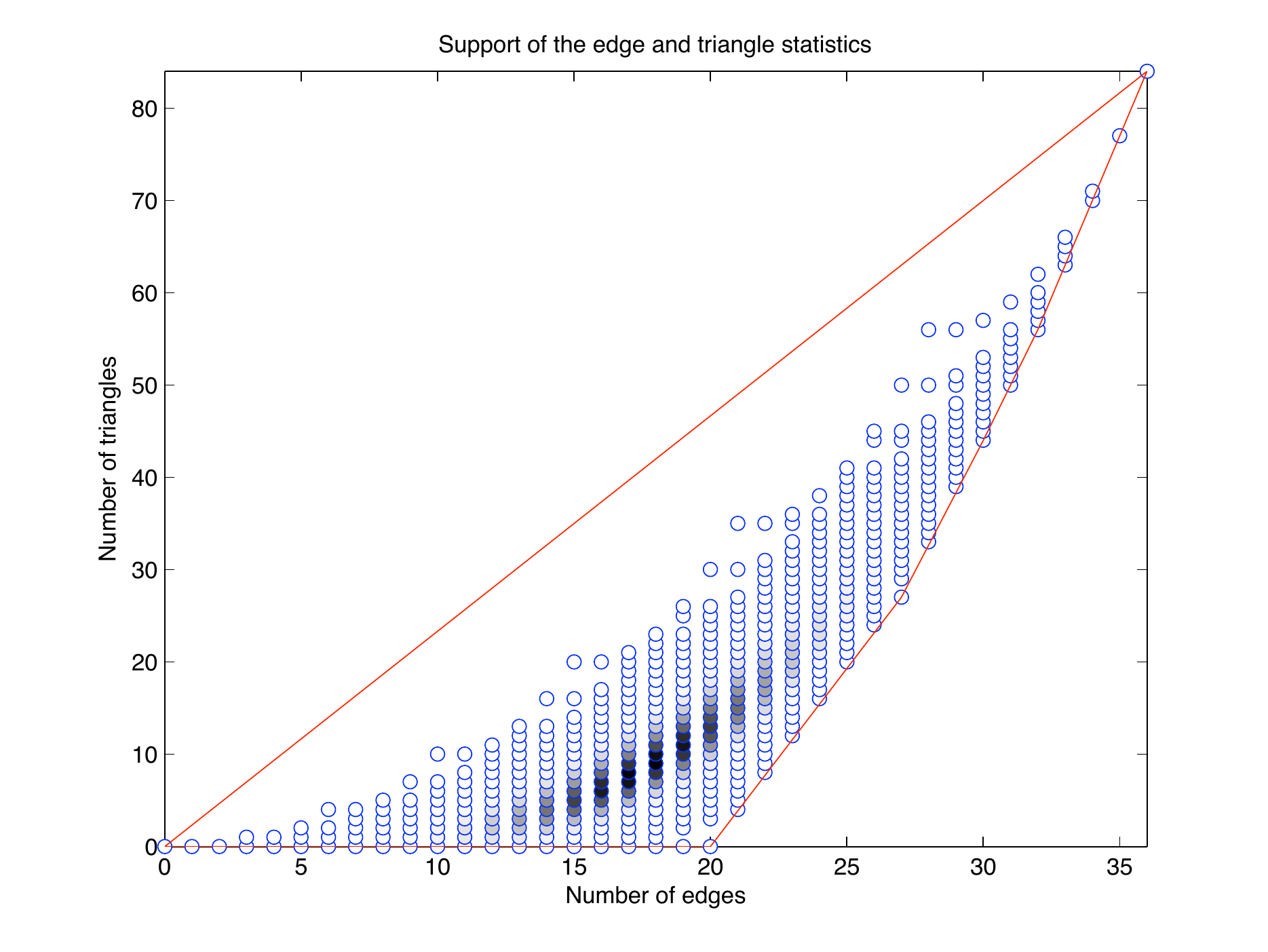}%{madai_or.eps}%{graph9_supp_sqrt.eps}%{madai_or.eps}%graph9_supp_colored_different.eps}	
\caption{Support of the distribution of the network statistics for the ERG model on  $\mathcal{G}_9$ described in  Section \ref{sec:one}. The color shading indicates the squared root of the relative frequency of each point, namely $\nu(t)$ (darker colors correspond to higher-frequency values of $t$). The solid red line is the boundary of the convex support.}
\label{fig:9.supp}
\end{figure}

\begin{figure}[ht]
\centering
\includegraphics[width=4in]{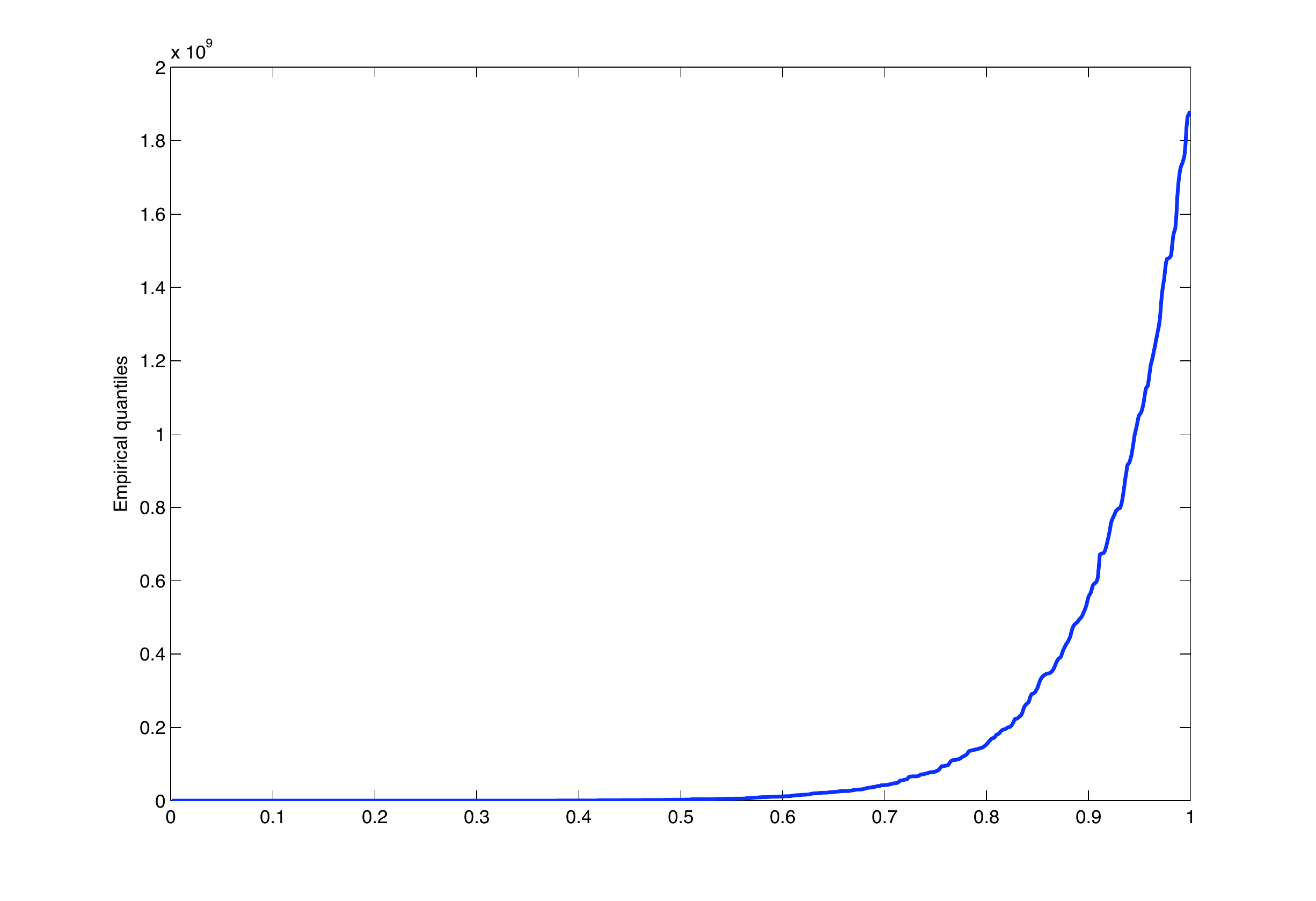}%{graph9_supp_sqrt.eps}%{madai_or.eps}%graph9_supp_colored_different.eps}	
\caption{Empirical quantiles of the values $\{ \nu(t), \colon t \in \mathcal{T}\}$ for the base measure of the family described in Section \ref{sec:one}.}%Support of the distribution of the network statistics for the ERG model on  $\mathcal{G}_9$ of Example \ref{sec:one}. The color shading indicates the squared root of the relative frequency of each point, namely $\nu(t)$ (darker colors correspond to higher-frequency values for $t$). The solid red line is the boundary of the convex support.}
\label{fig:9.quant}
\end{figure}

\subsection{Degeneracy}\label{sec:degeneracy}

%\ale{try to define degeneracy}

The notion of \textit{degeneracy} is central to ERG modeling, and has been investigated  in various forms in the more recent literature. See \cite{SNIJ:02}, \cite{R:07}, \cite{RA:07} and, in particular, \cite{H:03} and \cite{HGH:08}, just to mention a few. 
Degeneracy refers quite broadly to a variety of  features, typically undesirable and surprising, of ERG models that have been observed empirically. In the literature, degeneracy (or near degeneracy) is  used to describe any of the following, often interrelated, phenomena:
\begin{enumerate}
\item when a combination of ERG parameters $\theta$ implies that only a very small number of distinct graphs have substantial non-zero probabilities; in the most extreme cases, these configurations are the empty graph or the fully connected graph;
\item when, for a certain combination of ERG parameters $\theta$, the density function $p_{\theta}$ has multiple, clearly distinct, modes, and there are only very few network configurations that have non-zero probabilities, often radically different from each other;
\item when the MLE of $\theta$ is nonexistent or hard to obtain, or the MCMCMLE of $\theta$ fails to converge or appears to converge extremely slowly;
\item when the estimate of $\theta$ would make the observed network configuration very unlikely.
\end{enumerate}

Each of the situations just described offers  strong evidence of misspecifcation or, at the very least, of the inability of the model to describe in a realistic fashion the observed network. To our knowledge, \cite{H:03} is the only attempt to characterize degeneracy in a theoretical way, at least the kind of degeneracy yielding unstable maximum likelihood estimates, with emphasis on MCMCMLE methods. %Handcock's analysis was the motivated us to  further investigate, through the theory of exponential families, degeneracy in ERG models.

\begin{figure}[h!]
\centering
\begin{tabular}{c}
{\bf (a)}\\
 \includegraphics[width=4in]{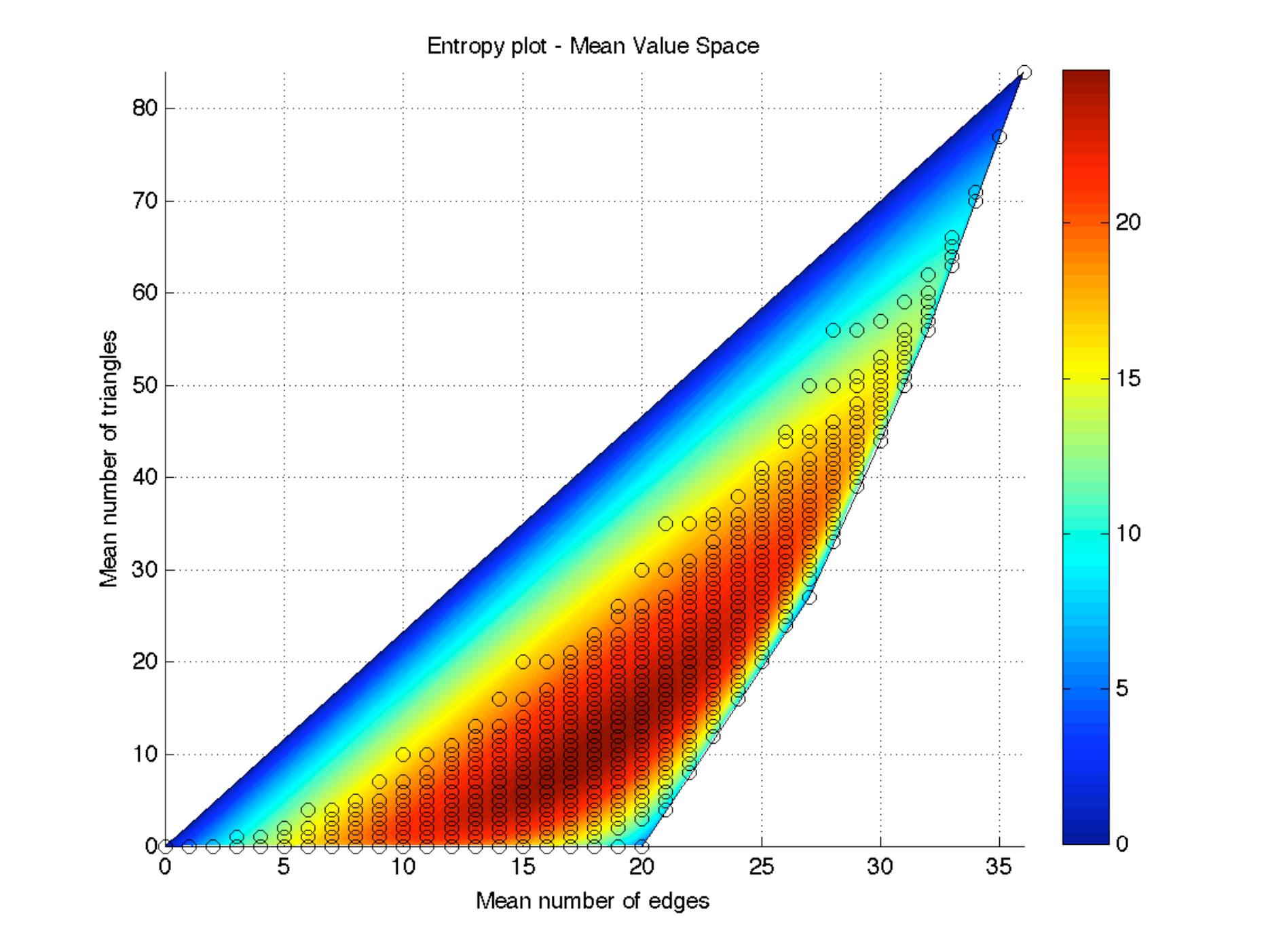}\\%{graph9_mean_par_colors.eps}\\
 {\bf (b)}\\
 \includegraphics[width=4in]{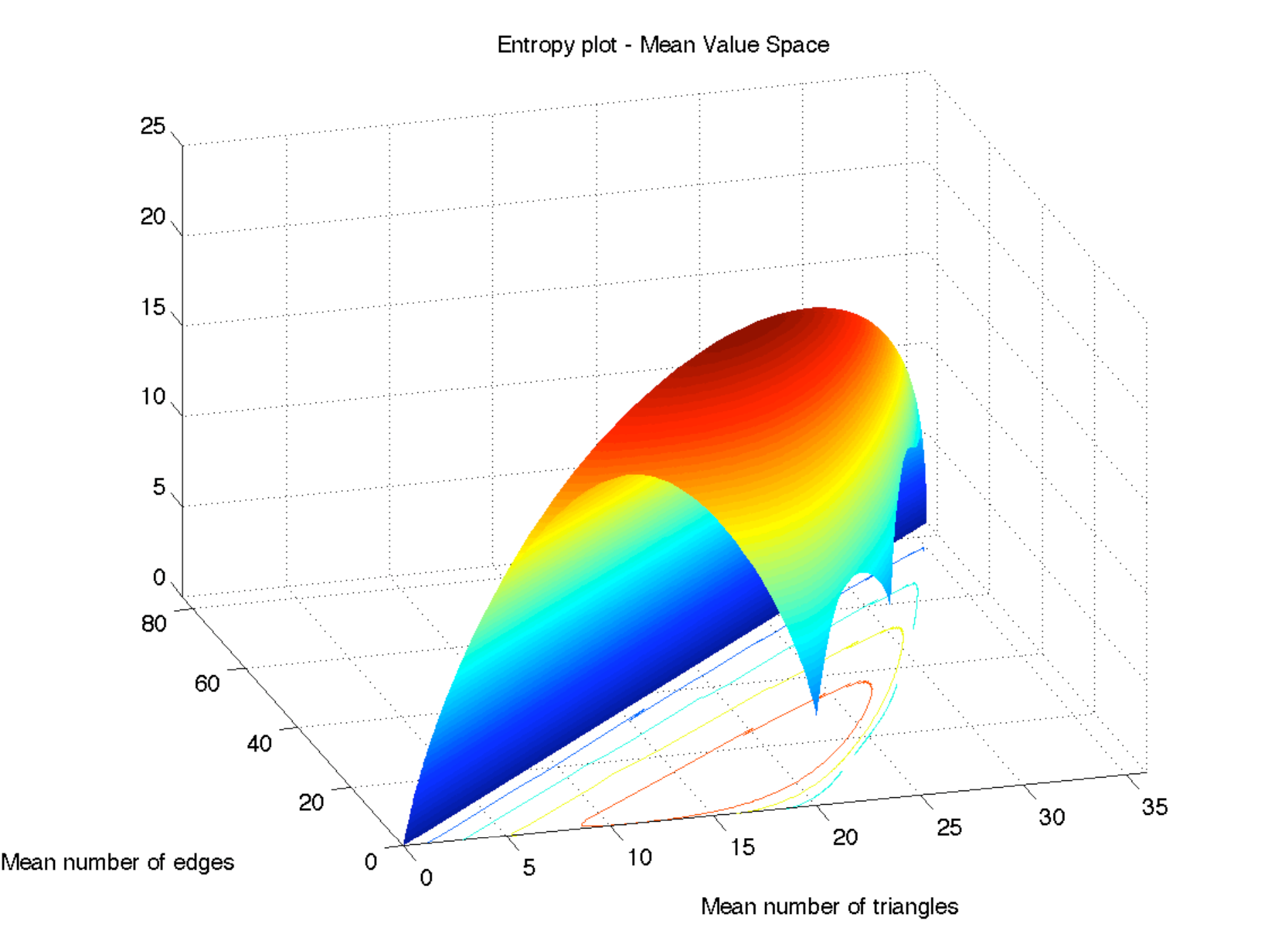}\\
\end{tabular}
\caption{Plots of the entropy function $V(\cdot)$ under mean value parametrization for the ERG model of Section \ref{sec:one}. Part {\bf a)}: 2-dimensional plot over the convex support $\m{P}$; the points correspond to the support of the family. Part {\bf b)}: surface plot.}
\label{fig:9.mean}
\end{figure}

\subsection{Degeneracy via Entropy Functions}\label{sec:entropy}
We based our analysis on a basic observation: a common feature of all the various instances of degenerate ERG models is that the corresponding distributions are highly concentrated on network configurations associated to a small number of network statistics. Therefore, in order to capture the overall degree of concentration of the family $\mathcal{P}$, we turn to Shannon's entropy function, the rationale being that degenerate models have lower entropy.
%rather than Rather than computing the probability of a a degenerate graph, we instead consider Shannon's entropy of the distributions as a function of the natural parameter and natural sufficient statistics. 

Shannon's entropy function $S \colon \Theta \rightarrow \mathbb{R}$ is defined as 
\[
S(\theta) =  - \sum_{x \in \mathcal{G}_g} q_\theta(x) \log q_{\theta}(x)  = - \sum_{t \in \mathcal{T}} p_\theta(t) \log p_{\theta}(t) \nu(t),
\]
where the second summation involves a much smaller number of terms. Notice that, for every $\theta \in \Theta$,
\[
0 \leq S(\theta) \leq {g \choose 2} \log 2,
\]
the lower and upper bounds corresponding to a degenerate distribution with point mass at one graph, and to the uniform distribution over $\mathcal{G}_g$ (which is within the family if $\nu(t)$ is constant across $\mathcal{T}$ and $\theta = 0$), respectively. Furthermore, as $\psi$ is an analytic function of $\theta$, for every $\theta \in \Theta$, $S(\theta)$ is a smooth function of $\theta$.

%Since $\lim_{x \rightarrow 0} x \log x = 0$, $\lim_n S(\theta_n)$ exists for every sequence of natural parameters $\{ \theta_n\} \subset \mathbb{R}^k$. 
Noting that $\lim_{x \rightarrow 0} x \log x = 0$ and using the fact that $S(\theta)$ is bounded, by the dominated convergence theorem Corollary  \ref{cor:main} yields that,  for every $(\theta, \{ \rho_n\}, d)$-sequence $\{\theta_n \}$ with $d \in \m{ri}(N_F)$, 
\begin{equation}\label{eq:entr}
\lim_n S(\theta_n) = S_F(\theta_F) \equiv - \int_{\mathcal{T}} p_{\theta_F}(t) \log p_{\theta_F}(t) d \nu_F(t),
\end{equation}
for every face $F$ of $\m{P}$. 

On the other hand, because of the correspondence between natural and mean value parameters, the entropy function can be equivalently represented as a function over $\m{P}$. More precisely, we define $V \colon \m{P} \mapsto \mathbb{R}$ as follows: if  $\mu \in \m{relint}(\m{P})$,
\[
V(\mu) = S(\theta),
\]
where $\mu = \nabla \psi(\theta)$, while, for $\mu_F \in \m{relint}(F)$,
\[
V_F(\mu_F) = S_F(\theta_F),
\]
where  $\mu_F = \nabla \psi^F(\theta_F)$. Thus, if $\{ \theta_n \}$ is a $(\theta, \{ \rho_n\}, d)$-sequence with $d \in \m{ri}(N_F)$ and if $\mu_n = \mathbb{E}_{\theta_n}[T(X)]$, from Equation (\ref{eq:entr}) we obtain that
\[
\lim_n V(\mu_n) = V_F(\mu_F),
\]
where $\mu_F = \lim_n \mu_n$, with $V(\mu)$  a smooth function of $\mu$. Thus, we conclude that $S(\cdot)$ and $V(\cdot)$ have homeomorphic graphs and, therefore, they convey the same information.

Below, we use both entropy functions to illustrate the theory developed in Section \ref{sec:mainresult} and to provide some characterizations of degeneracy.

\begin{figure}[h!]
\centering
\begin{tabular}{c}
%{\bf a)} &   & {\bf b)}\\
{\bf (a)}\\
 \includegraphics[width=4in]{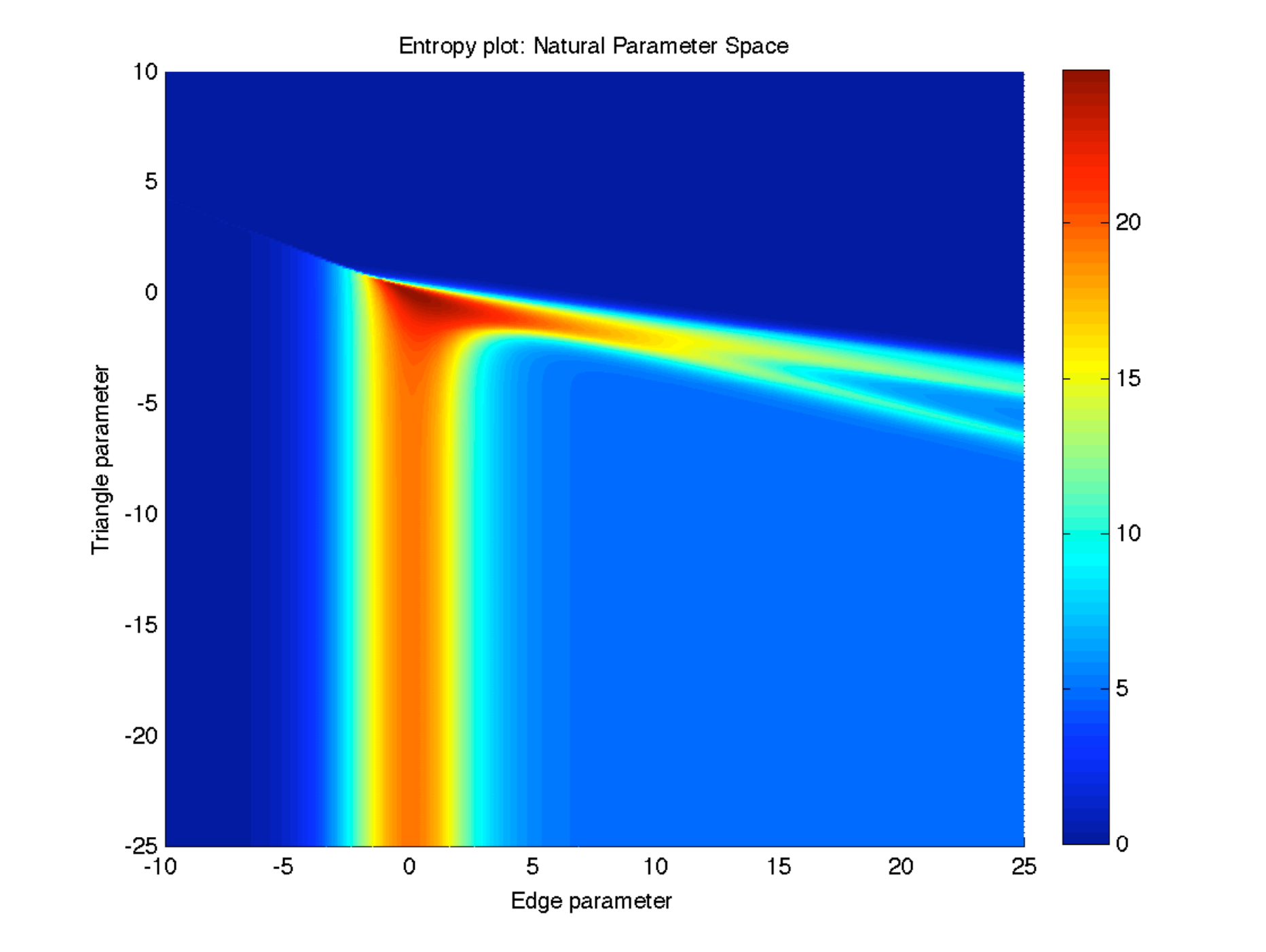}\\
{\bf (b)}\\
\includegraphics[width=4in]{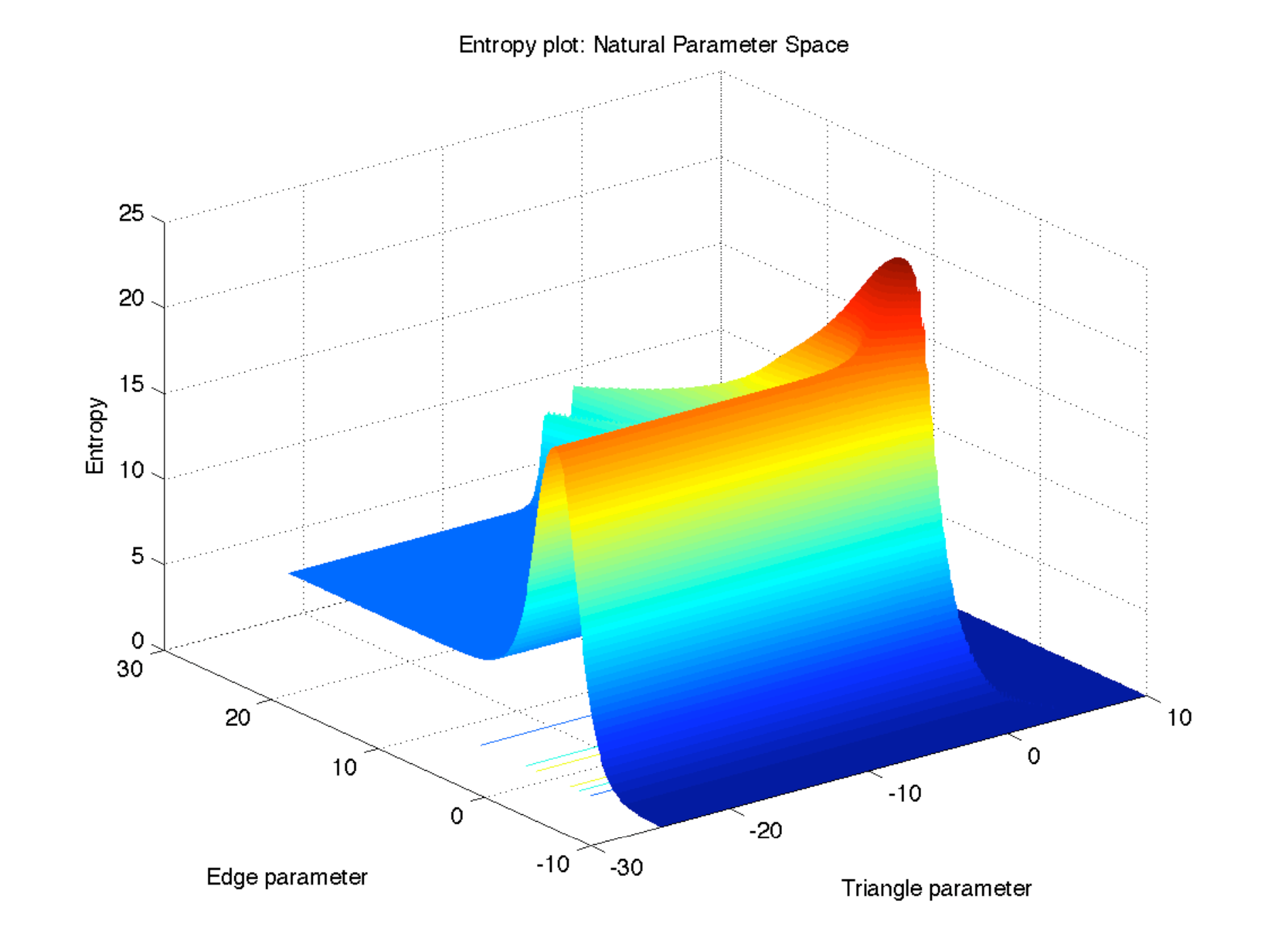} \\
\end{tabular}
\caption{Plots of the entropy function $S(\cdot)$ under natural parametrization for the ERG model of Section \ref{sec:one}. Part {\bf a)}: 2-dimensional plot over a square of the natural parameter space. Part {\bf b)}: surface plot.}
\label{fig:9.par}
\end{figure}

We start with Figures \ref{fig:9.mean} and \ref{fig:9.par}. The latter displays the entropy function $S(\theta)$ for the ERG model on $\mathcal{G}_9$ with network statistic taking values in $\mathbb{N}^2$, as described in Section \ref{sec:one}, and for values of $\theta$ in the rectangle $[10,25] \times [-25,10]$. The equivalent entropy function over the mean value space $V(\mu)$ is displayed in Figure \ref{fig:9.mean}, for the mean value parameters  $\{ \mu \colon \mu = \nabla \psi(\theta), \theta \in [10,25] \times [-25,10]\}$.
%{\bf a)} and {\bf b)}
Figures  \ref{fig:9.par} and \ref{fig:9.mean} offer two equivalent views of the exponential family $\mathcal{P}$ via the entropy functions $S(\theta)$ and $V(\mu)$. The mean value view in Figure \ref{fig:9.mean} is straightforward to interpret: the entropy function is a well behaved, strictly concave function that changes smoothly as the mean parameter varies inside the relative interior of $\m{P}$. Distributions with  mean value parameters lying well inside the cloud of points describing the support of the family have higher entropy, as their mass is distributed across a larger number of network configurations. In contrast, distributions with mean value parameters that are far removed from that cloud, including points very close to or on the boundary of $\m{P}$, have  lower entropy. It is worth pointing out that, for this specific family, the points in the support are closer to the lower boundary of the polygon $\m{P}$, while the side of $\m{P}$ determined by the convex hull of points corresponding to the empty and complete graph is significantly  distant from the support. This phenomenon becomes more pronounced as $g$ grows, so that this family will include many distributions, whose mean value parameters belong to a region far removed from the support, that would not provide a satisfactory or realistic explanation of any observed network, a feature that is often associated with degeneracy. 

%function does not exhibit fast changing behavior anywhere  inside the the convex support and has some global quality seems to exhibit a global behavunimodal behavior with no sudden 

\begin{figure}[t]
\centering
\includegraphics[width=4in,height=4in]{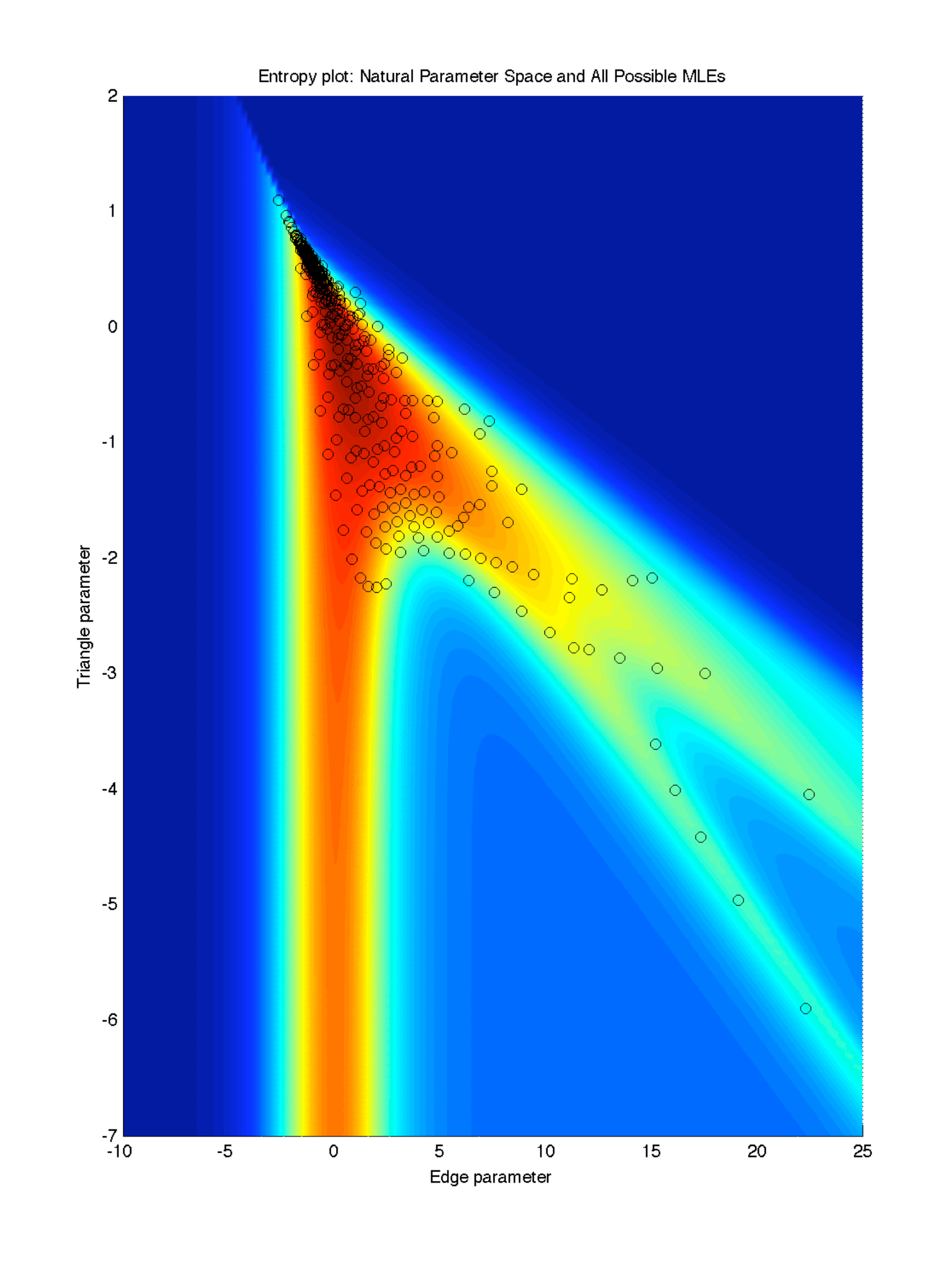}	
\caption{All possible MLEs of the natural parameters for the ERG model of Section \ref{sec:one} superimposed over the entropy plot of $S(\cdot)$.}
\label{fig:9.mle}
\end{figure}

In striking contrast, the natural parameter view of Figure \ref{fig:9.par} does not lend itself to immediate interpretations. In fact, although $S(\theta)$ and $V(\mu)$ are smooth functions related via the homeomorphism (\ref{eq:homeo}), $S(\theta)$ displays drastic localized behaviors, including multiple local maxima. 
%The entropy function,  a slowly-changing function over the mean value space, gets homeomorphically mapped by the inverse mean value parametrization into a strangely looking object in the natural parameter space. This object has some prominent features; 
In particular, the function $S(\theta)$  exhibits sharp changes and high-peaked ridges shooting at infinity along which it remains roughly constant. %among the most notable features of $S(\theta)$ from Figure \ref{fig:9.par} are the sharper changes and the high-peaked ridges shooting at infinity along which the entropy appears to remain constant. 
Furthermore, small variations in the natural parameter values cause big changes in the values of the entropy function, thus making this ERG model rather unstable, in the sense that neighboring parameters specify very different distributions, or at least distributions with different entropies. These features may in fact fall under the general umbrella of degeneracy, as described in Section  \ref{sec:degeneracy}. Finally, we remark that the portion of the natural parameter space containing parameter points that produce more realistic distributions with higher entropy values is relatively small, a characteristic that emerged from the inspection of Figure \ref{fig:9.mean} as well. In addition, the entropy function remains relatively high along some rays leaving the origin and shooting to infinity. We remark that Figure \ref{fig:9.par} matches quite closely analogous plots, not based on Shannon's entropy, for the same ERG model on graphs with 7 nodes by \cite{H:03}, although the interpretation of the plots using normal cones, as described  below, is missing.

Figure \ref{fig:9.mle} shows all the possible MLEs corresponding to the 415 points in the support of $\mathcal{E}_{\m{P}}$ that are inside $\m{P}$. These points are all the estimates that can be obtained by maximum likelihood procedure, so that, although the family $\mathcal{E}_{\m{P}}$ contains many other distributions, inference is only restricted to the 415 distributions identified by the MLEs, whose entropies are displayed in the Figure.

Part of the seemingly strange behavior of $S(\theta)$ can however be explained using the results derived in the previous section.  To that end, the convex support of Figure \ref{fig:9.supp}, can be expressed either as the convex hull of its vertices, namely %$\m{P}$ is the convex hull of the points 
\[
\m{P} =  \m{convhull} \left\{ (0,0), (20,0),(27,27),(30,44), (32,56), (36,84) \right\}
\]
or, equivalently, using the $\mathcal{H}$p-representation, as the solution set of a system of linear inequalities, i.e.  
\[
\m{P} = \{ t \in \mathbb{R}^2 \colon \m{A} t \leq b\},
\] 
where 
\[
\m{A} = \left[
\begin{array}{rcr}
0 & $\;$ & -1 \\
27 & $\;$& -7\\
17 & $\;$& -3\\
6 & $\;$& -1\\
7 & $\;$& -1\\
-21 & $\;$& 9\\
\end{array} \right], \quad \quad
b = \left[
\begin{array}{c}
0\\
540\\
432\\
136\\
168\\
0
\end{array} \right].
\]
The rows of $\m{A}$ identify the outer normals to the 6 sides of the polygon $\m{P}$ and generate the normal cones to the edges of $\m{P}$. The normal cone of a vertex of $\m{P}$ is the conic hull of the outer normals to the edges incident to that vertex. For example, the normal cone of the vertex $(0,0)$ is 
\[
 \m{cone}  \left\{ (0,-1), (-21,9) \right\}
\]
\begin{figure}[h!]
\centering
 \includegraphics[width=5in]{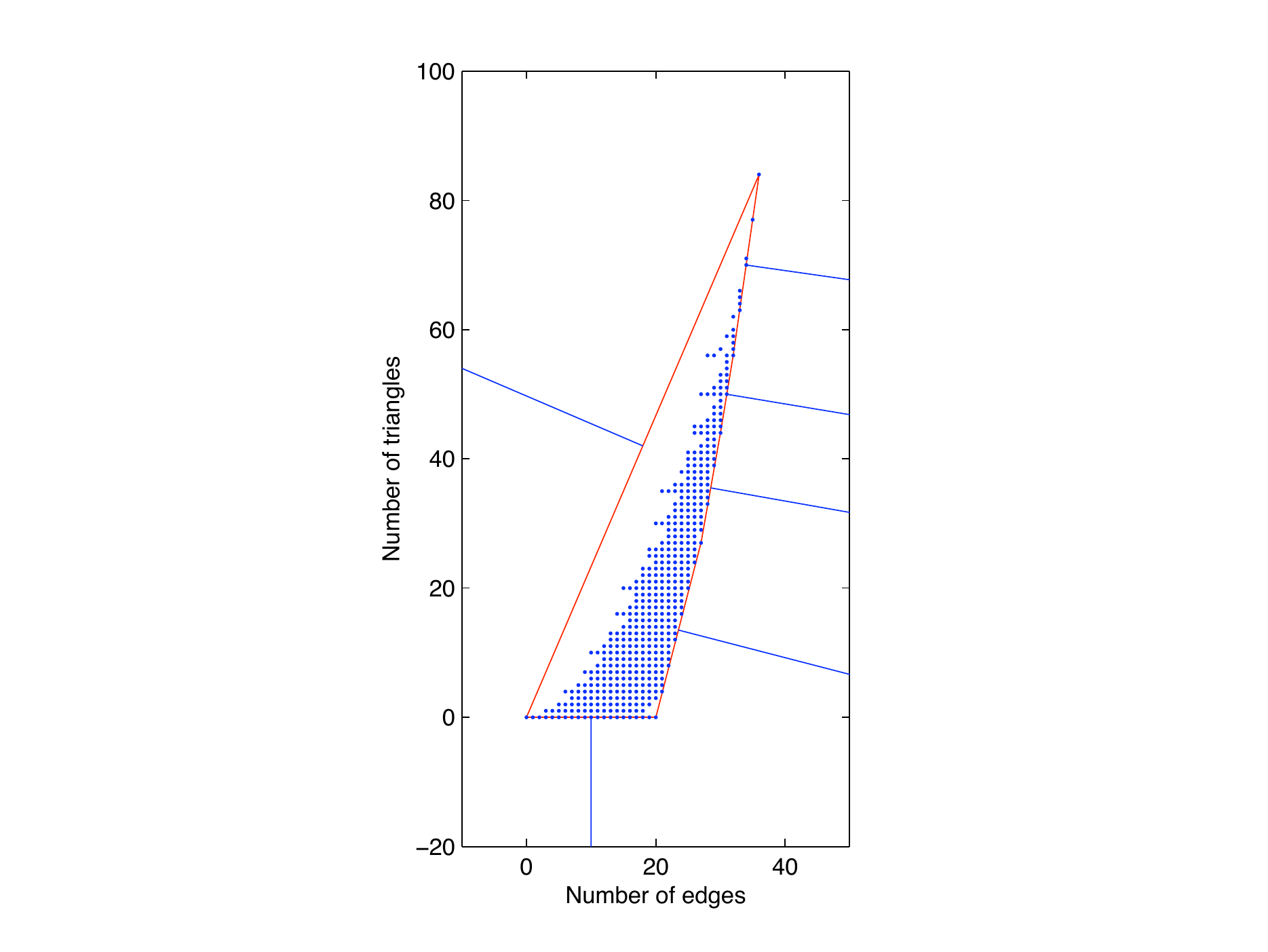}\\
\caption{Convex support and its outer normals for the ERG model of Section \ref{sec:one}.}
\label{fig:9.outer}
\end{figure}
The convex support $\m{P}$ and its outer normals  are shown in Figure \ref{fig:9.outer}.
It is immediate to picture that the normal fan of $\m{P}$, i.e. the collections of all the cones with apex at $0$ identified by the outer normals of $\m{P}$, partitions  $\mathbb{R}^2$.

\begin{figure}[h!]
\centering
 \includegraphics[width=4in]{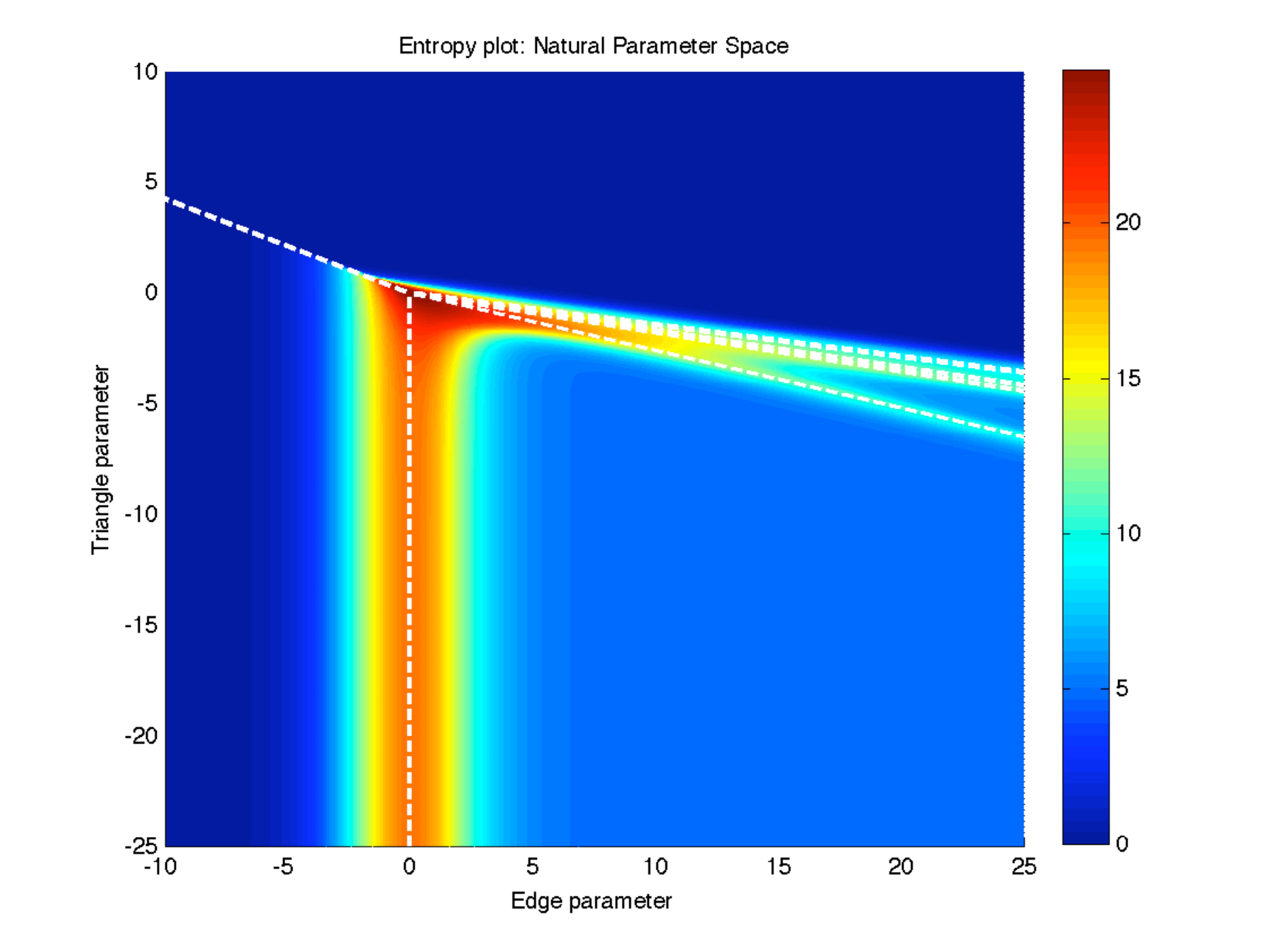}\\
\caption{Entropy plot of $S(\cdot)$ with, superimposed, the normal fan of $\m{P}$ for the ERG model of Section \ref{sec:one}.}
\label{fig:9.lines}
\end{figure}

Figure \ref{fig:9.lines} shows the  entropy plot over the subset $[10,25] \times [-25,10]$ of the natural parameter space with, superimposed, the normal fan of $\m{P}$, centered at the origin, which is the point of maximal entropy. As prescribed by Corollary \ref{cor:main}, the outer normal to $\m{P}$ are precisely the directions along which the closure of the original family $\mathcal{E}_{\m{P}}$ is realized, by adding the families $\mathcal{E}_F$, as $F$ ranges over the proper faces (in this case, edges and vertices) of $\m{P}$. These directions, starting at the origin, match perfectly the ridges of Figure \ref{fig:9.par}, along which the entropy function seems to converge to some fixed value. This is because any sequence $\{\theta_n\}$ along the outer normal of some edge $F$ will eventually no longer identifies distributions from the original family $\mathcal{E}_{\m{P}}$, but just \textit{one} distribution in $\mathcal{E}_F$ supported on $F$. Consequently, the entropy function does not change because, for all $n$ large enough, $\theta_n$ specifies almost the same distribution.

Figures \ref{fig:gui1}, \ref{fig:gui2} and \ref{fig:gui3} offer other two  pictorial representations of Corollary \ref{cor:main}. These plots were obtained using the MATLAB GUI available at \url{http://www.stat.cmu.edu/~arinaldo/ERG/} (see Section \ref{sec:soft} below). The left side of each plot shows the entropy function for the family of Section \ref{sec:one} along with the outer normals of $\m{P}$ leaving the original. The white circles represent  the selected natural parameter. The plots on the right show the support of the family. The red stars indicate the mean parameter values corresponding to the natural parameters indicated by the white circles on the left side of the figure. Points with darker shaded colors correspond to network statistics receiving high probability under the selected natural parameter.

Part {(\bf a)} of Figure \ref{fig:gui1} shows a distribution with high entropy, corresponding to a mean value parameter well inside the relative interior of $\m{P}$. In contrast, in parts {(\bf b)}, {(\bf c)} and {(\bf d)} the natural parameter is selected as $d$, with $d$ a point in the relative interior of the 2-dimensional normal cone of the vertex of coordinates $(0,0)$, which identifies the empty graph. Consequently, the entropy is almost $0$, as the associated distribution will put almost all its mass on that vertex of $\m{P}$. Notice that, even though the selected natural parameters from part {(\bf b)}, {(\bf c)} and {(\bf d)} are very different from each others, because they are far away from the set of parameters producing nondegenerate distributions and because they all to lye inside the normal cone of the vertex $(0,0)$, they parametrize essentially the same degenerate distribution on the  empty graph. 

Figure  \ref{fig:gui2} part {(\bf a)} shows the same phenomenon, but for the  different degenerate distribution putting virtually all its mass on the complete graph, which corresponds to the vertex $(36,84)$. As with Figure \ref{fig:gui1}, notice that the natural parameter is a point inside the normal cone of that vertex  and essentially any point in the upper triangular blue part of the entropy plot (which is, effectively, the relative interior of the associated normal cone) would parametrize this distribution. Part {(\bf b)} and {(\bf c)} show other degenerate distributions over the vertex of $\m{P}$ identified by points inside the interiors of the corresponding normal cones.
Figure \ref{fig:gui3} instead displays similar plots for a selection of natural parameters corresponding to directions lying on the normal cones, i.e. the outer normals, of some of the edges of $\m{P}$. 

\begin{figure}[!ht]
\centering
\begin{tabular}{c}
\includegraphics[width=4in]{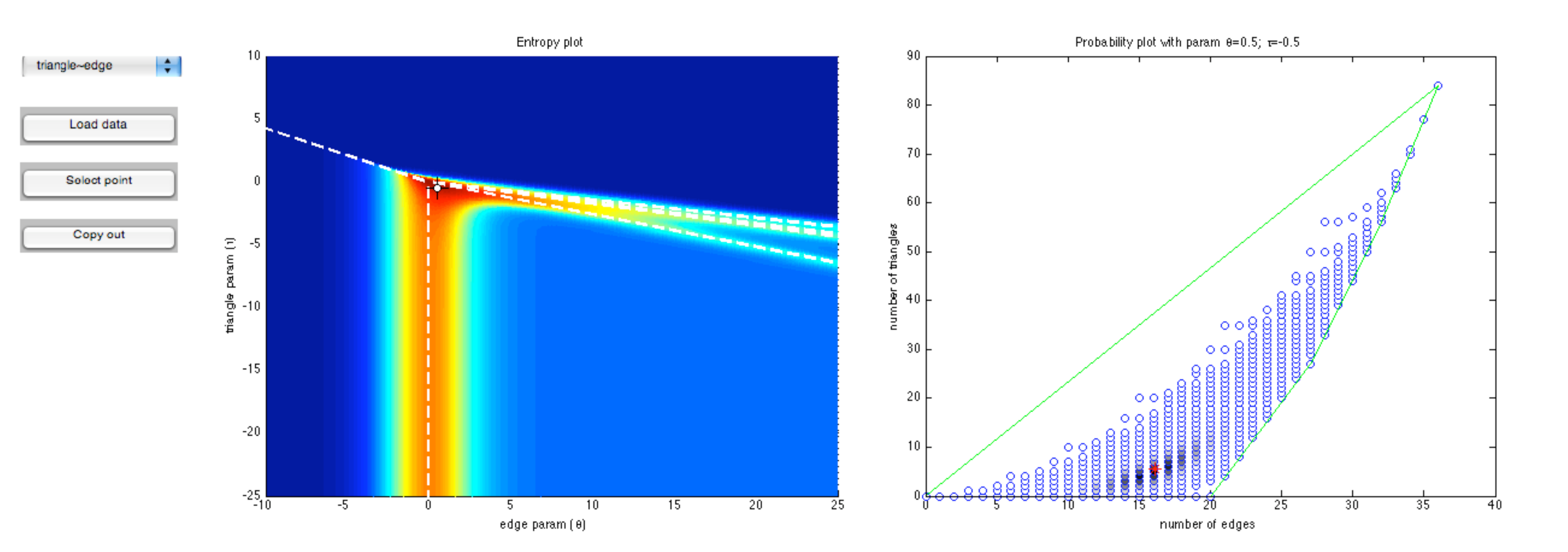}\\
(a)\\
\includegraphics[width=4in]{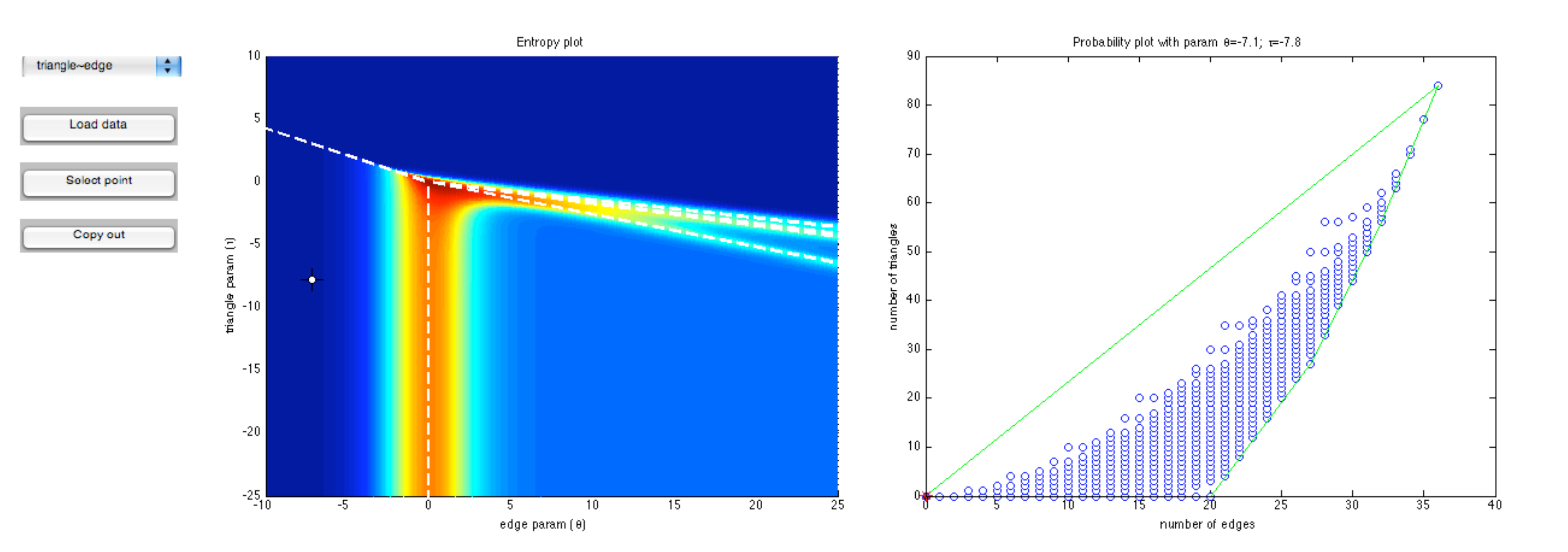}\\
(b)\\
\includegraphics[width=4in]{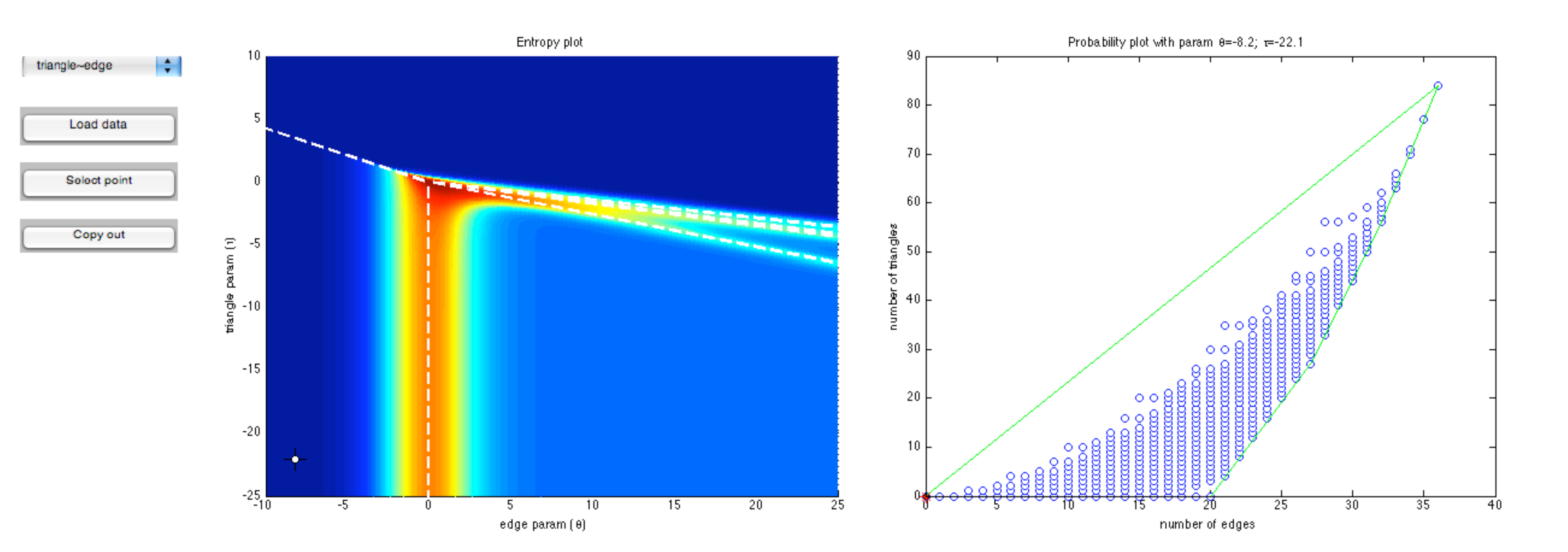}\\
(c)\\
\includegraphics[width=4in]{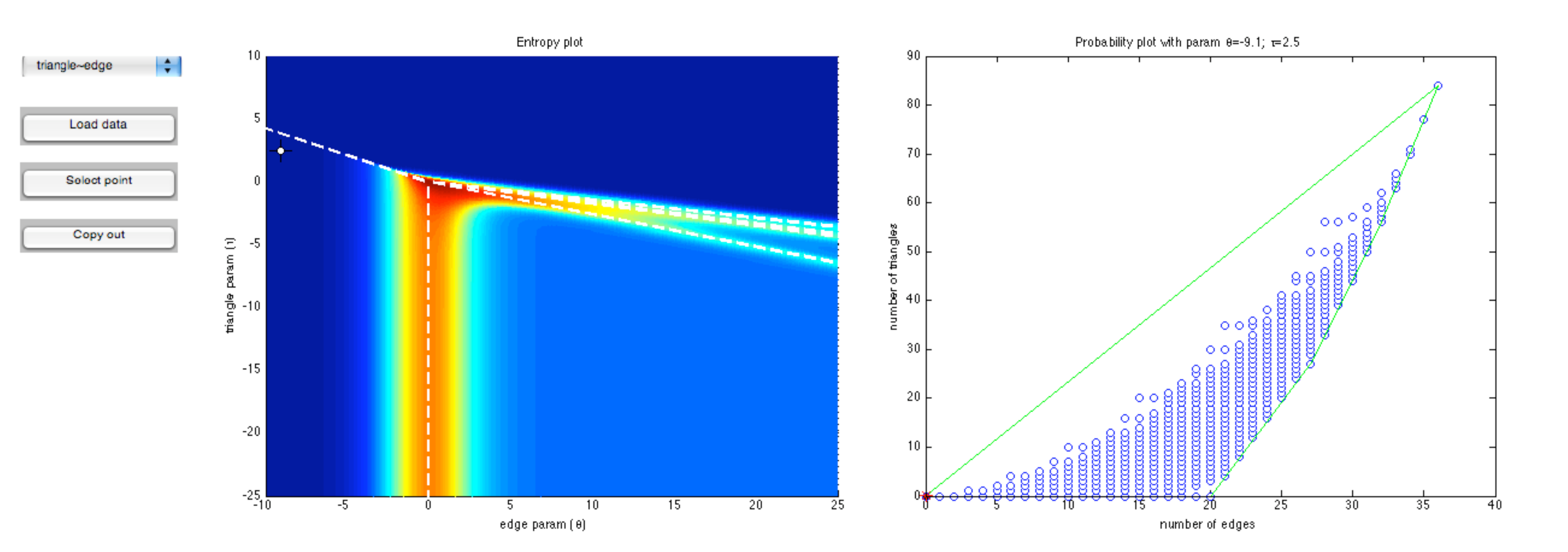}\\
(d)\\
\end{tabular}
\caption{Various distributions parametrized by points in the natural parameter space for the ERG model of Section \ref{sec:one}. The plots on the left are the entropy plots; the white points indicate the selected distributions. The plots on the right all display convex support. The red crosses represent the mean value parameters corresponding to the selected natural parameters, while the darker shading indicates network statistics configurations that are very probably under the selected parameters. Part {\bf (a)}: distribution with high-entropy with mean value parameter inside $\m{P}$. Parts {\bf (b)}, {\bf (c)} and {\bf (d)}: natural parameters all specifying distributions with virtually all of the total mass on the empty graph.}
\label{fig:gui1}
\end{figure}

\begin{figure}[!ht]
\centering
\begin{tabular}{c}
\includegraphics[width=5.2in]{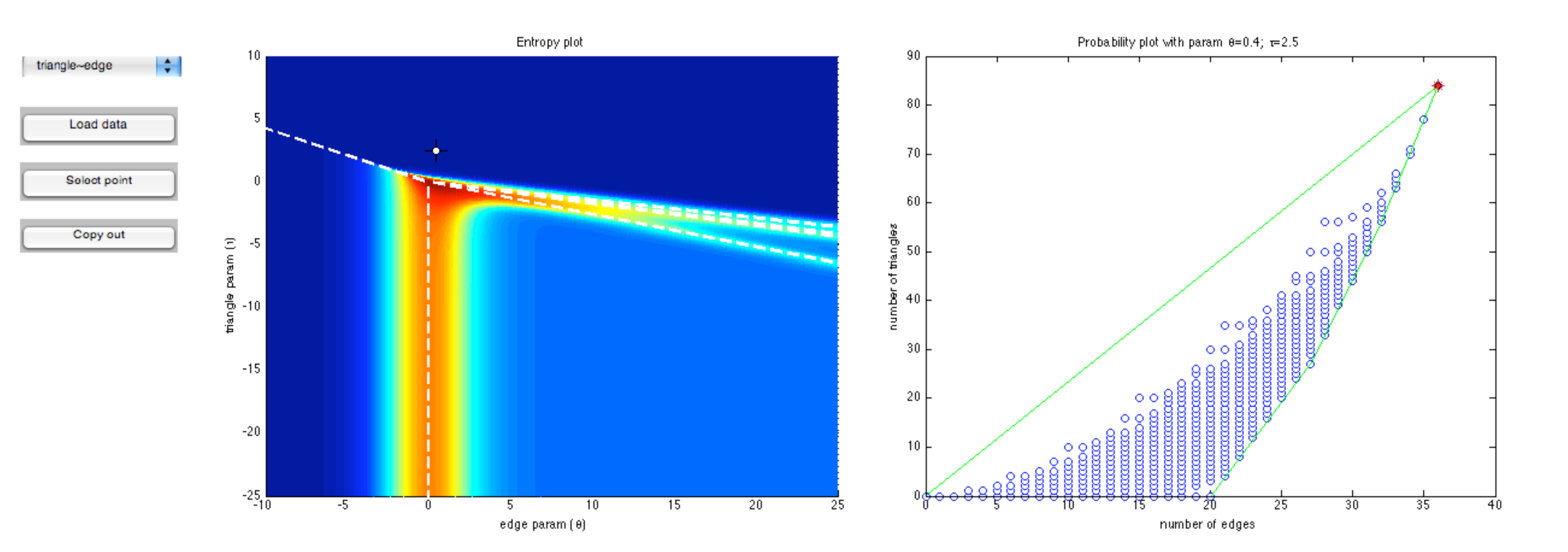}\\
(a)\\
\includegraphics[width=5.2in]{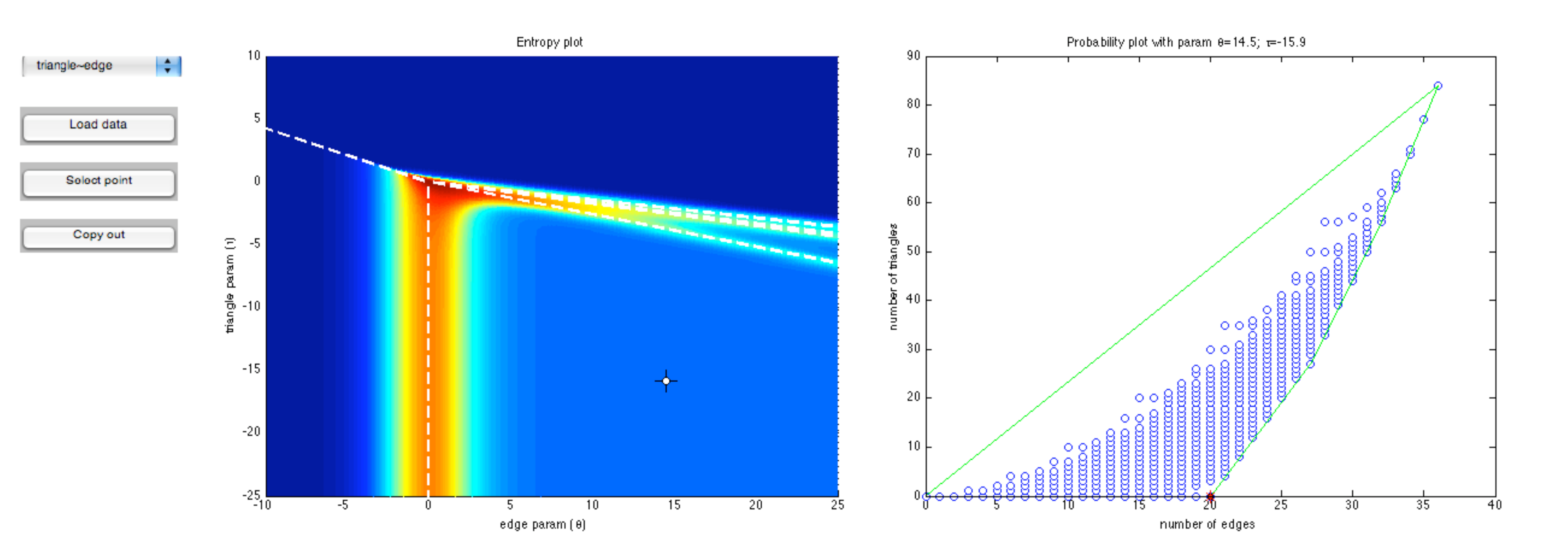}\\
(b)\\
\includegraphics[width=5.2in]{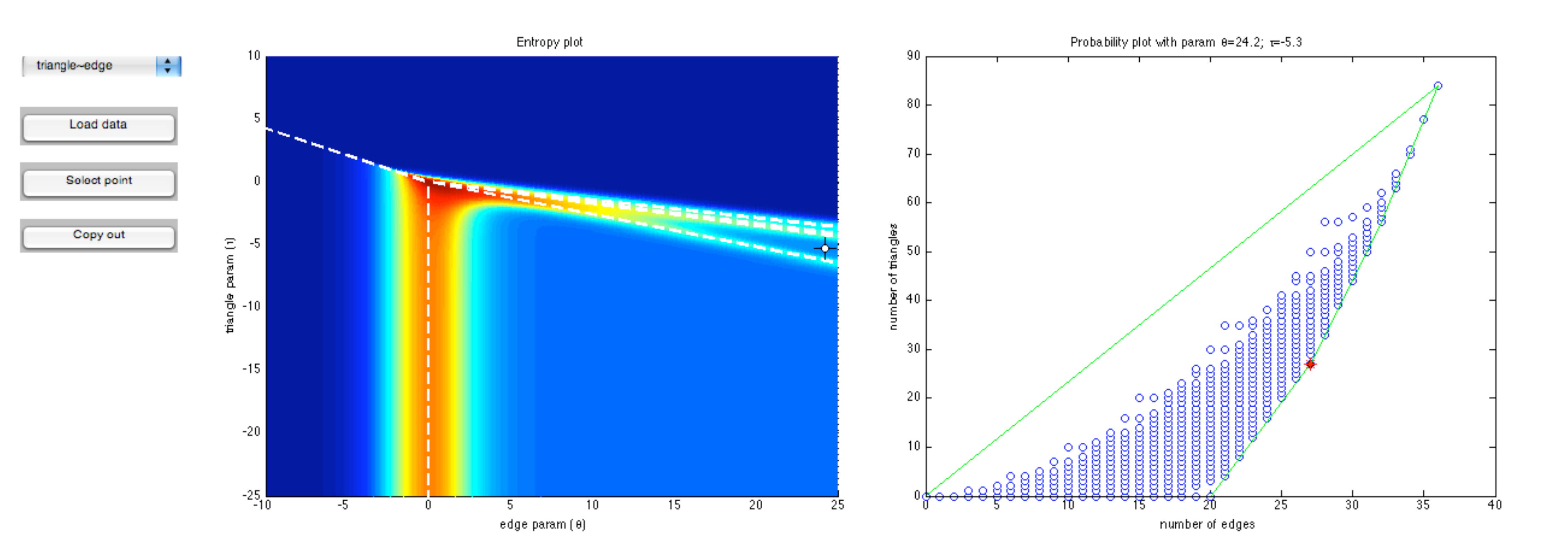}\\
(c)\\
\end{tabular}
\caption{Three degenerate distributions over three vertices of $\m{P}$. See the caption of Figure \ref{fig:gui1}.}
\label{fig:gui2}
\end{figure}

\begin{figure}[!ht]
\centering
\begin{tabular}{c}
\includegraphics[width=5.2in]{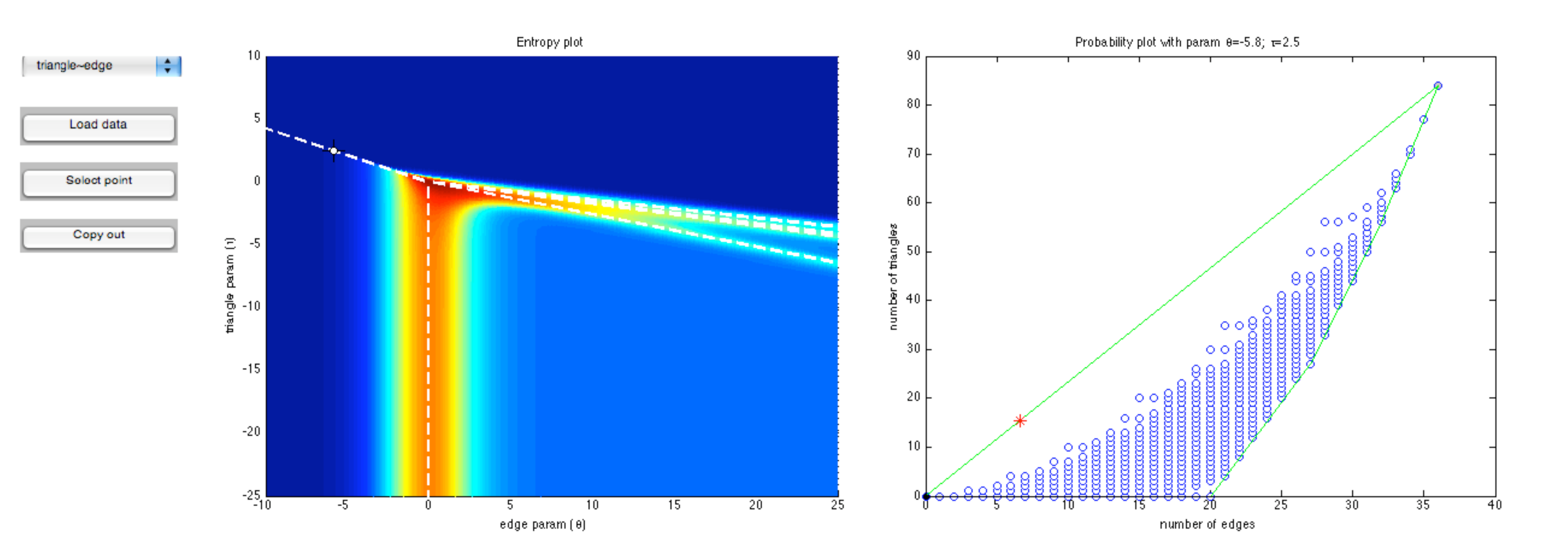}\\
(a)\\
\includegraphics[width=5.2in]{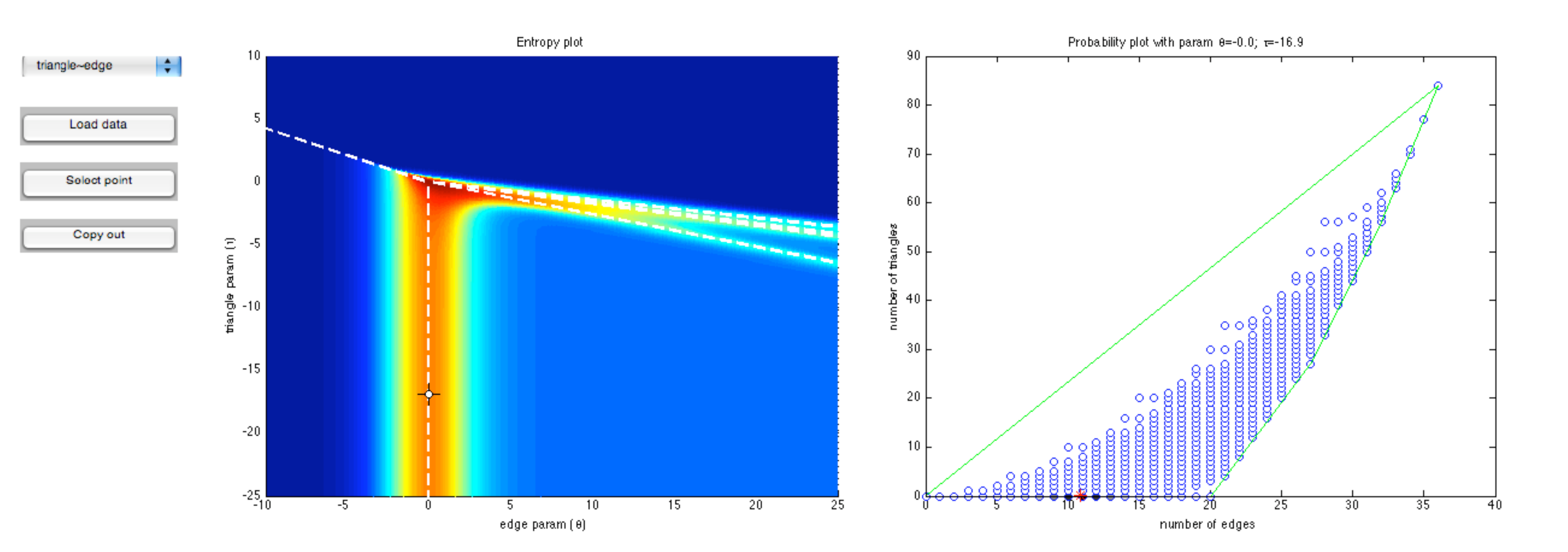}\\
(b)\\
%\includegraphics[width=4in]{gui_6.pdf}\\
%(c)\\
\includegraphics[width=5.2in]{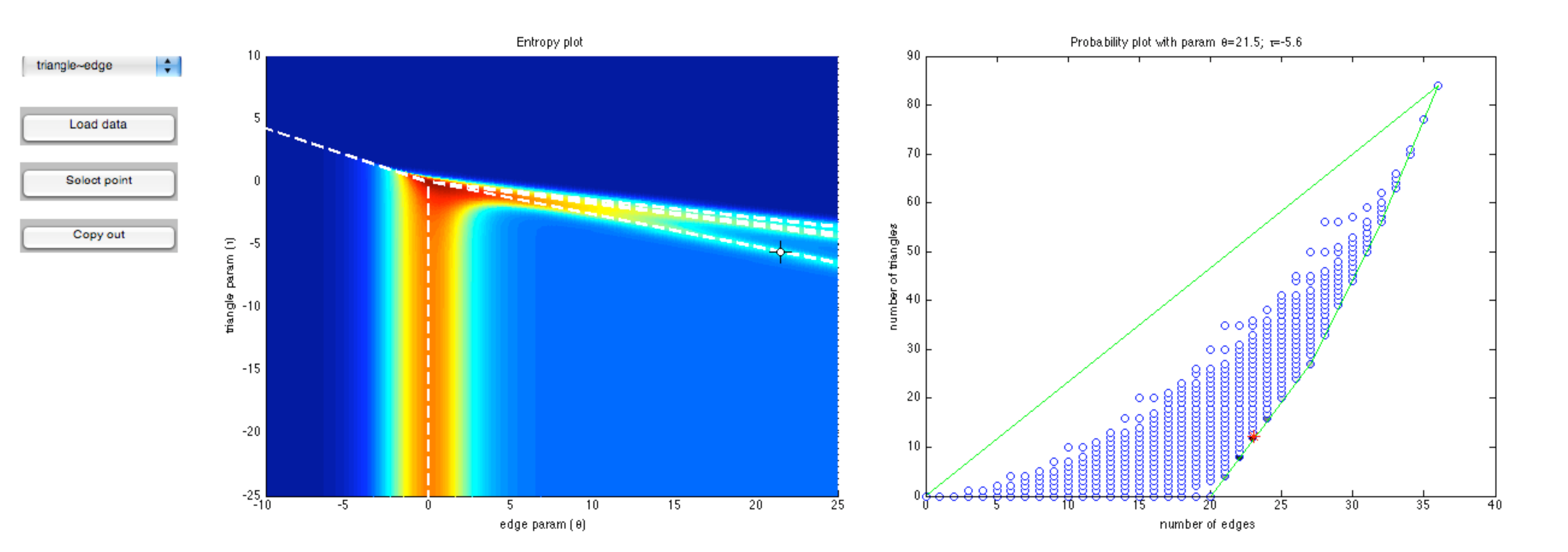}\\
(c)\\
\end{tabular}
\caption{Three degenerate distributions supported over three different edges of of $\m{P}$. See the caption of Figure \ref{fig:gui1}.}
\label{fig:gui3}
\end{figure}

\clearpage
\section{Discussion}

The purpose of this article has been two-fold. First, for the class of discrete linear exponential families with polyhedral convex support, we have characterized the extended family using the normal fan to the convex support. While complete results about closures of general exponential families exist in the literature, our restriction to families with polyhedral support  allowed us to obtain a more refined and explicitly geometrical description. In particular, our findings allowed us to gain a better understanding of the geometric and statistical properties of these families, as well as on the theoretical and algorithmic aspects of computing extended maximum likelihood estimates.  

Our second goal was to study the behavior and statistical properties of ERG models, that have seen widespread use for the statistical analysis of data for social networks. To that end, we applied the theoretical results derived in the first part of the article to one ERG model on the set of graphs with 9 nodes. Despite our analysis being mostly graphical (due to the lack of a closed-form expression for the log-partition function), it  captures a few interesting features of this model, some of which accounts for the seemingly strange behaviors that ERG models have been known to exhibit in practice, and generically termed degeneracy. Our investigation indicated that this type of behavior is, in fact, not unusual, and can be fully explained by the properties of linear discrete exponential families. Furthermore, based on similar experimentations with other ERG models, we believe our conclusions are not just specific to the model we present here but apply more widely to general ERG models.  

The application presented here are particularly relevant to ERG models built around network statistics that describes macroscopic features of the networks and whose dimension does not grow with the number of nodes. However, our results apply to more complex models, such as the original $p_1$ model of \cite{HL:81}, which has node-specific parameters and whose likelihood is based on an assumption of dyadic independence. For these models with many parameters, degeneracy is typically due to nonexistence of the MLE, which is very likely to occur if the network is even mildly sparse.
% i.e., the  presence or absence of edges between any pair of nodes, is independent of the presence or absence of edges between all other pairs.  The consequence of dyadic independence is a well-behaved likelihood surface.

Of course, much more needs to be done in order to fully understands the statistical subtleties, features and potential limitations of ERG models and in order to establish whether they are appropriate to model anything else than a large ensamble. Nonetheless, our contributions indicate that perhaps practitioners attribute to ERG models a degree of regularity that they may not possess.

\section{Acknowledgments}
The authors thank Mark Handcock and Surya Tokdar for helpful discussions on earlier drafts on this manuscript and Giovanni Leoni for illuminating clarifications.  This research was supported in part by NSF grant DMS-0631589 and a grant from the Pennsylvania Department of Health through the Commonwealth Universal Research Enhancement Program.

\newpage

\section{Appendix A: Proofs}
\begin{proof}[{\bf Proof of Lemma \ref{lem:nonid}.}]
Let $\theta_1 \in \Theta_F$ and $\zeta \in  \m{lin}(N_F)$ and consider the point $\theta_2 = \theta_1+\zeta$. We first show that $\theta_2 \in \Theta_F$ and $P_{\theta_1} = P_{\theta_2}$. Because $\zeta \in  \m{lin}(N_F)$, there exist some scalars $c_1\ldots,c_{m_F}$ such that
\[
\zeta = \sum_{i=1}^{m_F} c_i a_i.
\]
Therefore, almost everywhere $\nu_F$,
\[
\langle \zeta, x\rangle = \ \sum_{i=1}^{m_F} c_i b_i \equiv C.
\]
Then,
%\[
%\psi_F(\theta_1) = \log \int_{F} \exp \left\{ \langle \theta_2 + \zeta, x \rangle\right\} d \nu_F(x) = \log \int_{F} \exp \left\{ \langle \theta_2, x \rangle\right\} d \nu_F(x) + C = \psi_F(\theta_2) + C,
%\]
\[
\psi_F(\theta_2) = \log \int_{F} \exp^{ \langle \theta_1 + \zeta, x \rangle } d \nu_F(x) = \log \int_{F} \exp^{ \langle \theta_1, x \rangle } d \nu_F(x) + C = \psi_F(\theta_1) + C.
\]
As both $\psi_F(\theta_1)$ and $C$ are finite, it follows that $\psi_F(\theta_2) < \infty$ and, therefore, $\theta_2 \in \Theta_F$.
It is now easy to conclude that $P_{\theta_1} = P_{\theta_2}$ because, almost everywhere $\nu_F$,
\[
p_{\theta_2}(x) = \exp^{\langle \theta_1 + \zeta,x \rangle - \psi_F(\theta_2)} = \exp^{\langle \theta_1,x  \rangle + C - \psi_F(\theta_1) - C} = p_{\theta_1}(x).
\]
We now show that if $P_{\theta_1} = P_{\theta_2}$ and $\theta_1 \neq \theta_2$, then $ \theta_1 - \theta_2 \in \m{lin}(N_F)$. By Radon-Nykodin theorem, this occurs if and only if
\[
\langle x, \theta_1 - \theta_2 \rangle = \psi^F(\theta_1) - \psi^F(\theta_2) = D
\]
for some constant $D$, almost everywhere $\nu_F$.
As $\nu_F$ has support contained in $F$ and $F$ is defined by (\ref{eq:F}), the previous equality is equivalent to $\theta_1 - \theta_2 \in \m{lin}(N_F)$, thus completing the proof of the Lemma.

As for (\ref{eq:madai}), since  $\m{P}$ is full-dimensional and, almost everywhere $\nu_F$, $A_F x =b_F$, we have, for any $\theta \in \Theta_F$,
\[
0 = \m{Var}_\theta \left( \langle a, X \rangle \right) = a^\top I_F(\theta) a
\]
if and only if $a = \m{lin}(N_F)$.  This implies that $\m{rank}(I_F(\theta)) = \m{dim} \left( \m{lin}(N_F)^\bot \right) = \m{dim}(F)$.%, for all $\zeta \in \m{lin}(N_F)$.

\end{proof}

\begin{proof}[{\bf Proof of Lemma \ref{lem:a2}.}]
Arguing by contradiction, suppose that, for all $n$ large enough, $\theta_n$ belongs to a compact, hence bounded, set C. The facts that $\nabla \psi(\theta) = \mathbb{E}_\theta[X] \in \m{relint}(\m{P})$ $\Theta$, for each $\theta \in \Theta$ with finite norm, and that $\m{relint}(\m{P})$ and $\Theta$ are homeomorphic, imply that $\{ \nabla \psi(\theta), \colon \theta \in C \}$ is a compact subset of $\mathrm{relint}(\m{P})$. Then, because $\| \nabla \psi(\theta) - \mu_F \|_2$ is a continuous function of $\theta$, for all $\theta \in \Theta$,  $\inf_{\theta_n \in C} \| \nabla \psi(\theta_n) - \mu_F \|_2 = \| \nabla \psi(\theta^*) - \mu_F \|_2$ for some $\theta^* \in C$. But then, $\nabla \psi(\theta^*) \equiv \mu^* \in \mathrm{relint}(\m{P})$ so that, $\| \mu^* - \mu_F \|_2 > \epsilon > 0$ for some $\epsilon$, which produces a contradiction. %Therefore, for each $n$, we can write $\theta_n = \theta + \rho_n d_n$, for some positive numbers $\rho_n$ such that $\rho_n \rightarrow \infty$. %We claim that, for all $n$ large enough, \ale{here!!!} $d_n \in N_F$. 
\end{proof}

\begin{proof}[{\bf Proof of Theorem \ref{thm:main}.}]
Throughout the proof, we will write $S_{k-1} = \{ x \in \mathbb{R}^k \colon \|x \|_2 = 1\}$.

In the proof we will make use repeatedly of the following decomposition. For any point $x_0 \in \m{P}$ and proper face $F$ of $\m{P}$, we will write
\begin{equation}\label{eq:dec}
p_{\theta_n}(x_0) = \frac{\exp^{ \langle \eta, x_0 \rangle }}{A_{0,n}(x_0,F) + A_{>,n}(x_0,F) + A_{<,n}(x_0,F)},
\end{equation}
 where
\[
A_{0,n}(x_0,F) = \int_{\{ x \colon  A_F (x - x_0) = 0\}} \exp^{ \langle \eta, x \rangle + \rho_n \langle d_n, x - x_0 \rangle} d \nu(x),
\]
\[
A_{>,n}(x_0,F) = \int_{\{ x \colon  A_F (x - x_0) > 0\}} \exp^{ \langle \eta, x \rangle + \rho_n \langle d_n, x - x_0 \rangle} d \nu(x),
\]
and
\[
A_{<,n}(x_0,F) = \int_{\{ x \colon  A_F (x - x_0) < 0\}} \exp^{ \langle \eta, x \rangle + \rho_n \langle d_n, x - x_0 \rangle} d \nu(x).
\]
Notice that, for all $n$, if $x_0 \in F$, then $A_{>,n}(x_0,F) = 0$, since $\nu\{ x \colon  A_F (x - x_0) > 0 \} = 0$.
We will also use the following fact, which stems directly from Lemma \ref{lem:nonid}: $\exp^{\langle \eta, x\rangle - \psi^F(\eta)} = \exp^{\langle \theta, x\rangle - \psi^F(\theta)} = p_{\theta^F}^F(x)$, almost everywhere $\nu_F$.
\begin{enumerate}
\item Party 1.\\ 
We will begin by showing sufficiency.
First, we consider the case of a generic point $x_0 \in F$. 
If,  $d_n \in \m{ri}(N_F)$, then, by part {\it 1.} of Lemma \ref{lem:nf}, $ \langle d_n, x - x_0 \rangle = 0$ for all $x \in F$, which implies that
\[
A_{0,n}(x_0,F) = \int_F \exp^{ \langle \eta, x \rangle}d \nu(x) = \exp^{\psi^F(\eta)},
\]
for al $n$.
On the other hand, for any $x \not \in F$, since $R$ is a  compact subset of $\m{ri}(N_F) \cap S_{k-1}$ and $\{ d_n \} \in R$, we have
\[
\sup_n \langle d_n, x - x_0 \rangle \leq \sup_{d \in R} \langle d, x - x_0 \rangle = \langle d_x^*, x - x_0 \rangle,
\]
for some $d_x^* \in R$, which may depend on $x$. Furthermore, by part {\it 2.} of Lemma \ref{lem:nf}, $ \langle d_x^*, x - x_0 \rangle  < 0$. Thus,  $ \rho_n \langle d_n, x - x_0 \rangle  \rightarrow -\infty$, for each $x \not \in F$. for each  $x \in \{ x \colon  A_F (x - x_0) < 0\}$,
\[
\exp^{ \langle \eta, x \rangle + \rho_n \langle d_n, x - x_0 \rangle} \leq \exp^{ \langle \eta, x \rangle},
\]
whereby 
\[
\int_{\{ x \colon  A_F (x - x_0) < 0\}} \exp^{ \langle \eta, x \rangle}d \nu(x) \leq  \int_{\mathbb{R}^k} \exp^{ \langle \eta, x \rangle}d \nu(x) =  \exp^{ \psi(\eta) } < \infty.
\]
Then, by the dominated convergence theorem, we obtain
\[
A_{<,n}(x_0,F) \searrow 0.
\]
Therefore,
\[
A_{0,n}(x_0,F)  + A_{<,n}(x_0,F) \searrow \exp^{\psi^F(\eta)},
\]
which implies that
\begin{equation}\label{eq:up}
\lim p_{\theta_n}(x_0) \nearrow \exp^{ \langle \eta, x_0 \rangle  - \psi^F(\eta)} = p_{\theta_F}(x_0).
\end{equation}
%where the last identify stems form Lemma \ref{lem:nonid}. v

Next, let $x_0 \in \m{P} \cap F^c$ and notice that
\[
A_{>,0}(x_0,F) + A_{0,n}(x_0,F)  + A_{<,n}(x_0,F) \geq A_{>,n}(x_0,F) \geq \int_F  \exp^{ \langle \eta, x \rangle + \rho_n \langle d_n, x - x_0 \rangle} d \nu(x),
\]
since $F \subseteq \{ x \colon A_F(x - x_0) > 0 \}$.
For any $x \in F$, since $\{ d_n \} \in R$ and $R$ is a  compact subset of $\m{ri}(N_F) \cap S_{k-1}$, we get
\[
\inf_n \langle d_n, x - x_0 \rangle \geq \inf_{d \in R} \langle d, x - x_0 \rangle = \langle d_x^*, x - x_0 \rangle,
\]
for some $d^*_x \in R$, which may depend on $x$.
By   Lemma \ref{lem:nf}, part {\it 2.}, $\rho_n \langle d^*_x, x - x_0 \rangle \rightarrow \infty$, for all $x \in F$. But then, as  $\nu(F) > 0$ by assumption (A3), we obtain
\[
\int_F  \exp^{ \langle \eta, x \rangle + \rho_n \langle d_n, x - x_0 \rangle} d \nu(x) \rightarrow \infty,
\]
by the monotone convergence theorem. Thus,
\begin{equation}\label{eq:mon}
A_{>,0}(x_0,F) + A_{0,n}(x_0,F)  + A_{<,n}(x_0,F) \rightarrow \infty,
\end{equation}
and, therefore, $p_{\theta_n}(x_0) \rightarrow 0 = p^F_{\theta_F}(x_0)$.

\item Part 2.\\
Suppose that, $\{ d_n \} \subset R$, where $R$ is a  compact subset of $ N^c_F$. 
Then, there exists a subsequence $\{ d_{n_k}\} \subset \{ d_n \}$ such that, for all $k$ large enough, $d_{n_k}$ belongs to a  compact set $R^*$ such that either $R^* \subset \m{ri}(N_{F'})$, for some $F' \neq F$, or $R^* \subset \left( \mathcal{N}(\m{P}) \right)^c$. In the latter case,  by part 3., proven below, the numbers $\| \mu_{n_k} - \mu^F\|_2$  grow unbounded and, therefore, (\ref{eq:thm:main}) is violated. 

In the former case, by part 1. of the proof, (\ref{eq:thm:main}) is verified for $F'$, so it cannot be simultaneously verified for $F$ as well. Indeed,  $p_{\theta_n}$ cannot converge pointwise to both $p_{\theta^F}^F$ and $p_{\theta^{F'}}^{F'}$, which identify different probability distributions with different supports.

\item Part 3.\\
We will show that, if $\{ d_n \} \subset R$ for some  compact subset of $ \left( \mathcal{N}(\m{P}) \right)^c$, then, 
\begin{equation}\label{eq:remark}
p_{\theta_n}(x_0) \rightarrow 0, \quad \forall x_0 \in \m{P}.
\end{equation}
This implies that $\| \mu_n \|_2 \rightarrow \infty$.
Let $x_0 \in \m{P}$. As $\m{P}$ is full-dimensional and $d_n \not \in \mathcal{N}(\m{P})$, by Lemma \ref{lem:nf}, part {\it 3.}, the set $S_n = \{ x \in \m{P} \colon \langle d_n, x - x_0\rangle > 0 \}$ is non-empty, for each $n$. Furthermore, since, by assumption, 
\[
\inf_{d \in R} \inf_{d' \in \mathcal{N}(\m{P})} \| d - d' \|_ 2 > 0,
\]
the set $S  = \m{lim \ inf}_n S_n$ is non-empty as well. We now claim that $\nu(S) > 0$. In fact, arguing by contradiction, suppose that $\nu(S) = 0$. Then, there exists a subsequence $\{ d_{n_k} \} \subset \{ d_n \}$ such that no point $x \in \m{supp}(\nu)$ can satisfy $\lim_k \langle d_{n_k}, x - x_0 \rangle > 0$. However, since, by assumption (A3), $\m{P} = \m{convhull}(\m{supp}(\nu))$, this implies that $\m{P} \subseteq \{ y \colon \lim_k \langle d_{n_k}, y - x_0\rangle \leq 0 \}$, which in turn implies that $\lim_k d_{n_k} \in \mathcal{N}(\m{P})$, violating the condition that $\{ d_n \}$ is bounded away from $\mathcal{N}(\m{P})$. Thus, $\nu(S) > 0$, from which we can conclude that $\m{lim \ inf}_n \nu(S_n)  \geq \nu(S) > 0$.
Then, by the monotone convergence theorem,
%We claim that $\inf_n \nu(S_n) > 0$. Indeed, by assumption A3, $\m{P} = \m{convhull}(\m{supp}(\nu))$, so that, if $\nu(S_n) = 0$,  no point $x$ in $\m{supp}(\nu)$ can satisfy $\langle d_n, x - x_0\rangle > 0$. This implies that $\m{P} \subseteq \{ y \colon \langle d_n, y - x_0\rangle \leq 0 \}$, which in turn implies that $d_n$ is the normal vector to a supporting hyperplane of $\m{P}$ and, hence that $d_n \in \mathcal{N}(\m{P})$, which is a contradiction. Furthermore, since $R$ is relatively compact, Thus, $\nu(S) > 0$. Then, by the monotone convergence theorem,
\[
\int_{S_n} \exp^{ \langle \eta, x \rangle + \rho_n \langle d_n, x - x_0 \rangle} d \nu(x) \rightarrow \infty,
\]
Therefore, 
\[
p_{\theta_n}(x_0) = \frac{\exp^{ \langle \eta, x \rangle}}{\int_{S_n} \exp^{ \langle \eta, x \rangle + \rho_n \langle d_n, x - x_0 \rangle} d \nu(x) + \int_{S_n^c} \exp^{ \langle \eta, x \rangle + \rho_n \langle d_n, x - x_0 \rangle} d \nu(x)} \rightarrow 0,
\]
as claimed. 
\end{enumerate}
\end{proof}

%\begin{proof}[{\bf Proof of Corollary \ref{cor:uno}.}]
%The first part of the claim follows straightforwardly from Theorem \ref{thm:main}, while the second part is proved by noticing that, under the assumptions of the Corollary, convergence is monotone in equation (\ref{eq:up}). 
%\end{proof}

%\begin{proof}[{\bf Proof of Corollary \ref{cor:due}.}]
%Because $ \m{C} = \{ 0 \} $ and $\m{P}$ is assumed to be full-dimensional, the support of $\mathcal{N}(\m{P})$ is $\mathbb{R}^k$, thus any non-zero vector belongs to some $N_F$ and is, therefore a direction of recession. Thus, if $\| \theta_n \|_2 \rightarrow \infty$, it is possible to extract a subsequence $\{ \theta_{n_k} \}$ satisfying the claim, for some $F$ and any $\theta \in \Theta$. The result then follows from Theorem \ref{thm:main}.
%\end{proof}

\begin{proof}[{\bf Proof of Corollary \ref{cor:main}.}]
Any direction $d \in \mathbb{R}^k$ is either in $\mathcal{N}(\m{P})$, in which case, it must belong to $\m{ri}(N_F)$ for one face $F$ of $\m{P}$ or  in $\left( \mathcal{N}(\m{P}) \right)^c$. The results then follow directly from Theorem \ref{thm:main}.
\end{proof}

\begin{proof}[{\bf Proof of Corollary \ref{cor:dr}.}]

If $x \in \m{ri}(\m{P})$, then the MLE exists, is unique and is given by the vector $\widehat{\theta} \in \Theta$ such that $\nabla \psi(\widehat{\theta}) = x$. Equivalently,  since in this case $N_{\m{P}}  = \{ 0 \}$, invoking Corollary \ref{cor:main}, part {\it 1.}, $-\ell_x$ has no direction of recession.
Thus consider the case of $x \in \m{rb}(\m{P})$ and let $F$ be the unique face such that $x \in \m{ri}(F)$.
If $d \in \m{ri}(N_F)$, then by Corollary \ref{cor:main}  part {\it 1.},
\begin{equation}\label{eq:zero}
\lim_{\rho \rightarrow \infty}p_{\theta + \rho d}(x) > 0,
\end{equation}
so (\ref{eq:dir}) holds. Suppose now that $d \in \m{rb}(N_F)$. %We will show that, (\ref{eq:zero}) is verified, which implies that  $d$ is a direction of recession.
Let  $\mathcal{F}_x = \{ F' \colon x \in \m{rb}(F') \}$, with $F'$ being a face of $\m{P}$. By Lemma \ref{lem:nf}, part {\it 4.},
\[
\biguplus_{F' \in \mathcal{F}_x} \m{ri}(N_{F'}) = \m{rb}(N_F),
\]
so, if $d \in \m{rb}(N_F)$, then $d \in \m{ri}(N_{F'})$, for some $F' \in \mathcal{F}_x$. By Corollary \ref{cor:main}, part {\it 1.}, almost everywhere $\nu_{F'}$,
\[
\lim_{\rho \rightarrow \infty}p_{\theta + \rho d} = p^{F'}_{\theta^{F'}}.
\]
Since $x \in F'$, we have $\nu_{F'}(x) > 0$, which implies, $p^{F'}_{\theta^{F'}}(x) > 0$ and, consequently, (\ref{eq:zero}). Thus $d$ is also a direction of recession and we have shown that any point in $N_F$ is a direction of recession for $-\ell_x$,

It remains to be shown that Equation (\ref{eq:dir}) is not verified if $d \not \in N_F$. If $d \not \in \mathcal{N}(\m{P})$ Corollary \ref{cor:main}  part {\it 2.} yields
\[
\lim_{\rho \rightarrow \infty}p_{\theta + \rho d}(x) = 0,
\]
hence $-\ell_x(\theta) \rightarrow \infty$, so $d$ is not a direction of recession. If instead $d \in \mathcal{N}(\m{P}) \cap N^c_F$, then it must be the case that $d\in \m{ri}(N_F^*)$, for some face $F^*$ such that $F \cap F^* = \emptyset$, otherwise $N_{F^*} \subset \m{rb}(N_F)$ (see, e.g., Lemma \ref{lem:nf}, part {\it 4.}). Thus, by Corollary \ref{cor:main}, part {\it 1.},
 \[
 \lim_{\rho \rightarrow \infty}p_{\theta + \rho d}(x) = p^{F^*}_{\theta^{F^*}}(x) = 0,
 \]
 because $x \not \in F^*$, while $p^{F^*}_{\theta^{F^*}}(x) > 0$ only if $x \in F^*$.
As a result, (\ref{eq:zero}) does not hold, so that $d$ does not satisfy (\ref{eq:dir}) and is not a direction of recession.

\end{proof}

\begin{proof}[{\bf Proof of Corollary \ref{cor:uno}.}]
%As  $\mathbb{E}_{\widehat{\theta}_F}[X] = x$, we get that $x \in \m{relint}(F)$. 
The only interesting case is when $x \in \m{ri}(F)$, for some proper face $F$, otherwise $\mathcal{N}(\m{P}) = \{ 0 \}$, and $-\ell_x$ has no directions of recession, as the MLE exists.
For every $\theta \in \Theta$, let $\{ \theta_n \}$ be a $(\theta,\{ \rho_n\},d)$-sequence. By Corollary \ref{cor:dr}, we need to consider only the case $d \in N_F$.
If $d \in \m{ri}(N_F)$, by Lemma \ref{cor:up}, $p_{\theta_n}(x) \nearrow p^F_{\theta_F}(x)$. Now suppose that $d \in \m{rb}(N_F)$. Then, $d \in \m{ri}(N_{F^*})$ for some face $F^*$ such that $F \subset F^*$. Another application of Lemma \ref{cor:up}, yields $p_{\theta_n}(x) \nearrow p^{F^*}_{\theta_{F^*}}(x)$. However,
\[
p^{F^*}_{\theta_{F^*}}(x)  = \exp^{\langle \theta,x \rangle - \psi^{F^*}(\theta)} < \exp^{\langle \theta, x \rangle - \psi^{F}(\theta)} = p^F_{\theta_F}(x),
\]
since
\[
\exp^{\psi^{F^*}(\theta) } = \int_{F^*} \exp^{\langle \theta,z \rangle} d\nu(z) \geq \int_{F} \exp^{\langle \theta,z \rangle} d\nu(z)= \exp^{\psi^{F}(\theta) }.
\]
Thus, $\sup_{\theta \in \Theta}p_\theta(x) =  p^F_{\gamma_F}(x)$ for some $\gamma^F \in \Theta_F$. But $\sup_{\gamma_F \in \Theta_F} p_{\gamma_F}(x) = p^F_{\widehat{\theta}_F}(x)$, since only the points $\theta \in \widehat{\theta}_F$ satisfy the first order optimality conditions (\ref{eq:mle.F}). The result follows.
\end{proof}

\begin{proof}[{\bf Proof of Corollary \ref{cor:tre}}]
Part \textit{i)} and \textit{ii)} follows from Lemma \ref{lem:nonid} and results of \cite{CS:03,CS:05}. Part \textit{iii)} is a direct consequence of part \textit{i)}.
\end{proof}

\begin{proof}[{\bf Proof of Corollary \ref{cor:due}}]
%Part \textit{i)} and \textit{ii)} follows from Lemma \ref{lem:nonid} and results of CSISZAR AND MATUS. Part \textit{iii)} is a direct consequence of part \textit{i)}. As for part \textit{iv)}, 
%Since $\mathcal{E}_F$ is degenerate, $\Lambda_{\min}(I_F(\theta)) = 0$ for every $\theta \in \Theta$. To see this, let $\theta' \in \theta_F$. Then, since $\langle \theta - \theta',x \rangle$ is a constant, almost everywhere $\nu_F$, $0 = \mathbb{V}_{\theta}(\langle \theta - \theta',X \rangle) = (\theta - \theta')I_F(\theta)(\theta - \theta')$, thereby $\Lambda_{\min}(I_F(\theta)) = 0$.
%Notice also that $\Lambda_{\min}(I(\theta))>0$ for all $\theta \in \Theta$. The result follows from part 1. of Theorem \ref{thm:main} and from the analytic properties of the exponential families. \ale{ok, ok, more details....}
For any $\theta \in \Theta$, the $(i,j)$-th entry of $I(\theta)$ is \citep[see, e.g., Corollary 2.3 in][]{BRW:86}
\[
I_{i,j}(\theta) = \frac{\partial}{\partial \theta_i \partial \theta_j} \psi(\theta).
\]
From the proof of Theorem \ref{thm:main}, $\psi(\theta_n) \rightarrow \psi^F(\theta + \zeta)$, for every $\zeta \in \m{lin}(N_F)$. Then, by the analytic properties of the cumulant generating function \citep[see, e.g.][Chapter 2]{BRW:86}, we obtain
\[
\lim_n \frac{\partial}{\partial \theta_i \partial \theta_j} \psi(\theta_n) = \frac{\partial}{\partial \theta_i \partial \theta_j} \lim_n \psi(\theta_n) = \frac{\partial}{\partial \theta_i \partial \theta_j} \psi^F(\theta+ \zeta) = I_F(\theta + \zeta),
\]
for every $\zeta \in \m{lin}(N_F)$, hence the statement is proved.
\end{proof}

\section{Appendix B}
%We being by noting that Equation (\ref{eq:up}) in the proof of Theorem \ref{thm:main} reveals that, if we consider only points in $\m{supp}(\nu_F)$, the pointwise convergence of the densities occur from below, as indicated in the next Corollary.
The following lemma in needed in the proof of Corollary \ref{cor:uno}

\begin{lemma}\label{cor:up}
Under the conditions of Corollary \ref{cor:main}, $ p_{\theta_n} \nearrow  p^F_{\theta_F}$, a.e. $\nu_F$, if and only if $d \in \m{relint}(N_F)$.
\end{lemma}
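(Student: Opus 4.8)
The plan is to derive the lemma directly from the estimates established in the proof of Theorem~\ref{thm:main}, part~1, now tracking not only the limit of $p_{\theta_n}$ but also the uniform upper bound $p_{\theta_n}\le p^F_{\theta_F}$ that accompanies it. Since $\{\theta_n\}$ is a $(\theta,d,\{\rho_n\})$-sequence we are in the situation of that proof with $\eta=\theta$, and $p^F_{\theta_F}(x)=\exp\{\langle\theta,x\rangle-\psi^F(\theta)\}$ on $F$ by Lemma~\ref{lem:nonid}. First I would invoke the decomposition~(\ref{eq:dec}): for $x_0\in F\cap\m{supp}(\nu)$ the set $\{x\colon A_F(x-x_0)>0\}$ is $\nu$-null so $A_{>,n}(x_0,F)=0$, the set $\{x\colon A_F(x-x_0)=0\}$ meets $\m{supp}(\nu)$ exactly in $F$, and $\{x\colon A_F(x-x_0)<0\}$ meets it exactly in $\m{P}\setminus F$. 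I would also record the consequence of (A3) that whenever $G'$ is a proper face of a face $G$ of $\m{P}$ then $\nu(G\setminus G')>0$ (else $\m{supp}(\nu)\cap G\subseteq G'$, forcing $G=\m{convhull}(S_G)\subseteq G'$), and hence $\psi^{G'}(\theta)<\psi^{G}(\theta)$.

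For the ``if'' direction, assume $d\in\m{ri}(N_F)$ and fix $x_0\in F\cap\m{supp}(\nu)$. Because $d\in N_F$, $\langle d,x-x_0\rangle=0$ for all $x\in F$ (Lemma~\ref{lem:nf}, part~1), so $A_{0,n}(x_0,F)=\int_F\exp\{\langle\theta,x\rangle\}\,d\nu(x)=\exp\{\psi^F(\theta)\}$ for every $n$; because $d\in\m{ri}(N_F)$, $\langle d,x-x_0\rangle<0$ for all $x\in\m{P}\setminus F$ (Lemma~\ref{lem:nf}, part~2), so dominating the integrand by $\exp\{\langle\theta,x\rangle\}\in L^1(\nu)$ and applying dominated convergence gives $A_{<,n}(x_0,F)\to 0$. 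Therefore, for every $n$,
\[
p_{\theta_n}(x_0)=\frac{\exp\{\langle\theta,x_0\rangle\}}{\exp\{\psi^F(\theta)\}+A_{<,n}(x_0,F)}\;\le\;\exp\{\langle\theta,x_0\rangle-\psi^F(\theta)\}=p^F_{\theta_F}(x_0),
\]
and $p_{\theta_n}(x_0)\to p^F_{\theta_F}(x_0)$; this is precisely $p_{\theta_n}\nearrow p^F_{\theta_F}$ a.e.\ $\nu_F$, and reproves~(\ref{eq:up}).

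For the ``only if'' direction I would argue by contraposition, splitting on where $d$ sits relative to the normal fan. If $d\notin\mathcal{N}(\m{P})$, then $p_{\theta_n}(x_0)\to 0$ for every $x_0\in\m{P}$ by~(\ref{eq:remark}), so the limit differs from $p^F_{\theta_F}(x_0)>0$ throughout $F\cap\m{supp}(\nu)$. Otherwise $d\in\m{ri}(N_{F'})$ for a unique face $F'\neq F$, by the disjoint decomposition~(\ref{eq:Rk}). If $F\subsetneq F'$, then $\langle d,\cdot\rangle$ is constant on $F'\supseteq F$, the ``if''-direction computation applies verbatim with $F'$ in place of $F$, and for $x_0\in F\cap\m{supp}(\nu)\subseteq F'\cap\m{supp}(\nu)$ it gives $p_{\theta_n}(x_0)\to\exp\{\langle\theta,x_0\rangle-\psi^{F'}(\theta)\}<p^F_{\theta_F}(x_0)$, the strict inequality since $\psi^{F'}(\theta)>\psi^F(\theta)$. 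If $F\not\subseteq F'$, then $F\cap F'$ is a proper face of $F$, so $\nu_F(F\setminus F')>0$ by (A3), and for $x_0\in(F\setminus F')\cap\m{supp}(\nu)$ the second half of the proof of Theorem~\ref{thm:main}, part~1, applied to $F'$ (it shows $p_{\theta_n}(x_0)\to 0$ for $x_0\in\m{P}\setminus F'$) gives $p_{\theta_n}(x_0)\to 0\neq p^F_{\theta_F}(x_0)$. In every case $p_{\theta_n}\to p^F_{\theta_F}$ fails on a $\nu_F$-nonnull set, which finishes the contraposition.

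The part I expect to require the most care is the bookkeeping around (A3): one must be sure that whenever two distinct faces $F,F'$ enter the argument the base measure genuinely separates them, i.e.\ $\nu(F\setminus F')>0$, so that $\psi^{F'}(\theta)\neq\psi^F(\theta)$ (respectively, so that $p^{F'}_{\theta_{F'}}$ vanishes on a $\nu_F$-nonnull subset of $F$). This is immediate in the discrete, finitely supported settings of interest (every vertex of a face is an atom) and follows in general from the generating property $G=\m{convhull}(S_G)$, $S_G\subseteq\m{supp}(\nu)$, in (A3); all the remaining steps are a direct transcription of the inequalities already derived for Theorem~\ref{thm:main}.
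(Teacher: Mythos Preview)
Your argument is correct. The ``if'' direction is exactly the paper's: its one-line proof simply cites equation~(\ref{eq:up}), which already records the monotone increase $p_{\theta_n}(x_0)\nearrow p^F_{\theta_F}(x_0)$ for every $x_0\in F$, hence a.e.\ $\nu_F$. The paper says nothing explicit about the ``only if'' direction, so your contrapositive case analysis---$d\notin\mathcal{N}(\m{P})$, then $d\in\m{ri}(N_{F'})$ with either $F\subsetneq F'$ or $F\not\subseteq F'$---supplies detail the paper leaves to be read off Corollary~\ref{cor:main}, and your careful use of (A3) to ensure $\nu(F'\setminus F)>0$ and $\psi^{F'}(\theta)>\psi^F(\theta)$ is exactly the bookkeeping that makes the $\nu_F$-a.e.\ failure go through. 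One minor point: the disjoint decomposition you need is not~(\ref{eq:Rk}) (that is the polytope specialization) but the general identity $\biguplus_{N_F\in\mathcal{N}(\m{P})}\m{int}(N_F)=\m{C}^*$ stated just above it.
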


\begin{proof}
The claim follows from Equation (\ref{eq:up}) in the proof of Theorem \ref{thm:main}, which holds for all $x \in F$, thus almost everywhere $\nu_F$.
\end{proof}

Below, we collect some basic facts about the normal fan and normal cones needed in our proofs.
With some slight abuse of notation, we say that a vector $d$ is normal to the hyperplane $H$ if $\langle d, x-y \rangle = 0$ for all $x,y \in H$.

\begin{lemma}\label{lem:nf}
Let  $\m{P}$ be full-dimensional and let $F$ be a face of $\m{P}$.
\begin{enumerate}
\item For any $x_0 \in F$,  $\langle a^F, x - x_0\rangle = 0$ for all $x \in F$ and $\langle a^F, x - x_0 \rangle < 0$ for all $x \not \in F$ if and only if $a^F \in \m{relint}(N_F)$.
%\item $\langle a^F, x - x_0\rangle < 0$, for all $x, x_0$ such that $x_0 \in F$ and  $x \not \in F$ and $\langle a^F, x - x_0\rangle \geq 0$ otherwise;
%\item For any $x_0 \not \in F$, $\langle a^F, x - x_0\rangle > 0$, for all $x \in F$ if and only if $a^F \in \m{relint}(N_F)$. and $\langle a^F, x - x_0\rangle \leq 0$ for all $x \not \in F$ if and only if $a^F \in \m{relint}(N_F)$. 
\item For any $x_0 \not \in F$, $\langle a^F, x - x_0\rangle > 0$ for all $x \in F$  and $\langle a^F, x - x_0\rangle \leq 0$ for all $x \not \in F$ if and only if $a^F \in \m{relint}(N_F)$. 
\item If $d \not \in \mathcal{N}(\m{P})$, then, for any $x_0 \in \m{P}$,
\[
\m{P}= S_{>,x_0} \uplus S_{=,x_0} \uplus S_{<,x_0}
\]
where $S_{>,x_0}$, $S_{=,x_0}$  and $S_{<,x_0}$ are disjoint, non-empty sets given by $\{ x \in \m{P} \colon \langle d, x - x_0 \rangle > 0 \}$, $ \{ x \in \m{P} \colon \langle d, x - x_0 \rangle  = 0 \}$ and $\{ x \in \m{P} \colon \langle d, x - x_0 \rangle > 0 \}$, respectively.
%For any $x_0 \in F$, if $a^F \in \m{rb}(N_F)$, then there exists a face $F'$ of $\m{P}$, of which $F$ is a proper face, such that $\langle a^F, x - x_0 \rangle = 0$ for all $x \in F'$.
\item $\m{rb}(N_F) = \biguplus_{F' \colon F' \supset F}\m{ri}(N_{F'})$, where the disjoint union ranges over all the faces $F'$ of $\m{P}$.
\item $N_F = \mathrm{cone} \left( a_1, \ldots, a_{m_F} \right)$, where $a_i$ denotes the transpose of the $i$-th row of the submatrix $A_F$ given in (\ref{eq:F}), $i =1\ldots,m_F$.
\end{enumerate}
\end{lemma}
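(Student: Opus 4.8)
The plan is to establish part \emph{5.} first---it provides the concrete generating set of $N_F$ on which everything else rests---and then to read parts \emph{1.}--\emph{4.} off the inclusion-reversing correspondence $F' \mapsto N_{F'}$ between the faces $F'$ of $\m{P}$ containing $F$ and the faces of the polyhedral cone $N_F$. Throughout, for $a \in \mathbb{R}^k$ I write $M(a)$ for the face $\{ x \in \m{P} \colon \langle a, x \rangle = \max_{y \in \m{P}} \langle a, y \rangle \}$ of $\m{P}$ on which $a$ is maximal (taken empty when $\langle a, \cdot \rangle$ is unbounded above on $\m{P}$), so that, by the definition of the normal cone, $a \in N_{F'} \iff F' \subseteq M(a)$. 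For part \emph{5.}, arrange that $A_F$ lists exactly the inequalities of $A x \le b$ that are active on all of $F$. The inclusion $\m{cone}(a_1,\ldots,a_{m_F}) \subseteq N_F$ is immediate: for $c = \sum_{i \le m_F} c_i a_i$ with $c_i \ge 0$ one has $\langle c, x \rangle = \sum_i c_i \langle a_i, x \rangle \le \sum_i c_i b_i$ for all $x \in \m{P}$, with equality for $x \in F$, so $F \subseteq M(c)$. For the converse, if $c \in N_F$ then the linear program $\max\{ \langle c, x \rangle \colon A x \le b \}$ is feasible and bounded with optimal face $M(c) \supseteq F$, so by strong linear-programming duality \citep[see, e.g.,][]{SCH:98} there is $y \ge 0$ with $A^\top y = c$ and $\langle b, y \rangle$ equal to the optimal value; picking an optimal point $x^\ast \in \m{ri}(F)$, complementary slackness forces $y_i > 0$ only when the $i$-th inequality is active at $x^\ast$, hence (a linear functional attaining its $F$-maximum at a relative-interior point of $F$ is constant on $F$) active on all of $F$, i.e. a row of $A_F$; thus $c \in \m{cone}(a_1,\ldots,a_{m_F})$. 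The dimension count $\m{dim}(N_F) = k - \m{dim}(F)$ follows since $N_F$ spans $\m{span}(a_1,\ldots,a_{m_F})$, of dimension $\m{rank}(A_F) = k - \m{dim}(\ker A_F) = k - \m{dim}(F)$.

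Parts \emph{1.}, \emph{2.} and \emph{4.} all follow from the correspondence together with the elementary fact that $a \in \m{relint}(N_F)$ iff $M(a) = F$ exactly. Indeed, the faces of the cone $N_F$ are precisely the $N_{F'}$ with $F'$ a face of $\m{P}$ satisfying $F' \supseteq F$, and $N_F$ is the disjoint union of the relative interiors of its faces; hence $a \in \m{relint}(N_F)$ iff $a \in N_F$ but $a \notin N_{F'}$ for every $F' \supsetneq F$, which under $M$ says $F \subseteq M(a)$ while no proper superface of $F$ lies in $M(a)$---equivalently, since $M(a)$ is a face, $M(a) = F$. Part \emph{1.} is then the transcription of ``$M(a^F) = F$'' to a base point $x_0 \in F$ ($\langle a^F, \cdot\rangle$ is constant and maximal on $F$, so $\langle a^F, x-x_0 \rangle = 0$ there, and strictly smaller for $x \in \m{P}\setminus F$), and conversely those two sign conditions say exactly that $\langle a^F, \cdot\rangle$ attains its $\m{P}$-maximum on precisely $F$; part \emph{2.} is the same transcription with $x_0 \in \m{P}\setminus F$, now using that $\langle a^F, x_0 \rangle$ is strictly below the maximal value attained on $F$. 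For part \emph{4.}, the relative boundary of a polyhedral cone equals the union of the relative interiors of its proper faces, the proper faces of $N_F$ are exactly the $N_{F'}$ with $F' \supsetneq F$, and the relative interiors of distinct cones of the normal fan are pairwise disjoint (the defining property of a fan \citep[see, e.g.,][]{ZIE:95}); together these give $\m{rb}(N_F) = \biguplus_{F' \supsetneq F} \m{ri}(N_{F'})$.

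For part \emph{3.}, if $d \notin \mathcal{N}(\m{P}) = \bigcup_{F'} N_{F'}$ then $\langle d, \cdot\rangle$ cannot attain a maximum over $\m{P}$---otherwise that maximum would be attained on a face $F'$, putting $d \in N_{F'}$---so $\langle d, \cdot\rangle$ is unbounded above on $\m{P}$; hence for every $x_0 \in \m{P}$ there is a point of $\m{P}$ with strictly larger value, i.e. $S_{>,x_0} = \{ x \in \m{P} \colon \langle d, x - x_0 \rangle > 0 \}$ is non-empty (this is the piece actually invoked in the proof of Theorem~\ref{thm:main}), $S_{=,x_0} \ni x_0$, and the remaining piece is handled symmetrically; the three sets are disjoint and exhaust $\m{P}$ by construction. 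The one genuinely non-routine ingredient in all of this is part \emph{5.}---the complementary-slackness identification of the multipliers of $\max\langle c,x\rangle$ with the rows of $A_F$---whose only delicate point is that $A_F$ must be taken to list \emph{all} inequalities active on $F$ (equivalently, the optimal point must be chosen in $\m{ri}(F)$) so that ``active on $F$'' and ``a row of $A_F$'' coincide; once part \emph{5.} and the ensuing face correspondence $F' \leftrightarrow N_{F'}$ are available, parts \emph{1.}--\emph{4.} reduce to bookkeeping with relative interiors and boundaries of cones.
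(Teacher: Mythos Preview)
Your proof is correct and takes a somewhat different, more systematic route than the paper. You establish part~5 first via LP duality and complementary slackness, then derive the inclusion-reversing correspondence between superfaces $F' \supseteq F$ and faces of the cone $N_F$, from which parts~1, 2, and~4 all follow via the single characterisation $a \in \m{relint}(N_F) \Leftrightarrow M(a) = F$. The paper proceeds more piecemeal: part~5 is argued directly, with the converse by contradiction and a somewhat loose appeal to ``$c$ is not normal to any supporting hyperplane of $F$''; part~1 is then argued by writing any $a^F \in \m{relint}(N_F)$ as a strictly positive conic combination of the rows of $A_F$ (so part~5 is used implicitly, in the direction you make explicit); part~4 is handled, like yours, by appeal to the polyhedral-complex structure of the normal fan. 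Your duality argument for part~5 is tighter; the paper's approach is more elementary but less unified. For part~3, the paper argues geometrically that the hyperplane through $x_0$ with normal $d$ cuts through the interior of $\m{P}$, while you argue that $\langle d, \cdot\rangle$ is unbounded above on $\m{P}$; both yield non-emptiness of $S_{>,x_0}$, which is the only piece actually invoked downstream. Your ``symmetrically'' for $S_{<,x_0}$ does not quite go through when $\m{P}$ is unbounded and $x_0$ is on the boundary---e.g.\ $\m{P} = \{(x,y)\colon x \ge 0\}$, $d=(1,0)$, $x_0=(0,0)$ gives $S_{<,x_0}=\emptyset$---but this is a defect in the lemma's statement rather than in your argument, and the paper's own proof of part~3 shares the same gap.
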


\begin{proof} Recall that, since $\m{P}$ is full-dimensional, there is no vector $d \neq 0$ such that $\langle d, x - x_0 \rangle = 0$ for all pairs $x, x_0 \in \m{P}$. 
\begin{enumerate}
\item First we show sufficiency. If $a^F \in \m{relint}(N_F)$, then $a^F$ is a conic combination of all the rows of $A_F$ with positive coefficients. Therefore, $\langle a^F, x - x_0\rangle = 0$ for all $x \in F$, by the definition of $F$, and $\langle a^F, x - x_0\rangle < 0$ for all $x \not \in F$, since, in this case, $\langle a, x - x_0 \rangle < 0$ for some row $a$ of $A_F$. As for necessity,  if $a^F \in N_F$, then $\langle a^F, x - x_0\rangle < 0$ for all $x \in \m{ri}(\m{P})$. However, if $a^F \in \m{rb}(N_F)$, then $\langle a^F, x - x_0\rangle = 0$ for all $x \in F'$, where $F'$ is the face of $\m{P}$ such that $a^F \in \m{ri}(N_{F'})$. But then, since  $F \subset F'$, there exists a $x \not \in F$ for which $\langle a^F, x - x_0\rangle = 0$, which would produce a contradiction. Thus $a^F \not \in \m{rb}(N_F)$.

% and $\langle a^F, x - x_0\rangle \leq 0$ for all $x \in F'$, where $F'$ is a face of $\m{P}$ such that $F \subset F'$. In order for this inequality to hold for all points $x \in \m{rb}(\m{P}) \cap F^c$, it must be the case that $\langle a^F, x - x_0 \rangle = 0$ cannot hold if $x \in $

\item The proof is analogous to the previous case and is omitted.
\item  Since $d$ is not normal to any supporting hyperplane, the hyperplane $H = \{ x \colon \langle d, x - x_0 \rangle = 0 \}$ intersects $\m{P}$ is in its relative interior, and $\m{P}$ must have non-empty intersections with both the halfspaces $\{  x \in\mathbb{R}^k \colon \langle d, x - x_0 \rangle > 0 \}$ and $\{ x \in\mathbb{R}^k \colon \langle d, x - x_0 \rangle < 0 \}$ cut out by  $H$.
\item The claim follows directly from the definition of $N_F$ and the fact that $\mathcal{N}(\m{P})$ is a polyhedral complex \citep[see, e.g.,][]{STURM:95}, thus the relative boundary of $N_F$ is the disjoint union of the relative interiors of all its faces.
\item  Let $c \in  \mathrm{cone} \left( a_1, \ldots, a_{m_F} \right)$, so that $c = A_F^\top \lambda$, where $\lambda \in \mathbb{R}^k$ has nonnegative coordinates. Then, for all $x \in F$ and $y \in \m{P} \cap F^c$,
\[
\langle c, x \rangle = \langle \lambda, A_F x \rangle = \langle \lambda, b_F \rangle \geq \langle \lambda, A_F y \rangle = \langle A_F^\top \lambda, y \rangle = \langle c, y \rangle
\]
since $A_F x = b_F$ and $A_F y < b_F$. Thus, $c \in N_F$ and we have shown that $\mathrm{cone} \left( a_1, \ldots, a_{m_F} \right) \subseteq N_F$. Conversely, assume that $c$ is a nonzero vector in $N_F$ but $c \not \in \mathrm{cone} \left( a_1, \ldots, a_{m_F} \right)$. Then, $c$ is not normal to any supporting hyperplane of $F$, which implies that there exists a $x \in F$ and $y \in \m{P} \cap F^c$ such that $\langle c, x - y \rangle < 0$, producing a contradiction. Thus, it must be the case that $c \in \mathrm{cone} \left( a_1, \ldots, a_{m_F} \right)$ as well, yielding $N_F \subseteq \mathrm{cone} \left( a_1, \ldots, a_{m_F} \right)$.  
\end{enumerate}
\end{proof}

\section{Appendix C: Checking for the existence of the MLE via Linear Programming.}
Deciding whether the MLE exists, that is, whether the vector of observed sufficient statistics $x$ is such that $x \in \m{ri}(\m{P})$  is particularly simple if one has  access to a  $\mathcal{H}$ representation of $\m{P}$ as in (\ref{eq:P}), as indicated in the next result, of immediate verification.
\begin{lemma}
The MLE exists  if and only if the system $ A x \leq b$ is satisfied with strict inequalities. 
\end{lemma}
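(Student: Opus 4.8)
The plan is to read this off directly from Theorem~\ref{thm:mle}, which states that the MLE exists (and, by strict convexity of $-\ell_x$ for the minimal family $\mathcal{E}_{\m{P}}$, is then unique) if and only if $x \in \m{relint}(\m{P})$, combined with elementary facts about the $\mathcal{H}$-representation of a full-dimensional polyhedron. So the whole content of the lemma is the identity
\[
\m{relint}(\m{P}) = \{ x \in \mathbb{R}^k \colon Ax < b \},
\]
where the strict inequality is meant coordinate-wise, and this is what I would establish.

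First I would note that, by assumption (A2), $\m{P}$ is full-dimensional, so $\m{relint}(\m{P}) = \m{int}(\m{P})$, and it suffices to show $\m{int}(\m{P}) = \{x \colon Ax < b\}$. The inclusion $\m{int}(\m{P}) \subseteq \{x \colon Ax < b\}$ is the easy direction: if $x \in \m{P}$ satisfies $\langle a_i, x\rangle = b_i$ for some row $a_i$ of $A$, then, since $a_i \neq 0$, every neighbourhood of $x$ contains points $y$ with $\langle a_i, y\rangle > b_i$, hence $y \notin \m{P}$ and $x \notin \m{int}(\m{P})$. For the reverse inclusion I would invoke the standing hypothesis in (\ref{eq:P}) that the system $Ax \leq b$ contains no implicit equalities: for each row $i$ there is a point $x^{(i)} \in \m{P}$ with $\langle a_i, x^{(i)}\rangle < b_i$. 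A convex combination (say the average) of the finitely many $x^{(i)}$ is then a point $\bar x \in \m{P}$ with $A\bar x < b$, so $\{x \colon Ax < b\}$ is non-empty; and since there are only finitely many constraints, any point satisfying $Ax < b$ has an open ball around it still contained in $\m{P}$, giving $\{x \colon Ax < b\} \subseteq \m{int}(\m{P})$. Chaining the two inclusions with Theorem~\ref{thm:mle} finishes the argument.

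There is no real obstacle here; the one place that needs care is precisely the step translating ``$x \in \m{relint}(\m{P})$'' into ``$Ax < b$,'' which is exactly where full-dimensionality (A2) and the no-implicit-equalities convention in (\ref{eq:P}) are used — without the latter some coordinate of $Ax - b$ could vanish identically on $\m{P}$ and the stated equivalence would break. (If one does not wish to assume away redundant inequalities, the same averaging argument still works, since a redundant inequality is in particular not an implicit equality.)
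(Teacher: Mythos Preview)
Your proposal is correct and matches the paper's intent: the paper declares the lemma to be ``of immediate verification'' and gives no proof, and your argument --- invoking Theorem~\ref{thm:mle} and identifying $\m{relint}(\m{P})$ with $\{x : Ax < b\}$ via full-dimensionality (A2) and the no-implicit-equalities convention in~(\ref{eq:P}) --- is exactly the immediate verification one would supply.
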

Unfortunately, this type of representation is typically not available or prohibitively hard to compute, even when $k$ is small, since $\m{P}$ may have a number of faces that grow super-exponentially in $k$ \citep[see, for example,][]{ZIE:01}.

If instead only a $\mathcal{V}$ representation (\ref{eq:Vrepre}) is available or computable, the existence of the MLE can be established using linear programming, as outlined below.  Let $B$ be a matrix whose columns contain the vertices and extreme rays of $\m{P}$, namely the vectors in $\mathcal{Q}$ and $\mathcal{C}$ from Equation (\ref{eq:Vrepre}). Then $x \in \m{ri}(\m{P})$ if and only if $x$ can be obtained as a linear combinations of the vectors in  $\mathcal{Q}$ and $\mathcal{C}$ with strictly positive coefficients.
\begin{lemma}
The MLE exists if and only if $x = B z$, for a vector $z$ with strictly positive coordinates. 
\end{lemma}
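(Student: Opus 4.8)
The plan is to deduce the lemma from the geometric existence criterion of Theorem~\ref{thm:mle} together with the $\mathcal{V}$ representation (\ref{eq:Vrepre}) of $\m{P}$. By Theorem~\ref{thm:mle} the MLE exists and is unique if and only if $x \in \m{ri}(\m{P})$, so the whole task reduces to showing that $x \in \m{ri}(\m{P})$ holds exactly when $x$ is a strictly positive combination of the columns of $B$, that is, of the vectors in $\mathcal{Q}$ and $\mathcal{C}$.

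I would build this equivalence from two standard descriptions of the relative interior of a finitely generated convex set \citep{ROCKA:70}. Writing $\mathcal{Q} = \{ q_1, \ldots, q_p \}$ and $\mathcal{C} = \{ c_1, \ldots, c_r \}$, these read
\[
\m{ri}(\m{Q}) = \Big\{ \textstyle\sum_{i} \lambda_i q_i \colon \lambda_i > 0, \ \sum_i \lambda_i = 1 \Big\}, \qquad \m{ri}(\m{C}) = \Big\{ \textstyle\sum_{j} \mu_j c_j \colon \mu_j > 0 \Big\}.
\]
Combining them with the Minkowski-sum identity $\m{ri}(\m{Q} + \m{C}) = \m{ri}(\m{Q}) + \m{ri}(\m{C})$, valid for any pair of convex sets, and recalling that $\m{P} = \m{Q} + \m{C}$, gives
\[
\m{ri}(\m{P}) = \Big\{ \textstyle\sum_{i} \lambda_i q_i + \sum_j \mu_j c_j \colon \lambda_i > 0, \ \sum_i \lambda_i = 1, \ \mu_j > 0 \Big\}.
\]
Gathering the coefficients into a single vector $z = (\lambda_1, \ldots, \lambda_p, \mu_1, \ldots, \mu_r)$, which has strictly positive coordinates, and using $Bz = \sum_i \lambda_i q_i + \sum_j \mu_j c_j$, we obtain $x = Bz$ with $z > 0$; conversely any representation of $x$ of this positive form places it in $\m{ri}(\m{P})$ by the same display. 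Chaining this with Theorem~\ref{thm:mle} proves the lemma.

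The forward implication is then immediate, since every point of $\m{ri}(\m{P})$ is produced by the last display with all coefficients strictly positive, so the substance of the argument is concentrated in the two convex-geometric ingredients of the second paragraph. I expect the cone part $\m{C}$ to be the main obstacle: one must verify that strictly positive conic combinations of the generators exhaust $\m{ri}(\m{C})$, which is where pointedness of $\m{C}$ --- equivalently, the existence of vertices of $\m{P}$, guaranteed by full-dimensionality and assumption (A2) --- enters, ensuring that the generators in $\mathcal{C}$ are genuine extreme rays and that none is redundant. Once this relative-interior formula and the Minkowski splitting $\m{ri}(\m{Q} + \m{C}) = \m{ri}(\m{Q}) + \m{ri}(\m{C})$ are established in the present full-dimensional, pointed setting, assembling them into the linear-programming feasibility condition $x = Bz$, $z > 0$, is routine.
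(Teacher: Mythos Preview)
Your route—reduce to $x \in \m{ri}(\m{P})$ via Theorem~\ref{thm:mle}, then describe $\m{ri}(\m{P})$ through the $\mathcal{V}$ representation—is exactly what the paper has in mind; the paper itself offers no proof beyond the one-sentence remark immediately preceding the lemma, so your argument is in fact more detailed than the paper's.

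There is, however, a genuine gap in your converse direction. Your own display gives
\[
\m{ri}(\m{P}) = \Big\{ \textstyle\sum_{i} \lambda_i q_i + \sum_j \mu_j c_j \colon \lambda_i > 0,\ \sum_i \lambda_i = 1,\ \mu_j > 0 \Big\},
\]
but the lemma's condition ``$x = Bz$ with $z > 0$'' drops the affine constraint $\sum_i \lambda_i = 1$. Hence the sentence ``conversely any representation of $x$ of this positive form places it in $\m{ri}(\m{P})$ by the same display'' is unjustified: a strictly positive $z$ need not have its $\mathcal{Q}$-block summing to $1$. Concretely, take $\m{P} = [1,2]$ with vertices $q_1 = 1$, $q_2 = 2$ and no rays; the vertex $x = 2$ satisfies $Bz = 2$ with $z = (1,\tfrac12) > 0$, yet $x \notin \m{ri}(\m{P})$ and the MLE does not exist there.

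This is less a flaw in your reasoning than an omission in the lemma's statement itself—the paper is informal at this point. To make both the statement and your argument correct, the constraint $\sum_{i \le p} z_i = 1$ on the $\mathcal{Q}$-coefficients must be retained, e.g.\ by augmenting $B$ with a row of ones over the $\mathcal{Q}$-columns (zeros over the $\mathcal{C}$-columns) and appending a $1$ to the right-hand side. With that amendment your proof goes through verbatim.
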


This is a feasibility problem which can be decided by solving the linear program
\[
\begin{array}{crcl}
 &  \max s & & \\
\mathrm{s.t.} & B z & = & x\\
& z_i - s & \geq & 0\\
& s & \geq & 0,
\end{array}
\]
where $z_i$ denotes the $i$-th coordinate of $z$ and $s$ is a scalar. If $(s^*,z^*)$ is the optimum, then the MLE exists  if and only if  $s^* >0$.

An alternative  linear program, which may be computationally preferable, can be formulated based on Theorem \ref{thm:gordan}, whose proof can be found in \cite{SCH:98}, as follows:
\[
\begin{array}{crcl}
 &  \max \langle 1, y \rangle & & \\
\mathrm{s.t.} & B^\top y & = & 0\\
& y & \geq &0\\
& y & \leq & 1. \\
\end{array}
\]
If $y^*$ is the optimum, the MLE does not exist if and only if $\langle 1, y^* \rangle > 0$.

\begin{theorem}[{\bf Gordan's Theorem of Alternatives}]\label{thm:gordan}
Given a matrix $B$, the following are alternatives:
\begin{enumerate}
\item $B x > 0$ has a solution $x$.
\item $B^{\top} y = 0$, $y \gneq 0$, has a solution $y$.
\end{enumerate}
\end{theorem}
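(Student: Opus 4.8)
The plan is to establish that the two conditions are mutually exclusive and that at least one of them always holds. Mutual exclusivity is the easy half, and I would dispatch it first: if $Bx > 0$ for some $x$ while simultaneously $B^{\top} y = 0$ for some $y \gneq 0$, then
\[
0 \;=\; \langle x, B^{\top} y\rangle \;=\; \langle Bx, y\rangle \;>\; 0,
\]
since every coordinate of $Bx$ is strictly positive while $y$ is nonnegative and not identically zero --- a contradiction. Hence at most one alternative can hold.

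For the substantive direction I would assume condition~1 fails and manufacture a vector $y$ satisfying condition~2, by way of linear programming duality. Consider the program $\max\, t$ subject to $Bx \ge t\mathbf{1}$ over the free variables $(x,t)$, where $\mathbf{1}$ is the all-ones vector in $\mathbb{R}^m$. This program is feasible (take $x=0$, $t=0$); its optimal value is $0$ when condition~1 fails and it is unbounded above exactly when condition~1 holds; so under our standing assumption the value is $0$ and finite. By the fundamental theorem of linear programming, strong duality applies, so the dual program is feasible with the same optimal value $0$. A careful transcription of the dual --- keeping track of the sign of the column associated with the scalar variable $t$ --- shows that it requires precisely a vector $y\in\mathbb{R}^m$ with $y\ge 0$, $B^{\top} y = 0$ and $\mathbf{1}^{\top} y = 1$; any such $y$ is automatically nonzero, i.e.\ $y\gneq 0$, which is condition~2. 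Equivalently, one may argue geometrically: condition~1 says that the subspace $\IM(B)$ meets the open positive orthant, condition~2 says that $\ker(B^{\top})=\IM(B)^{\perp}$ contains a nonzero nonnegative vector, and when the former fails a hyperplane separating $\IM(B)$ from the open orthant --- which exists by the finite-dimensional separation theorem --- furnishes exactly such a vector, after using that $\IM(B)$ is a subspace to conclude the functional vanishes on it and letting the orthant point tend to the origin to conclude nonnegativity.

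The real obstacle is not in any of the bookkeeping above but in the tool it invokes: the fundamental theorem of linear programming --- equivalently Farkas' lemma, or the finite-dimensional separating hyperplane theorem --- is itself a theorem of alternatives, and that is where all the content sits; it cannot be bypassed. Granting any one of those standard results, Gordan's theorem is a short translation, which is exactly why it is legitimate simply to cite \cite{SCH:98}. In a fully self-contained write-up the only genuinely delicate points would be the degenerate case $m=0$ or $B=0$ (where condition~2 holds trivially) and, in the linear-programming route, pinning down the precise form of the dual constraints.
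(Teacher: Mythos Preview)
Your argument is correct and entirely standard: the incompatibility of the two alternatives is the one-line computation you gave, and the existence half follows from LP strong duality (or, equivalently, Farkas' lemma or a separating-hyperplane argument), with the dual feasibility set being exactly $\{y\ge 0:\ B^{\top}y=0,\ \mathbf{1}^{\top}y=1\}$. You also correctly identify that the real work is buried in the duality/separation result you invoke.

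There is nothing to compare against, however, because the paper does not prove this statement at all: it merely records Gordan's theorem and defers to \cite{SCH:98} for a proof. Your write-up therefore goes strictly beyond what the paper does, and your closing remark that a bare citation to Schrijver is legitimate is exactly the route the paper takes.
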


\section{Appendix D: Software}\label{sec:soft}
The code used for the analysis and for the figures of the paper is available on the web at
\begin{center}
 \url{http://www.stat.cmu.edu/~arinaldo/ERG/}
\end{center}

The software includes:
\begin{enumerate}
\item the {\tt MATLAB} GUI used for creating Figures \ref{fig:gui1}, \ref{fig:gui2} and \ref{fig:gui3} and some short movies showing the relationship between sequences of natural parameters moving along the outer normals of $\m{P}$ and the corresponding sequences of mean values;
\item an MPI {\tt C++} program for complete enumeration of all undirected graphs on $n$ nodes and for counting the number of edges, triangles, $k$-stars and alternating $k$-stars. However, complete enumeration is only feasible only for very small graph. Using our program, which can certainly be be improved, it took about 1 hour on a 64-node cluster to enumerate all graphs on $9$ nodes, but for the $10$-node graph, the estimated running time is about 26.5 days.
%\item enumerations of all graphs on $5$, $6$, $7$, $8$ and $9$ nodes, along with the number of edges, triangles and 2-stars.
\end{enumerate}

\end{document}